\documentclass{article}

% if you need to pass options to natbib, use, e.g.:
%     \PassOptionsToPackage{numbers, compress}{natbib}
% before loading neurips_2024

% ready for submission
%\usepackage{neurips_2024}
%\usepackage[preprint]{neurips_2024}

% to compile a preprint version, e.g., for submission to arXiv, add add the
% [preprint] option:
%     \usepackage[preprint]{neurips_2024}

% to compile a camera-ready version, add the [final] option, e.g.:
%     \usepackage[final]{neurips_2024}

% to avoid loading the natbib package, add option nonatbib:
\usepackage[nonatbib, preprint]{neurips_2024}

\usepackage[utf8]{inputenc} % allow utf-8 input
\usepackage[T1]{fontenc}    % use 8-bit T1 fonts
\usepackage{hyperref}       % hyperlinks
\pdfstringdefDisableCommands{%
  \def\mathbb#1{#1}%
  \def\mathcal#1{#1}%
  \def\mathrm#1{#1}%
  \def\mathbf#1{#1}%
}
\usepackage{url}            % simple URL typesetting
\usepackage{booktabs}       % professional-quality tables
\usepackage{amsfonts}       % blackboard math symbols
\usepackage{nicefrac}       % compact symbols for 1/2, etc.
\usepackage{microtype}      % microtypography
\usepackage{xcolor}         % colors
\usepackage{amsmath,amsthm,amssymb,mathtools}
\usepackage{adjustbox,makecell,booktabs}
\usepackage{float}
% Theorem environments
\newtheorem{definition}{Definition}[section]
\newtheorem{lemma}{Lemma}[section]
\newtheorem{theorem}{Theorem}[section]
\newtheorem{proposition}{Proposition}[section]
\newtheorem{corollary}{Corollary}[section]
\newtheorem{remark}{Remark}[section]

\title{Integral Signatures of Activation Functions: A 9-Dimensional Taxonomy and Stability Theory for Deep Learning}

% The \author macro works with any number of authors. There are two commands
% used to separate the names and addresses of multiple authors: \And and \AND.
%
% Using \And between authors leaves it to LaTeX to determine where to break the
% lines. Using \AND forces a line break at that point. So, if LaTeX puts 3 of 4
% authors names on the first line, and the last on the second line, try using
% \AND instead of \And before the third author name.

\author{%
  Ankur Mali \\
  Bellini College of AI, \\
  Cybersecurity and Computing \\
  University of South Florida \\
  Tampa, FL 33617 \\
  \texttt{ankurarjunmali@usf.edu} \\
  \And
  Lawrence Hall \\
  Bellini College of AI,\\
  Cybersecurity and Computing\\
  University of South Florida\\
  Tampa, FL 33620 \\
  \texttt{lohall@usf.edu} \\
  \AND
  Jake Williams\\
  College of Computing and Informatics\\
  Drexel University\\
  Philadelphia, PA 19104 \\
  \texttt{jw3477@drexel.edu} \\
  \And
  Gordon Richards  \\
  Departments of Physics\\
  Drexel University\\
  Philadelphia, PA 19104 \\
  \texttt{gtr25@drexel.edu} \\
  % examples of more authors
  % \And
  % Coauthor \\
  % Affiliation \\
  % Address \\
  % \texttt{email} \\
  % \AND
  % Coauthor \\
  % Affiliation \\
  % Address \\
  % \texttt{email} \\
  % \And
  % Coauthor \\
  % Affiliation \\
  % Address \\
  % \texttt{email} \\
  % \And
  % Coauthor \\
  % Affiliation \\
  % Address \\
  % \texttt{email} \\
}

\begin{document}

\maketitle

\begin{abstract}
In this work, we propose a principled, integral framework for classifying activation functions via a nine-dimensional signature
\[
\mathcal{S}_\sigma(\phi)=\big(m_1,g_1,g_2,m_2,\eta,\alpha_+,\alpha_-,\mathrm{TV}(\phi'),C(\phi)\big),
\]
which couples Gaussian propagation statistics $(m_1,g_1,g_2,m_2,\eta)$ with asymptotic geometry $(\alpha_+,\alpha_-)$ and regularity measures $(\mathrm{TV}(\phi'),C(\phi))$.  We establish well-posedness, prove affine reparameterization laws including bias handling, and provide a closure theorem under bounded slope variation via Helly's selection. On the dynamical side, we develop contraction-based Lyapunov theorems with explicit descent constants for scalar recursions $T(x)=\phi(ax+b)$, complemented by primitive-based Lyapunov functions using $F=\int \phi$. For wide neural networks, we express variance propagation as $q_{\ell+1}=\sigma_W^2\,m_2(\sqrt{q_\ell})+\sigma_b^2$ and characterize the stability region through signature components $(m_2',g_2)$. From a kernel perspective, we derive dimension-free mixed-Hessian bounds $\|\nabla_x\nabla_y K(x,y)\|_{\mathrm{op}} \leq C g_2(\|x\|)g_2(\|y\|)\|x\|\|y\|$ with universal constant $C \leq 4$, and establish a bounded variation corollary relating kernel smoothness to $\mathrm{TV}(\phi')$. We systematically classify eight standard activations (ReLU, leaky-ReLU, $\tanh$, sigmoid, Swish, GELU, Mish, TeLU), proving finite or infinite behavior for all signature components and revealing fundamental distinctions between saturating, linear-growth, and smooth activation families. Numerical evaluation via Gauss-Hermite quadrature validates our theoretical predictions, with Monte Carlo cross-checks confirming 6+ digit accuracy for the Gaussian expectation components $(m_1,g_1,g_2,m_2,\eta)$ across multiple input scales. The framework provides actionable design principles for activation selection, moving beyond empirical comparisons toward theoretically grounded choices that optimize stability, bias control, and kernel conditioning. This integral signature approach establishes a rigorous mathematical foundation for activation function analysis, enabling systematic design guided by provable dynamical properties rather than trial-and-error experimentation.
\end{abstract}

\section{Introduction}\label{sec:intro}

Activation functions sit at the heart of modern neural networks. They are not only the
mechanism that injects nonlinearity into deep models but also a key determinant of their
\emph{expressive power}, \emph{stability}, and \emph{learnability}
\cite{cybenko1989approximation,hornik1991approximation, stogin2024provably, omlin1996constructing, mitarchuk2025saturation, fernandez2025teluactivationfunctionfast}.
The long evolution of nonlinearities---from sigmoidal units in early perceptrons
\cite{rosenblatt1958perceptron}, through $\tanh$ and logistic units in recurrent models
\cite{elman1990finding}, to the rectified linear unit (ReLU) \cite{Fukushima1969VisualFE} and its many
variants such as leaky-ReLU \cite{maas2013rectifier}, Swish \cite{ramachandran2017searching}, GELU \cite{hendrycks2016gaussian}, Mish \cite{misra2019mish}, and more recently TeLU
\cite{fernandez2025teluactivationfunctionfast}---has
enabled successive breakthroughs in optimization, generalization, and robustness across
vision, language, and scientific domains. 
Yet, despite this proliferation, comparative analyses remain dominated by
\emph{heuristic, dataset-specific benchmarks} rather than a principled mathematical
framework.

\vspace{0.5em}
\noindent\textbf{From heuristics to analytic structure.}
Prior theoretical attempts have captured fragments of activation behavior.
Initialization schemes such as Xavier and He normalization
\cite{glorot2010understanding,he2015delving} quantify second moments to stabilize early
training. Mean-field and dynamical isometry analyses
\cite{poole2016exponential,lee2017deep,pennington2017resurrecting,hanin2018start}
study Gaussian statistics of $\phi$ and $\phi'$ to control signal propagation.  
Kernel-based perspectives, including the Neural Tangent Kernel (NTK) and Gaussian Process
limits \cite{jacot2018neural,lee2017deep}, connect activations to smoothness of induced
reproducing kernels.  
In parallel, approximation theory has revealed how the smoothness of the target function
and the activation interact to determine expressive power
\cite{ohn2019smooth,langer2020approximation,boulle2020rational}.  
Recent proposals such as the Smooth Maximum Unit (SMU) \cite{biswas2022smu} emphasize
that smooth activations can improve gradient stability while retaining ReLU-like
behavior.  
While powerful, these frameworks isolate a few statistics (variance, slope, Lipschitz
constants), leaving open the challenge of a \emph{unified taxonomy} that integrates
Gaussian propagation, asymptotics, and variational structure in a single analytic object.

\vspace{0.5em}
\noindent\textbf{Integral signatures.}
We propose such a unification via \emph{integral signatures}.
The central observation is that forward and backward dynamics in wide layers are governed
by Gaussian integrals of $\phi$ and its derivative, while tail slopes and curvature
govern drift and kernel regularity.  
Accordingly, for $Z\sim\mathcal{N}(0,\sigma^2)$ we define Gaussian moments
\[
m_k(\sigma) \;=\; \int_{\mathbb{R}} \phi(x)^k \,\frac{1}{\sqrt{2\pi}\sigma}
e^{-x^2/(2\sigma^2)}\,dx \;=\; \mathbb{E}[\phi(Z)^k],
\]
and pair them with asymptotic slopes, total variation of $\phi'$ \footnote{$\phi$ generally refers to any activation under consideration}, and a
tail-compensated curvature primitive. These quantities generalize the statistics used in
initialization, mean-field criticality, and NTK smoothness into a single finite
signature.

\vspace{0.5em}
\noindent\textbf{A nine-dimensional taxonomy.}
We introduce a nine-dimensional integral signature
\[
\mathcal{S}_\sigma(\phi)
=\big(m_1,\,g_1,\,g_2,\,m_2,\,\eta,\,\alpha_+,\,\alpha_-,\,\mathrm{TV}(\phi'),\,C(\phi)\big),
\]
where $(m_1,g_1,g_2,m_2,\eta)$ are Gaussian expectations of $\phi$ and $\phi'$;
$(\alpha_+,\alpha_-)$ encode directional asymptotes; $\mathrm{TV}(\phi')$ measures slope
variation; and $C(\phi)$ quantifies compensated curvature.  
Unlike prior approaches that emphasize one aspect (variance stability, isometry,
or kernel limits), $\mathcal{S}_\sigma(\phi)$ is:
\emph{affine-aware}, \emph{closed under limits}, and \emph{predictive} for
propagation stability, Lyapunov descent, and kernel regularity.  
It thus extends and unifies earlier “taxonomies” into a mathematically robust,
finite-dimensional framework.

\vspace{0.5em}
\noindent\textbf{Contributions.}
This work develops the first comprehensive integral taxonomy of activation functions.
Our main contributions are:
\begin{enumerate}
    \item \textbf{Integral taxonomy:} Formalization of $\mathcal{S}_\sigma(\phi)$ with
    proofs of well-posedness under mild analytic conditions.
    \item \textbf{Affine reparameterization:} Exact transformation laws under scaling and
    bias shifts $\tilde\phi(x)=c\,\phi(ax+b)+d$.
    \item \textbf{Closure under limits:} Compactness and convergence of signatures under
    bounded slope variation.
    \item \textbf{Stability theorems:} Lyapunov-based guarantees for affine maps of $\phi$ and
    mean-field variance recursions characterizing criticality.
    \item \textbf{Kernel regularity:} Dimension-free operator norm bounds for kernels induced by $\phi$,
    with BV-based refinements.
    \item \textbf{Classification:} Rigorous placement of ReLU, leaky-ReLU, $\tanh$,
    sigmoid, Swish, GELU, Mish, and TeLU in the taxonomy.
    \item \textbf{Numerical evaluation:} Gauss–Hermite quadrature and Monte Carlo validation across scales $\sigma$.
    \item \textbf{Design guidance:} Actionable principles linking signature coordinates to initialization,
    drift control, and kernel smoothness.
\end{enumerate}

\vspace{0.5em}
\noindent\textbf{Paper organization.}
Section~\ref{sec:prelim} sets notation and background. 
Section~\ref{sec:taxonomy} develops the integral taxonomy and structural results.
Section~\ref{sec:classification} applies the taxonomy to standard activations.
Section~\ref{sec:Lyapunov} presents stability and kernel analyses. 
Section~\ref{sec:Numerical} provides numerical evaluation. 
Related work is in Section~\ref{sec:related}; we conclude in
Section~\ref{sec:conclusion}.

\section{Preliminaries}\label{sec:prelim}

We collect the measure-theoretic and probabilistic notions used throughout and fix notation.

\subsection{Measures and \texorpdfstring{$L^p$}{Lp} Spaces}

\begin{definition}[Lebesgue $L^p$ spaces]
For $1\le p<\infty$,
\[
L^p(\mathbb{R}) \;:=\; \Big\{ f:\mathbb{R}\to\mathbb{R}\ \Big|\ \int_{\mathbb{R}} |f(x)|^p\,dx < \infty \Big\},
\]
and $L^\infty(\mathbb{R})$ is the space of essentially bounded measurable functions.
\end{definition}

\begin{definition}[Gaussian measure and $L^p(\gamma_\sigma)$]
For $\sigma>0$ let
\[
d\gamma_\sigma(x) \;:=\; \frac{1}{\sqrt{2\pi}\,\sigma}\,e^{-x^2/(2\sigma^2)}\,dx,
\qquad Z\sim\mathcal{N}(0,\sigma^2).
\]
Then
\[
L^p(\mathbb{R},\gamma_\sigma) \;:=\; \Big\{ f:\mathbb{R}\to\mathbb{R}\ \Big|\ \int_{\mathbb{R}} |f(x)|^p\,d\gamma_\sigma(x) < \infty \Big\}.
\]
\end{definition}

\begin{remark}[Lebesgue vs.\ Gaussian integrability]\label{rem:lp-correct}
On $(\mathbb{R},dx)$ (infinite measure), any non-decaying function (e.g., bounded with nonzero limits such as $\tanh$, $\operatorname{sigm}$) is \emph{not} in $L^p(\mathbb{R})$ for any finite $p$. Likewise, any function with polynomial \emph{growth} (e.g., $\mathrm{ReLU}$, $x^k$) is not in $L^p(\mathbb{R})$ for finite $p$. In contrast, under Gaussian measure, polynomial growth is admissible: if $|\phi(x)|\le C(1+|x|^r)$, then $\phi\in L^p(\mathbb{R},\gamma_\sigma)$ for all $p\in[1,\infty)$ and all $\sigma>0$.
\end{remark}

\subsection{Gaussian Moment and Derivative Signatures}

\begin{definition}[Gaussian moment signatures]
For a measurable activation $\phi:\mathbb{R}\to\mathbb{R}$ and $k\in\mathbb{N}$,
\[
m_k(\sigma) \;:=\; \int_{\mathbb{R}} \phi(x)^k\,d\gamma_\sigma(x)
\;=\; \mathbb{E}\big[\phi(Z)^k\big],\qquad Z\sim\mathcal{N}(0,\sigma^2).
\]
We write $m_1,m_2$ for mean and second moment under $\gamma_\sigma$.
\end{definition}

\begin{definition}[Derivative and mixed signatures]
Assume $\phi$ is a.e.\ differentiable with at most polynomial growth. Define
\[
% g_1(\sigma):=\mathbb{E}[\phi'(Z)],\qquad
% g_2(\sigma):=\Big(\mathbb{E}[\phi'(Z)^2]\Big)^{1/2},\qquad
% g_4(\sigma):=\Big(\mathbb{E}[\phi'(Z)^4]\Big)^{1/4},\qquad
g_k(\sigma):=\Big(\mathbb{E}[\phi'(Z)^k]\Big)^{1/k},\qquad
\eta(\sigma):=\mathbb{E}[Z\,\phi(Z)].
\]
\end{definition}

\begin{lemma}[Gaussian integration by parts and differentiation]\label{lem:gauss-ibp}
Let $Z=\sigma G$ with $G\sim\mathcal{N}(0,1)$. If $\phi$ has at most polynomial growth and $\phi'$ is locally integrable, then
\[
\eta(\sigma)\;=\;\mathbb{E}[Z\,\phi(Z)] \;=\; \sigma^2\,\mathbb{E}[\phi'(Z)] \;=\; \sigma^2\,g_1(\sigma).
\]
Moreover, $m_2(\sigma)=\mathbb{E}[\phi(\sigma G)^2]$ is differentiable with
\[
m_2'(\sigma) \;=\; \mathbb{E}\big[2\,\phi(\sigma G)\,\phi'(\sigma G)\,G\big]
\;=\; \frac{1}{\sigma}\,\mathbb{E}\big[2\,\phi(Z)\,\phi'(Z)\,Z\big].
\]
\end{lemma}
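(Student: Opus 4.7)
The plan is to prove part (i) via Gaussian integration by parts (Stein's identity) and part (ii) via differentiation under the integral sign combined with a change of variables. Both rely on the polynomial growth hypothesis to control tails under Gaussian measure.

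For $\eta(\sigma)=\sigma^{2}g_{1}(\sigma)$, the key observation is that the Gaussian density $\varphi_\sigma(x):=(\sqrt{2\pi}\sigma)^{-1}e^{-x^{2}/(2\sigma^{2})}$ satisfies $\varphi_\sigma'(x)=-\sigma^{-2}x\,\varphi_\sigma(x)$, so $x\phi(x)\varphi_\sigma(x)=-\sigma^{2}\phi(x)\varphi_\sigma'(x)$. I would apply integration by parts on a bounded interval $[-R,R]$ and let $R\to\infty$. The boundary term $\sigma^{2}[\phi(R)\varphi_\sigma(R)-\phi(-R)\varphi_\sigma(-R)]$ vanishes because the polynomial growth bound $|\phi(x)|\le C(1+|x|^{r})$ is dominated by Gaussian decay. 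Since $\phi$ is assumed a.e.\ differentiable with $\phi'$ locally integrable, I would use the standard fact that such $\phi$ (taken to be absolutely continuous on compacts, as is the case for all activations in the paper's classification) admits the fundamental theorem of calculus, so IBP is valid and the interior integral reduces to $\sigma^{2}\int \phi'\,d\gamma_\sigma=\sigma^{2}g_{1}(\sigma)$.

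For part (ii), I would first change variables to absorb $\sigma$: writing $m_{2}(\sigma)=\int \phi(\sigma u)^{2}\varphi_{1}(u)\,du$, the parameter now appears only inside $\phi$. Pointwise a.e.\ in $u$,
\[
\partial_\sigma\bigl[\phi(\sigma u)^{2}\bigr]=2\,\phi(\sigma u)\,\phi'(\sigma u)\,u.
\]
To justify interchanging $\partial_\sigma$ and $\int du$, I would apply the Leibniz rule via dominated convergence on a compact interval $\sigma\in[\sigma_{0},\sigma_{1}]$: polynomial growth of $\phi$ together with polynomial growth of $\phi'$ (which holds for all target activations and can be taken as an auxiliary hypothesis, or established through absolute continuity) yields a bound of the form $|2\phi(\sigma u)\phi'(\sigma u)u|\le P(|u|)$, polynomial in $|u|$ and hence $\varphi_{1}$-integrable. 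This gives $m_{2}'(\sigma)=\mathbb{E}[2\phi(\sigma G)\phi'(\sigma G)G]$. The second form follows by the reverse substitution $Z=\sigma G$, $G=Z/\sigma$, which produces the $1/\sigma$ factor.

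The main obstacle is regularity: the stated hypothesis (a.e.\ differentiability plus local integrability of $\phi'$) is the minimal condition needed to write $\phi'$, but not quite enough to invoke classical IBP or a clean dominating function for $\phi'$. In a rigorous writeup I would either strengthen the assumption to \emph{locally absolutely continuous with polynomially bounded $\phi'$} (which is satisfied by every activation treated in Section~\ref{sec:classification}, including the piecewise-linear ReLU and leaky-ReLU), or mollify $\phi_\varepsilon:=\phi\ast\rho_\varepsilon$, perform both IBP and differentiation for the smooth $\phi_\varepsilon$, and pass $\varepsilon\downarrow 0$ using dominated convergence with a polynomial envelope. Either route reduces the lemma to standard Gaussian calculus once the regularity bookkeeping is pinned down.
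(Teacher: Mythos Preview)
Your proposal is correct and follows essentially the same approach as the paper's proof, which simply invokes Stein's identity $\mathbb{E}[Gf(G)]=\mathbb{E}[f'(G)]$ with $f(g)=\phi(\sigma g)$ for part~(i) and differentiates $m_2(\sigma)=\mathbb{E}[\phi(\sigma G)^2]$ under the expectation via the chain rule for part~(ii). Your discussion of the regularity bookkeeping (boundary terms, dominating functions, the gap between a.e.\ differentiability and local absolute continuity) is more careful than the paper's terse treatment, but the underlying strategy is identical.
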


\begin{proof}
The first identity is the standard Gaussian integration-by-parts:
$\mathbb{E}[G f(G)] = \mathbb{E}[f'(G)]$ for suitable $f$; apply with $f(g)=\phi(\sigma g)$.
Differentiate $m_2(\sigma)=\mathbb{E}[\phi(\sigma G)^2]$ under the expectation and use the chain rule.
\end{proof}

\begin{remark}[On the role of moment sequences]
We use $\{m_k(\sigma)\}$ (and $g_1,g_2,\eta$) as a \emph{signature} to control propagation and stability. We do not rely on distributional uniqueness of $\phi(Z)$ from moments (e.g., Carleman determinacy is not assumed).
\end{remark}

\subsection{Asymptotic Behavior: Growth and Slopes}

\begin{definition}[Polynomial growth exponent]
For measurable $\phi:\mathbb{R}\to\mathbb{R}$,
\[
\alpha(\phi)\;:=\;\inf\Big\{\alpha\ge 0\ \Big|\ \limsup_{|x|\to\infty}\frac{|\phi(x)|}{|x|^\alpha}<\infty\Big\}.
\]
Examples: $\mathrm{ReLU}(x)$ has $\alpha(\phi)=1$; $\tanh(x)$ and $\operatorname{sigm}(x)$ have $\alpha(\phi)=0$; $\phi(x)=x^2$ has $\alpha(\phi)=2$.
\end{definition}

\begin{definition}[Asymptotic linear slopes]
When $\alpha(\phi)\le 1$, we refine tails via one-sided linear slopes:
\[
\alpha_+\;:=\;\lim_{x\to+\infty}\frac{\phi(x)}{x},\qquad
\alpha_-\;:=\;\lim_{x\to-\infty}\frac{\phi(x)}{x},
\]
whenever the limits exist (finite). If limits do not exist, we use
\[
\overline{\alpha}_\pm := \limsup_{x\to\pm\infty}\frac{\phi(x)}{x},\qquad
\underline{\alpha}_\pm := \liminf_{x\to\pm\infty}\frac{\phi(x)}{x}.
\]
\end{definition}

\begin{remark}[Integrability implications]
If $\limsup_{|x|\to\infty}|\phi(x)|>0$ (bounded nonzero tails) or $|\phi(x)|\gtrsim |x|^\beta$ eventually for some $\beta\ge 0$, then $\phi\notin L^p(\mathbb{R})$ for any finite $p$.
Under Gaussian measure, any polynomial growth (finite $\alpha(\phi)$) implies $\phi\in L^p(\gamma_\sigma)$ for all $p<\infty$ and all $\sigma>0$.
\end{remark}

\paragraph{Notation.}
We reserve $\sigma>0$ exclusively for \emph{Gaussian standard deviation}, and we denote the logistic activation by
\[
\operatorname{sigm}(x):=(1+e^{-x})^{-1}.
\]
Expectations $\mathbb{E}[\cdot]$ are taken with respect to the indicated measure (Lebesgue or $\gamma_\sigma$) by context.

\paragraph{Standing assumptions.}
Unless stated otherwise, activations $\phi$ have at most polynomial growth and are a.e.\ differentiable with locally integrable $\phi'$, ensuring all Gaussian functionals above are well defined and allowing differentiation under the integral sign.

\subsection{Moment Finiteness: Sufficient Growth Conditions}\label{subsec:moment-finiteness}

We give simple, checkable conditions ensuring $g_2(\sigma)$ and $g_4(\sigma)$ are finite for all $\sigma>0$,
together with explicit bounds useful in kernel curvature estimates.

\begin{lemma}[Polynomial growth of the slope suffices]\label{lem:g2g4-growth}
Let $\phi$ be a.e.\ differentiable, and suppose there exist constants $A,B\ge 0$ and $r\ge 0$ such that
\[
|\phi'(x)| \;\le\; A + B\,|x|^r \qquad \text{for all }x\in\mathbb{R}.
\]
Then for every $\sigma>0$,
\[
g_2(\sigma)\;=\;\Big(\mathbb{E}[\phi'(Z)^2]\Big)^{1/2} \;<\; \infty,
\qquad
g_4(\sigma)\;=\;\Big(\mathbb{E}[\phi'(Z)^4]\Big)^{1/4} \;<\; \infty,
\]
with explicit bounds
\[
g_2(\sigma)
\;\le\;
\sqrt{2}\,\Big(A^2 + B^2\,\sigma^{2r}\,M_{2r}\Big)^{1/2},\qquad
g_4(\sigma)
\;\le\;
2^{3/4}\,\Big(A^4 + B^4\,\sigma^{4r}\,M_{4r}\Big)^{1/4},
\]
where $Z\sim\mathcal{N}(0,\sigma^2)$ and $M_k:=\mathbb{E}|G|^k$ with $G\sim\mathcal{N}(0,1)$ given by
\[
M_k \;=\; \mathbb{E}|G|^k \;=\; \frac{2^{k/2}}{\sqrt{\pi}}\;\Gamma\!\Big(\frac{k+1}{2}\Big)\qquad (k>-1).
\]
\end{lemma}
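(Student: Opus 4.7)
The plan is to pointwise dominate $|\phi'(Z)|^2$ and $|\phi'(Z)|^4$ by an integrable envelope depending only on $A$, $B$, $r$, and $|Z|$, then take Gaussian expectations using the scaling $Z=\sigma G$ with $G\sim\mathcal{N}(0,1)$.

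The first step would be to apply the elementary convexity inequality $(a+b)^p \le 2^{p-1}(a^p+b^p)$ for $a,b\ge 0$ and $p\ge 1$ (a direct consequence of Jensen applied to $t\mapsto t^p$). Setting $a=A$, $b=B|x|^r$ with $p=2$ and then $p=4$, the hypothesis $|\phi'(x)|\le A+B|x|^r$ yields the pointwise bounds
\[
\phi'(x)^2 \le 2\bigl(A^2+B^2|x|^{2r}\bigr),\qquad \phi'(x)^4 \le 8\bigl(A^4+B^4|x|^{4r}\bigr).
\]

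The second step would be to integrate against $\gamma_\sigma$. Writing $Z=\sigma G$, we have $\mathbb{E}|Z|^s = \sigma^s\,M_s$ for any $s\ge 0$ for which $M_s=\mathbb{E}|G|^s<\infty$, and the Gaussian absolute moment formula $M_s=\tfrac{2^{s/2}}{\sqrt{\pi}}\Gamma(\tfrac{s+1}{2})$ (valid for $s>-1$) gives finiteness of $M_{2r}$ and $M_{4r}$ for every $r\ge 0$. Consequently,
\[
\mathbb{E}[\phi'(Z)^2] \le 2\bigl(A^2+B^2\sigma^{2r}M_{2r}\bigr),\qquad \mathbb{E}[\phi'(Z)^4] \le 8\bigl(A^4+B^4\sigma^{4r}M_{4r}\bigr).
\]
Finiteness of $g_2(\sigma)$ and $g_4(\sigma)$ follows immediately, and the stated explicit bounds come from taking the $1/2$ and $1/4$ powers respectively, noting $8^{1/4}=2^{3/4}$.

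There is no substantive obstacle: the only care needed is bookkeeping of the constants ($\sqrt 2$ and $2^{3/4}$), ensuring the inequality $(a+b)^p\le 2^{p-1}(a^p+b^p)$ is used with the correct exponent, and invoking the standard closed form for $M_s$. The proof is essentially three lines once the pointwise split is written down, and it is robust to replacing the growth exponent $r$ by any nonnegative real because the Gamma function provides closed-form Gaussian absolute moments for all $s>-1$.
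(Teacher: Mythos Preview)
Your proof is correct and essentially identical to the paper's: both apply the convexity bound $(a+b)^p\le 2^{p-1}(a^p+b^p)$ for $p\in\{2,4\}$ to the hypothesis, then take Gaussian expectations using $Z=\sigma G$ and the absolute moment formula $M_k$. There is no meaningful difference in approach.
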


\begin{proof}
For $p\in\{2,4\}$, by $(a+b)^p\le 2^{p-1}(a^p+b^p)$ for $a,b\ge 0$,
\[
|\phi'(Z)|^p \;\le\; 2^{p-1}\big(A^p + B^p |Z|^{pr}\big).
\]
Taking expectations with $Z=\sigma G$ gives
$\mathbb{E}|Z|^{pr}=\sigma^{pr}\mathbb{E}|G|^{pr}=\sigma^{pr}M_{pr}$, hence the displayed bounds.
\end{proof}

\begin{corollary}[Uniform Lipschitz or BV slope]\label{cor:g2g4-bv}
If $\sup_x|\phi'(x)|\le M<\infty$ (i.e., $\phi$ is globally $M$-Lipschitz), then
\[
g_2(\sigma)\le M,\qquad g_4(\sigma)\le M\qquad\text{for all }\sigma>0.
\]
In particular, if $\phi'\in BV(\mathbb{R})$ with one-sided limits, then
\[
\sup_{x\in\mathbb{R}}|\phi'(x)| \;\le\; \mathrm{TV}(\phi') + c_\star,
\qquad c_\star:=|\phi'(+\infty)-\phi'(-\infty)|,
\]
so $g_2(\sigma),g_4(\sigma)\le \mathrm{TV}(\phi')+c_\star$ uniformly in $\sigma$.
\end{corollary}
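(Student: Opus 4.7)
Both assertions reduce to deterministic pointwise bounds on $\phi'$ that pass directly to $g_2(\sigma), g_4(\sigma)$ by monotonicity of expectation under $\gamma_\sigma$, so the probabilistic content is essentially vacuous and the real work is concentrated in a BV sup-norm estimate.

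For the Lipschitz claim, the hypothesis $\sup_x|\phi'(x)|\le M$ gives $|\phi'(Z)|^p\le M^p$ almost surely under $\gamma_\sigma$ for every $p>0$; taking expectation and the $p$-th root yields $g_p(\sigma)\le M$ for every $\sigma>0$, which specializes to the stated $p\in\{2,4\}$ cases. I would argue this way directly rather than invoking Lemma~\ref{lem:g2g4-growth} with $(A,B,r)=(M,0,0)$, since bypassing the $(a+b)^p\le 2^{p-1}(a^p+b^p)$ step of that lemma removes the unnecessary $\sqrt{2}$ (resp.\ $2^{3/4}$) overhead and produces the sharp constant $M$.

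For the BV claim, the substantive task is to bound $\|\phi'\|_\infty$ in terms of $\mathrm{TV}(\phi')$ and $c_\star$; once this is done the Lipschitz case applies with $M:=\mathrm{TV}(\phi')+c_\star$. Since $\phi'\in BV(\mathbb{R})$ with one-sided limits, it is, up to a choice of right-continuous representative, the cumulative function of a finite signed Borel measure $\mu$ with $|\mu|(\mathbb{R})=\mathrm{TV}(\phi')$; the representation $\phi'(x)=\phi'(-\infty)+\mu((-\infty,x])$ gives the telescoping bound $|\phi'(x)-\phi'(-\infty)|\le|\mu|((-\infty,x])\le\mathrm{TV}(\phi')$, with the symmetric estimate from $+\infty$. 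Choosing whichever endpoint yields the tighter constant, and using $c_\star=|\phi'(+\infty)-\phi'(-\infty)|$ to control the residual $|\phi'(\pm\infty)|$ term, produces $\|\phi'\|_\infty\le\mathrm{TV}(\phi')+c_\star$; feeding this $M$ back into the Lipschitz case gives the uniform-in-$\sigma$ conclusion $g_2(\sigma),g_4(\sigma)\le\mathrm{TV}(\phi')+c_\star$.

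The hard part, if there is one, is the BV-to-sup estimate in the second claim: specifically, justifying the signed-measure representation of $\phi'$ under the standing mild regularity (a.e.\ differentiability of $\phi$ and local integrability of $\phi'$), and choosing the endpoint so that the constant emerges in the claimed $\mathrm{TV}(\phi')+c_\star$ form rather than the weaker $\mathrm{TV}(\phi')+\min\{|\phi'(-\infty)|,|\phi'(+\infty)|\}$. This is classical BV theory and I anticipate no serious obstacle; everything else — Gaussian monotonicity, $L^p$-root extraction, and the final assembly — is one-line bookkeeping.
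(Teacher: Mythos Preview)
The paper states this corollary without proof, so there is no argument to compare against; your Lipschitz half is correct and indeed sharper than specializing Lemma~\ref{lem:g2g4-growth}, since it avoids the spurious $\sqrt{2}$ and $2^{3/4}$ factors.

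The BV half, however, has a genuine gap that is not a matter of bookkeeping: the inequality $\sup_x|\phi'(x)|\le \mathrm{TV}(\phi')+c_\star$ is \emph{false} as stated, so your plan to ``use $c_\star$ to control the residual $|\phi'(\pm\infty)|$ term'' cannot succeed. The quantity $c_\star=|\phi'(+\infty)-\phi'(-\infty)|$ is an absolute difference, not a bound on either endpoint. Concretely, take $\phi(x)=x$: then $\phi'\equiv 1$, $\mathrm{TV}(\phi')=0$, $c_\star=0$, yet $\sup|\phi'|=1$. More to the point for this paper, leaky-ReLU with slope $\alpha\in(\tfrac12,1)$ already breaks it: $\mathrm{TV}(\phi')=c_\star=1-\alpha$, so the claimed bound is $2(1-\alpha)<1=\sup|\phi'|$.

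Your signed-measure argument is the right one and it delivers exactly the bound you flagged as ``weaker'':
\[
\sup_x|\phi'(x)|\;\le\;\mathrm{TV}(\phi')+\min\{|\phi'(-\infty)|,|\phi'(+\infty)|\}.
\]
That inequality is correct and is what should replace the $c_\star$ formulation; there is no endpoint choice that converts it into the claimed form. So the issue is not with your approach but with the statement itself: the corollary (and its reprise in Corollary~\ref{cor:tv-kernel}) needs $c_\star$ replaced by $\min\{|\phi'(-\infty)|,|\phi'(+\infty)|\}$ (or by $\max\{|\phi'(-\infty)|,|\phi'(+\infty)|\}$, which also works but is looser).
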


\begin{remark}[Justifying differentiation under the integral]
Under the growth condition of Lemma~\ref{lem:g2g4-growth}, both $\phi(\sigma G)$ and $\phi'(\sigma G)$
have finite moments of all orders. Therefore dominated convergence applies to the Gaussian expectations
in Lemma~\ref{lem:gauss-ibp}, legitimizing differentiation under the expectation for $m_2(\sigma)$ and related functionals.
\end{remark}

\begin{remark}[Typical cases]
\begin{itemize}
\item \emph{ReLU / leaky-ReLU:} $\phi'$ is bounded ($M\le 1$), hence $g_2(\sigma),g_4(\sigma)\le 1$.
\item \emph{GELU, Swish, Mish, TeLU:} $\phi'$ has at most polynomial growth (indeed, bounded for GELU/Swish/Mish/TeLU), so Lemma~\ref{lem:g2g4-growth} applies.
\item \emph{Polynomial activations:} $\phi'(x)=O(|x|^{r})$ for some integer $r\ge 0$, giving the explicit polynomial-in-$\sigma$ bounds above.
\end{itemize}
\end{remark}

\section{Integral Taxonomy and the 9-Dimensional Signature}\label{sec:taxonomy}

We formalize the taxonomy via a nine-dimensional \emph{integral signature} that encodes propagation statistics, tail asymptotics, and slope regularity. Throughout, fix $\sigma>0$ and let $Z\sim\mathcal{N}(0,\sigma^2)$.

\subsection{Definitions}

\begin{definition}[Primitive and tail-compensated primitive]
Let $\phi:\mathbb{R}\to\mathbb{R}$ be locally absolutely continuous and set
\[
F(x):=\int_0^x \phi(t)\,dt.
\]
If the one-sided linear slopes
\[
\alpha_+ := \lim_{x\to+\infty}\frac{\phi(x)}{x},\qquad
\alpha_- := \lim_{x\to-\infty}\frac{\phi(x)}{x}
\]
exist as finite real numbers, define the tail-compensated primitive
\[
F_{\mathrm{asym}}(x)
:= \int_0^x \Big(\phi(t) - \alpha_+\, t\,\mathbf{1}_{\{t\ge 0\}}
                           - \alpha_-\, t\,\mathbf{1}_{\{t< 0\}}\Big)\,dt.
\]
We write
\[
C(\phi):=\sup_{x\in\mathbb{R}} |F_{\mathrm{asym}}(x)|\in[0,\infty]
\]
whenever $F_{\mathrm{asym}}$ is well defined (see Lemma~\ref{lem:fasym-finiteness} for sufficient conditions for finiteness).
\end{definition}

\begin{definition}[Gaussian statistics]
For $Z\sim\mathcal{N}(0,\sigma^2)$, set
\[
m_k(\sigma):=\mathbb{E}\big[\phi(Z)^k\big],\qquad
g_1(\sigma):=\mathbb{E}\big[\phi'(Z)\big],\qquad
g_2(\sigma):=\Big(\mathbb{E}\big[\phi'(Z)^2\big]\Big)^{1/2},\qquad
\eta(\sigma):=\mathbb{E}\big[\phi(Z)\,Z\big].
\]
Here $\phi'$ is the a.e.\ derivative (for kinked activations this is understood a.e.\ with respect to Lebesgue measure).
\end{definition}

\begin{definition}[Slope variation]
If $\phi'$ is of bounded variation on $\mathbb{R}$ (i.e., $\phi'\in BV(\mathbb{R})$), its \emph{total variation}
is
\[
\mathrm{TV}(\phi') \;:=\; \|\phi''\|_{\mathcal{M}}(\mathbb{R}),
\]
the total variation of the (distributional) second-derivative measure $\phi''$.
For piecewise $C^1$ activations with finitely many kinks, $\mathrm{TV}(\phi')$ equals the sum of
$\int |\phi''|$ on smooth pieces plus the magnitudes of slope jumps at the kinks.
\end{definition}

\begin{definition}[Nine-dimensional signature]
The \emph{integral signature at scale $\sigma$} is
\[
\boxed{\;
\mathcal{S}_\sigma(\phi)
:= \Big(m_1(\sigma),\, g_1(\sigma),\, g_2(\sigma),\, m_2(\sigma),\, \eta(\sigma),\,
\alpha_+,\, \alpha_-,\, \mathrm{TV}(\phi'),\, C(\phi)\Big).
\;}
\]
\end{definition}

\begin{remark}[Interpretation for network dynamics]
$m_2$ governs mean-field variance propagation; $g_1$ and $g_2$ encode average and RMS gain for perturbations;
$\eta$ measures alignment with inputs; $(\alpha_+,\alpha_-)$ capture linear tail geometry (e.g., ReLU vs.\ leaky-ReLU asymmetry);
$\mathrm{TV}(\phi')$ captures curvature regularity; and $C(\phi)$ measures tail-compensated excess curvature beyond linear parts.
Kernel smoothness is controlled via $g_4$ (Theorem~\ref{thm:kernel-g4}) and, with a uniform slope bound, by $\mathrm{TV}(\phi')$ (Cor.~\ref{cor:tv-kernel}).
\end{remark}

\subsection{Finiteness of the compensated primitive}\label{subsec:fasym}

Write the tail residuals
\[
r_+(t):=\phi(t)-\alpha_+ t\quad (t\ge 0),\qquad
r_-(t):=\phi(t)-\alpha_- t\quad (t\le 0),
\]
so that
\[
F_{\mathrm{asym}}(x)=\int_0^x r_+(t)\,\mathbf{1}_{\{t\ge 0\}} + r_-(t)\,\mathbf{1}_{\{t<0\}}\,dt.
\]

\begin{lemma}[Residual $L^1$ tails $\Rightarrow$ bounded compensated primitive]\label{lem:fasym-L1}
Assume $\alpha_\pm\in\mathbb{R}$ exist and
\[
r_+\in L^1([0,\infty))\quad\text{and}\quad r_-\in L^1((-\infty,0]).
\]
Then $F_{\mathrm{asym}}$ is well defined and
\[
C(\phi)=\sup_{x\in\mathbb{R}}|F_{\mathrm{asym}}(x)|
\;\le\;\int_0^\infty |r_+(t)|\,dt\;+\;\int_{-\infty}^0 |r_-(t)|\,dt.
\]
\end{lemma}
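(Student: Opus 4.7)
The statement is a clean splitting-plus-triangle-inequality argument; the only genuine bookkeeping concerns sign conventions for the integral when $x<0$. My plan is to (i) verify well-definedness of $F_{\mathrm{asym}}(x)$ for every finite $x$, (ii) split into the cases $x\ge 0$ and $x<0$ to reduce $F_{\mathrm{asym}}$ to a single-sided integral of $r_+$ or $r_-$, and (iii) apply the triangle inequality together with the hypothesized $L^1$ tail integrability to obtain the stated uniform bound.

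\textbf{Step 1 (well-definedness).} Since $\phi$ is locally absolutely continuous, it is locally integrable, and $t\mapsto \alpha_\pm t$ is continuous. Hence $r_+$ is locally integrable on $[0,\infty)$ and $r_-$ on $(-\infty,0]$, so the Lebesgue integral defining $F_{\mathrm{asym}}(x)$ exists and is finite for every $x\in\mathbb{R}$. Note that the integrand in the definition of $F_{\mathrm{asym}}$ reduces to $r_+(t)$ on $\{t\ge 0\}$ and to $r_-(t)$ on $\{t<0\}$ by construction.

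\textbf{Step 2 (case split).} For $x\ge 0$, the integration range lies in $[0,x]\subset[0,\infty)$, so
\[
F_{\mathrm{asym}}(x) \;=\; \int_0^x r_+(t)\,dt,
\]
and by the triangle inequality plus monotonicity of the integral,
\[
|F_{\mathrm{asym}}(x)| \;\le\; \int_0^x |r_+(t)|\,dt \;\le\; \int_0^\infty |r_+(t)|\,dt.
\]
For $x<0$, using the convention $\int_0^x=-\int_x^0$,
\[
F_{\mathrm{asym}}(x) \;=\; -\int_x^0 r_-(t)\,dt,
\qquad
|F_{\mathrm{asym}}(x)| \;\le\; \int_x^0 |r_-(t)|\,dt \;\le\; \int_{-\infty}^0 |r_-(t)|\,dt.
\]

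\textbf{Step 3 (conclude).} Taking the supremum over $x\in\mathbb{R}$ and combining the two one-sided bounds (each case is dominated by the sum of both $L^1$ norms) gives
\[
C(\phi)=\sup_{x\in\mathbb{R}}|F_{\mathrm{asym}}(x)|
\;\le\; \|r_+\|_{L^1([0,\infty))} + \|r_-\|_{L^1((-\infty,0])},
\]
which is finite by hypothesis.

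\textbf{Anticipated obstacle.} There is no substantive obstacle; the only subtlety is matching the sign convention for $\int_0^x$ when $x<0$ so that the absolute value lands on $|r_-|$ without an extraneous sign. One could alternatively present this uniformly by writing $F_{\mathrm{asym}}(x)=\int_{\mathbb{R}} [r_+\mathbf{1}_{[0,x]}-r_-\mathbf{1}_{[x,0]}]\,dt$ depending on the sign of $x$, but the two-case presentation is the most transparent and leaves no room for sign errors.
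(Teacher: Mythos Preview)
Your proof is correct and follows essentially the same approach as the paper's own proof: split into the cases $x\ge 0$ and $x<0$, identify $F_{\mathrm{asym}}(x)$ with $\int_0^x r_+$ (respectively $-\int_x^0 r_-$), bound by the one-sided $L^1$ norm, and take the supremum. Your Step~1 on well-definedness and the explicit handling of the sign convention for $x<0$ are slightly more detailed than the paper, but the argument is the same.
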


\begin{proof}
For $x\ge 0$, $F_{\mathrm{asym}}(x)=\int_0^x r_+(t)\,dt$ and so
$|F_{\mathrm{asym}}(x)|\le \int_0^\infty |r_+(t)|\,dt$.
For $x\le 0$, $F_{\mathrm{asym}}(x)=-\int_x^0 r_-(t)\,dt$ and
$|F_{\mathrm{asym}}(x)|\le \int_{-\infty}^0 |r_-(t)|\,dt$. Take the supremum over $x$ and add.
\end{proof}

\begin{lemma}[Weighted $L^1$ slope error with vanishing residual]\label{lem:fasym-weighted}
Assume $\alpha_\pm\in\mathbb{R}$ exist, $\phi$ is locally absolutely continuous, and
\[
\lim_{t\to+\infty} r_+(t)=0,\qquad \lim_{t\to-\infty} r_-(t)=0.
\]
If
\[
\int_0^\infty t\,|\phi'(t)-\alpha_+|\,dt<\infty
\quad\text{and}\quad
\int_{-\infty}^0 |t|\,|\phi'(t)-\alpha_-|\,dt<\infty,
\]
then $F_{\mathrm{asym}}$ is well defined and
\[
C(\phi)\;\le\; \int_0^\infty t\,|\phi'(t)-\alpha_+|\,dt\;+\;\int_{-\infty}^0 |t|\,|\phi'(t)-\alpha_-|\,dt.
\]
\end{lemma}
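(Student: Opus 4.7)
The strategy is to reduce the weighted hypothesis to the $L^1$ hypothesis of Lemma~\ref{lem:fasym-L1}: I will show that $r_+\in L^1([0,\infty))$ and $r_-\in L^1((-\infty,0])$ with the claimed numerical bounds, then invoke that lemma directly. The bridge between ``weighted-$L^1$ of $\phi'-\alpha_\pm$'' and ``unweighted-$L^1$ of $r_\pm$'' is the fundamental theorem of calculus together with Tonelli.

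\emph{Step 1: tail representation of $r_+$.} Since $\phi$ is locally absolutely continuous, so is $r_+(t)=\phi(t)-\alpha_+ t$, with a.e.\ derivative $r_+'(t)=\phi'(t)-\alpha_+$ on $[0,\infty)$. The weighted bound $\int_0^\infty t|\phi'(t)-\alpha_+|\,dt<\infty$ forces $\phi'-\alpha_+\in L^1([T,\infty))$ for every $T>0$ (bound $|\phi'-\alpha_+|$ by $t|\phi'-\alpha_+|/T$ on that ray). Using the FTC on $[t,T]$ and the hypothesis $r_+(T)\to 0$, I pass to the limit $T\to\infty$ to obtain
\[
r_+(t)\;=\;-\int_t^\infty\bigl(\phi'(s)-\alpha_+\bigr)\,ds\qquad(t\ge 0).
\]
An identical argument on $(-\infty,0]$, exploiting $r_-(t)\to 0$ as $t\to-\infty$, yields $r_-(t)=\int_{-\infty}^t(\phi'(s)-\alpha_-)\,ds$.

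\emph{Step 2: Tonelli to convert weight to length.} Taking absolute values inside the tail representation and applying Tonelli gives
\[
\int_0^\infty |r_+(t)|\,dt \;\le\; \int_0^\infty\!\!\int_t^\infty |\phi'(s)-\alpha_+|\,ds\,dt \;=\; \int_0^\infty\!\!\int_0^s dt\;|\phi'(s)-\alpha_+|\,ds \;=\; \int_0^\infty s\,|\phi'(s)-\alpha_+|\,ds,
\]
which is finite by hypothesis; the negative half is symmetric. Hence $r_+\in L^1([0,\infty))$ and $r_-\in L^1((-\infty,0])$ with the stated explicit bounds.

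\emph{Step 3: invoke Lemma~\ref{lem:fasym-L1}.} With the two $L^1$ bounds established, Lemma~\ref{lem:fasym-L1} delivers both well-definedness of $F_{\mathrm{asym}}$ and $C(\phi)\le \int_0^\infty|r_+|+\int_{-\infty}^0|r_-|$, which chains with the Tonelli bounds above to give the advertised estimate.

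\textbf{Main obstacle.} The only non-mechanical point is justifying the tail representation of $r_\pm$: this requires combining local absolute continuity, the vanishing-at-infinity hypothesis on $r_\pm$, and enough near-infinity integrability of $\phi'-\alpha_\pm$ to pass to the limit $T\to\infty$ in the FTC identity. Without the vanishing hypothesis the ``initial constant'' $r_+(0)=\phi(0)$ would survive and obstruct the one-line bound $|r_+(t)|\le\int_t^\infty|\phi'-\alpha_+|$; once that step is in place the Tonelli swap and the appeal to Lemma~\ref{lem:fasym-L1} are routine.
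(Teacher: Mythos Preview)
Your proof is correct and uses the same two key ideas as the paper: the tail representation $r_+(t)=-\int_t^\infty(\phi'(s)-\alpha_+)\,ds$ from absolute continuity plus the vanishing residual, followed by a Tonelli swap to convert the iterated integral into the weighted $L^1$ norm. The only cosmetic difference is that the paper bounds $\bigl|\int_0^x r_+(t)\,dt\bigr|$ directly (picking up a $\min\{x,s\}$ factor before dominating by $s$), whereas you first establish $r_+\in L^1([0,\infty))$ with the same bound and then invoke Lemma~\ref{lem:fasym-L1}; both routes yield the identical inequality.
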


\begin{proof}
We prove the bound on $[0,\infty)$; the negative side is analogous.
By absolute continuity and the vanishing residual,
\[
r_+(t) = \phi(t)-\alpha_+ t = -\int_t^\infty \big(\phi'(s)-\alpha_+\big)\,ds \qquad (t\ge 0).
\]
Fix $x\ge 0$. Then by Tonelli/Fubini (nonnegative integrands),
\[
\begin{aligned}
\Big|\int_0^x r_+(t)\,dt\Big|
&\le \int_0^x \int_t^\infty |\phi'(s)-\alpha_+|\,ds\,dt
= \int_0^\infty |\phi'(s)-\alpha_+|\Big(\int_0^x \mathbf{1}_{\{t\le s\}}\,dt\Big)\,ds \\
&= \int_0^\infty |\phi'(s)-\alpha_+|\;\min\{x,s\}\,ds
\;\le\; \int_0^\infty s\,|\phi'(s)-\alpha_+|\,ds.
\end{aligned}
\]
Taking the supremum over $x\ge 0$ yields
$\sup_{x\ge 0}|F_{\mathrm{asym}}(x)|\le \int_0^\infty s\,|\phi'(s)-\alpha_+|\,ds$.
The bound on $(-\infty,0]$ is identical with $|t|$ replacing $t$. Summing both sides gives the claim.
\end{proof}

\begin{remark}[Why vanishing residual is needed]
If $r_+(t)\to c\neq 0$ as $t\to\infty$ (e.g.\ $\tanh$ with $\alpha_+=0$ and $r_+(t)\to 1$), then
$F_{\mathrm{asym}}(x)=\int_0^x r_+(t)\,dt$ grows linearly and $C(\phi)=\infty$, as desired.
Lemma~\ref{lem:fasym-weighted} cleanly separates this obstruction.
\end{remark}

\begin{proposition}[Verification for common activations]\label{prop:Cphi-common}
For each $\phi$ below, $C(\phi)<\infty$ and the bound of Lemma~\ref{lem:fasym-weighted} applies:
\begin{itemize}
\item \textbf{ReLU / leaky-ReLU:} $\alpha_+=\text{slope}_+$, $\alpha_-=\text{slope}_-$, and $r_\pm\equiv 0$ for $|t|$ large; hence $C(\phi)=0$.
\item \textbf{Swish} $\phi(x)=x\operatorname{sigm}(x)$: $\alpha_+=1$, $\alpha_-=0$, and $r_\pm(t)\to 0$ exponentially; moreover $|\phi'(t)-\alpha_\pm|\lesssim e^{-|t|}$, so the weighted integrals are finite.
\item \textbf{GELU} $\phi(x)=x\,\Phi(x)$: $\alpha_+=1$, $\alpha_-=0$, with
$r_+(t)=x(\Phi(x)-1)\sim -\varphi(x)$ and $r_-(t)=x\Phi(x)\to 0$; since $\varphi(x)$ is Gaussian, Lemma~\ref{lem:fasym-weighted} holds.
\item \textbf{Mish/TeLU:} both are smooth with $\alpha_+=1$, $\alpha_-=0$ and exponentially decaying residuals; the weighted integrals are finite.
\item \textbf{Saturating} ($\tanh$, $\operatorname{sigm}$): $\alpha_\pm=0$ but $r_\pm$ tend to nonzero constants; $C(\phi)=\infty$, which is consistent with the bias-drift bound being unnecessary (e.g.\ odd $\tanh$ has zero mean).
\end{itemize}
\end{proposition}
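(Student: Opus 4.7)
The plan is to partition the eight activations into three groups according to their tail geometry and dispatch each group with the appropriate tool: an immediate vanishing argument for the piecewise-linear cases, Lemma~\ref{lem:fasym-weighted} together with exponential or Gaussian decay estimates for the smooth sigmoid-gated family, and a direct divergence argument for the saturating cases.

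The piecewise-linear activations (ReLU and leaky-ReLU) can be dispatched immediately: by construction $\phi(t)=\alpha_+ t$ for all $t\geq 0$ and $\phi(t)=\alpha_- t$ for all $t\leq 0$, so the residuals $r_\pm$ vanish identically, giving $F_{\mathrm{asym}}\equiv 0$ and $C(\phi)=0$ in conformity with Lemma~\ref{lem:fasym-L1}. For the smooth gated activations (Swish, GELU, Mish, TeLU), I would first verify $\alpha_+=1$ and $\alpha_-=0$ directly from the defining asymptotics and then estimate $|\phi'(t)-\alpha_\pm|$ with exponential or Gaussian decay so that Lemma~\ref{lem:fasym-weighted} applies. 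For Swish one differentiates to obtain $\phi'(x)=\operatorname{sigm}(x)+x\operatorname{sigm}(x)(1-\operatorname{sigm}(x))$, and the logistic bounds $1-\operatorname{sigm}(x)\leq e^{-x}$ (for $x\geq 0$) and $\operatorname{sigm}(x)\leq e^{x}$ (for $x\leq 0$) yield $|\phi'(t)-1|\leq (1+|t|)e^{-t}$ and $|\phi'(t)|\leq (1+|t|)e^{t}$ on the respective half-lines, so the weighted integrals converge. For GELU, $\phi'(x)=\Phi(x)+x\varphi(x)$, and Mill's ratio $1-\Phi(x)\leq \varphi(x)/x$ at $+\infty$ combined with the Gaussian density at $-\infty$ supplies super-exponentially decaying weighted integrands. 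For Mish, a two-step expansion using the softplus asymptotic $\log(1+e^x)=x+O(e^{-x})$ at $+\infty$ and $\log(1+e^x)\sim e^x$ at $-\infty$, followed by $\tanh(y)=1-2e^{-2y}+O(e^{-4y})$ or $\tanh(y)\sim y$ as appropriate, produces $|\phi(x)-x|\lesssim |x|e^{-2x}$ on the right and $|\phi(x)|\lesssim |x|e^x$ on the left; differentiating and tracking the same exponential rates yields matching estimates for $\phi'-\alpha_\pm$, and the identical structure covers TeLU $\phi(x)=x\tanh(e^x)$.

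For the saturating activations (tanh and sigm), $\alpha_\pm=0$ while $\phi$ tends to nonzero constants at one or both infinities, so at least one of the residuals $r_\pm$ fails to vanish; consequently $F_{\mathrm{asym}}(x)=\int_0^x \phi(t)\,dt$ grows linearly and $C(\phi)=\infty$, exactly as flagged by the remark following Lemma~\ref{lem:fasym-weighted}. I expect the main obstacle to be the Mish (and TeLU) expansion: chaining the softplus asymptotic into $\tanh$ requires careful bookkeeping of two competing exponential rates on the positive axis, and propagating the estimate through the chain rule to control $\phi'-1$ (rather than $\phi-x$) is the step most prone to algebraic error. The remaining activations reduce to standard logistic or Gaussian-tail estimates that slot directly into the weighted integrals of Lemma~\ref{lem:fasym-weighted}.
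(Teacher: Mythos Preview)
Your proposal is correct and follows essentially the same approach as the paper: the paper's justification (embedded in the proposition itself) also splits into the piecewise-linear case with identically vanishing residuals, the smooth gated family handled via exponential/Gaussian decay of $|\phi'(t)-\alpha_\pm|$ so that the weighted integrals of Lemma~\ref{lem:fasym-weighted} converge, and the saturating case where nonvanishing residual limits force $C(\phi)=\infty$. Your write-up is considerably more explicit than the paper's terse bullet-point verification (in particular your logistic bounds for Swish and your two-stage softplus-then-$\tanh$ expansion for Mish/TeLU), but the underlying strategy and the lemma invoked are the same.
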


\subsection{Well-posedness and finiteness}

\begin{lemma}[Tail-compensated finiteness: sufficient conditions]\label{lem:fasym-finiteness}
Let $\alpha_\pm=\lim_{x\to\pm\infty}\phi(x)/x\in\mathbb{R}$ exist and define
$D(t):=\phi(t)-\alpha_+ t\,\mathbf{1}_{\{t\ge0\}}-\alpha_- t\,\mathbf{1}_{\{t<0\}}$.
If
\[
\int_0^{+\infty}\! |D(t)|\,dt < \infty
\quad\text{and}\quad
\int_{-\infty}^{0}\! |D(t)|\,dt < \infty,
\]
then $F_{\mathrm{asym}}(x)=\int_0^x D(t)\,dt$ is well-defined and bounded on $\mathbb{R}$, with
\[
C(\phi)=\sup_{x\in\mathbb{R}}|F_{\mathrm{asym}}(x)|
\;\le\; \int_0^\infty |D(t)|\,dt \;+\; \int_{-\infty}^0 |D(t)|\,dt.
\]
\end{lemma}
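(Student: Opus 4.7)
The plan is to recognize that this lemma is essentially a repackaging of Lemma~\ref{lem:fasym-L1} under the unified notation $D(t)=r_+(t)\mathbf{1}_{\{t\ge 0\}}+r_-(t)\mathbf{1}_{\{t<0\}}$. Under this identification, the hypotheses $\int_0^\infty |D|\,dt<\infty$ and $\int_{-\infty}^0 |D|\,dt<\infty$ coincide exactly with $r_+\in L^1([0,\infty))$ and $r_-\in L^1((-\infty,0])$, and the conclusion is identical. So one clean route is simply to invoke Lemma~\ref{lem:fasym-L1}. For completeness, however, I would give a short self-contained proof by direct case splitting, as this also clarifies that the mild regularity assumed on $\phi$ (local absolute continuity) is what is really being used.

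The steps I would execute, in order, are as follows. First, I would justify that $D$ is measurable and locally integrable on $\mathbb{R}$: since $\phi$ is locally absolutely continuous it is locally bounded, and the subtracted terms $\alpha_+ t\mathbf{1}_{\{t\ge 0\}}$ and $\alpha_- t\mathbf{1}_{\{t<0\}}$ are continuous, so $D$ is measurable and bounded on every compact interval. This guarantees $F_{\mathrm{asym}}(x)=\int_0^x D(t)\,dt$ is well-defined for every $x\in\mathbb{R}$ as a Lebesgue integral. Second, I would split into two cases on the sign of $x$. For $x\ge 0$, the triangle inequality gives
\[
|F_{\mathrm{asym}}(x)|\;\le\;\int_0^x |D(t)|\,dt\;\le\;\int_0^{\infty}|D(t)|\,dt,
\]
using the hypothesis that the right-hand integral is finite. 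For $x<0$, I would write $F_{\mathrm{asym}}(x)=-\int_x^0 D(t)\,dt$ and apply the triangle inequality analogously to obtain $|F_{\mathrm{asym}}(x)|\le \int_{-\infty}^0|D(t)|\,dt$. Third, I would take the supremum over $x\in\mathbb{R}$; since every $x$ falls into one of the two cases, the supremum is bounded by the larger of the two half-line integrals, and a fortiori by their sum, yielding the claimed estimate on $C(\phi)$.

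There is no genuine obstacle in this argument: the content is a routine triangle-inequality estimate on an indefinite integral, combined with the basic observation that an integrable function has a bounded antiderivative once one fixes the base point. The only point worth flagging is the measurability/local integrability of $D$, which I would ensure at the outset so that the integrals appearing in the case split are unambiguously defined; this is where the standing assumption that $\phi$ is locally absolutely continuous (from the definition of $F$) enters. Everything else, including the sharpness of the bound (both half-line integrals are attained in the limit $x\to\pm\infty$ when $D$ does not change sign), is immediate from the construction.
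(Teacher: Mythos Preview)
Your proposal is correct and follows essentially the same approach as the paper: split on the sign of $x$, apply the triangle inequality to bound $|F_{\mathrm{asym}}(x)|$ by the corresponding half-line $L^1$ norm of $D$, and then take the supremum. The paper's proof is just the two-line case split you describe, omitting the measurability discussion, so your version is slightly more detailed but otherwise identical.
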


\begin{proof}
For $x\ge 0$, $|F_{\mathrm{asym}}(x)|\le \int_0^\infty |D(t)|\,dt$.
For $x\le 0$, $|F_{\mathrm{asym}}(x)|\le \int_{-\infty}^0 |D(t)|\,dt$.
\end{proof}

\begin{lemma}[Weighted slope error with vanishing residual]\label{lem:fasym-weighted-strong}
Assume $\phi$ is locally absolutely continuous, $\alpha_\pm\in\mathbb{R}$ exist, and
$r_\pm(t):=\phi(t)-\alpha_\pm t \to 0$ as $t\to\pm\infty$.
If
\[
\int_0^\infty t\,|\phi'(t)-\alpha_+|\,dt < \infty
\quad\text{and}\quad
\int_{-\infty}^0 |t|\,|\phi'(t)-\alpha_-|\,dt < \infty,
\]
then $F_{\mathrm{asym}}$ is well-defined and
\[
C(\phi)\;\le\; \int_0^\infty t\,|\phi'(t)-\alpha_+|\,dt \;+\; \int_{-\infty}^0 |t|\,|\phi'(t)-\alpha_-|\,dt.
\]
\end{lemma}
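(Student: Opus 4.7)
The statement is essentially a restatement of Lemma~\ref{lem:fasym-weighted} with the stronger standing hypothesis of local absolute continuity, so the plan is to reuse the same strategy: split the compensated primitive into its $t\ge 0$ and $t\le 0$ pieces, represent the residual $r_\pm(t)$ as a tail integral of $\phi'-\alpha_\pm$, and swap the order of integration to produce the weight $t$ (or $|t|$) that appears on the right-hand side.

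The first step is to fix $x\ge 0$ and write
\[
F_{\mathrm{asym}}(x)=\int_0^x r_+(t)\,dt.
\]
Using local absolute continuity of $\phi$ together with the assumption $r_+(t)\to 0$ as $t\to+\infty$, I would invoke the fundamental theorem of calculus on each compact interval and pass to the limit to obtain
\[
r_+(t)= -\int_t^\infty \big(\phi'(s)-\alpha_+\big)\,ds,\qquad t\ge 0,
\]
with the tail integral absolutely convergent because $|\phi'(s)-\alpha_+|$ is integrable against the heavier weight $s$ and hence certainly integrable near $+\infty$ (more carefully, integrability of $s\,|\phi'(s)-\alpha_+|$ on $[1,\infty)$ implies integrability of $|\phi'(s)-\alpha_+|$ on $[1,\infty)$, and local integrability of $\phi'$ handles $[0,1]$).

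Next I would substitute this representation into $\int_0^x r_+(t)\,dt$, apply Tonelli to the absolute value, and swap the order of integration; the inner $t$-integral over $\{t:0\le t\le x,\; t\le s\}$ gives exactly $\min\{x,s\}$, so
\[
\Big|\!\int_0^x r_+(t)\,dt\Big|\le \int_0^\infty \min\{x,s\}\,|\phi'(s)-\alpha_+|\,ds\le \int_0^\infty s\,|\phi'(s)-\alpha_+|\,ds.
\]
Taking the supremum over $x\ge 0$ gives the positive-side bound; the case $x\le 0$ is identical after replacing $t$ with $|t|$ and using $r_-(t)\to 0$ as $t\to-\infty$, which yields $r_-(t)=\int_{-\infty}^{t}(\phi'(s)-\alpha_-)\,ds$. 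Summing the two one-sided bounds and taking $\sup_{x\in\mathbb{R}}$ delivers the claimed bound on $C(\phi)$, and finiteness of the right-hand side is exactly the hypothesis, which certifies that $F_{\mathrm{asym}}$ is well defined and bounded.

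No single step should be difficult in isolation, but the only place demanding care is the justification of the tail representation $r_+(t)=-\int_t^\infty(\phi'-\alpha_+)$: one must ensure that absolute continuity on every compact, the weighted-$L^1$ integrability, and $r_+(t)\to 0$ combine properly. This is the step I would write most carefully, using an $\varepsilon$-argument with compact exhaustion of $[t,\infty)$. Everything else is Tonelli and a monotone bound $\min\{x,s\}\le s$.
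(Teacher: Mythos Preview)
Your proposal is correct and follows essentially the same approach as the paper: represent $r_+(t)=-\int_t^\infty(\phi'(s)-\alpha_+)\,ds$ via absolute continuity and the vanishing residual, then apply Tonelli to obtain the $\min\{x,s\}$ factor and bound it by $s$. The paper's proof is in fact terser than yours, omitting the careful justification of the tail representation that you flag; your attention to that step is appropriate but not a departure in strategy.
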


\begin{proof}
For $t\ge 0$, $r_+(t)=-\int_t^\infty (\phi'(s)-\alpha_+)\,ds$ by absolute continuity and the limit $r_+(t)\to 0$.
Thus for $x\ge 0$,
\[
\Big|\int_0^x r_+(t)\,dt\Big|
\le \int_0^\infty |\phi'(s)-\alpha_+|\,\min\{x,s\}\,ds
\le \int_0^\infty s\,|\phi'(s)-\alpha_+|\,ds.
\]
The negative tail is identical with $|t|$ in place of $t$.
\end{proof}

\begin{remark}[On necessity]
Without a sign/monotonicity assumption, boundedness of $F_{\mathrm{asym}}$ does \emph{not} imply $D\in L^1$ on the tails (conditional convergence may occur). Lemma~\ref{lem:fasym-finiteness} is therefore stated as a \emph{sufficient} condition. If, however, $D$ has eventual constant sign on each tail, then $F_{\mathrm{asym}}$ bounded is equivalent to $D\in L^1$ on that tail.
\end{remark}

\begin{theorem}[Well-posedness of $\mathcal{S}_\sigma(\phi)$]\label{thm:well-posed}
Fix $\sigma>0$ and $Z\sim\mathcal{N}(0,\sigma^2)$.
Suppose $\phi$ is measurable with at most polynomial growth, a.e.\ differentiable, and $\phi'$ has at most polynomial growth.
Then the Gaussian statistics
\[
m_k(\sigma)=\mathbb{E}[\phi(Z)^k]\ (k=1,2),\quad
g_1(\sigma)=\mathbb{E}[\phi'(Z)],\quad
g_2(\sigma)=\big(\mathbb{E}[\phi'(Z)^2]\big)^{1/2},\quad
\eta(\sigma)=\mathbb{E}[Z\,\phi(Z)]
\]
are finite. If $\alpha_\pm\in\mathbb{R}$ exist and either Lemma~\ref{lem:fasym-finiteness} or Lemma~\ref{lem:fasym-weighted-strong} holds, then $C(\phi)<\infty$. If $\phi'\in BV(\mathbb{R})$ with finitely many slope jumps and $\phi''\in L^1$ on smooth pieces, then $\mathrm{TV}(\phi')<\infty$.
\end{theorem}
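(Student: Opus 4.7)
The statement splits into three essentially independent finiteness claims: (i) the five Gaussian functionals $m_1,m_2,g_1,g_2,\eta$; (ii) the tail-compensated primitive $C(\phi)$; and (iii) the total variation $\mathrm{TV}(\phi')$. I would handle them in that order, since the first is the only one requiring a genuine argument from first principles, while (ii) and (iii) follow by directly invoking the hypotheses.

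For (i), the key observation is that all Gaussian absolute moments $M_p:=\mathbb{E}|G|^p$ are finite for every $p\ge 0$ (e.g., via the closed form given in Lemma~\ref{lem:g2g4-growth}). By the polynomial-growth hypothesis on $\phi$, there exist $A,B\ge 0$ and $r\ge 0$ with $|\phi(x)|\le A+B|x|^r$, and similarly $|\phi'(x)|\le A'+B'|x|^{r'}$ for some exponents. Applying the elementary bound $(a+b)^k\le 2^{k-1}(a^k+b^k)$ and taking expectations with $Z=\sigma G$ yields
\[
\mathbb{E}|\phi(Z)|^k \;\le\; 2^{k-1}\bigl(A^k + B^k\sigma^{kr}M_{kr}\bigr) \;<\; \infty
\]
for $k=1,2$, establishing finiteness of $m_1,m_2$. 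The same template, applied to $\phi'$ with exponents $(A',B',r')$ and $k=1,2$, gives $g_1(\sigma),g_2(\sigma)<\infty$; in fact Lemma~\ref{lem:g2g4-growth} already records the explicit bound. For $\eta(\sigma)=\mathbb{E}[Z\,\phi(Z)]$, I would apply Cauchy--Schwarz:
\[
|\eta(\sigma)| \;\le\; \bigl(\mathbb{E}[Z^2]\bigr)^{1/2}\bigl(\mathbb{E}[\phi(Z)^2]\bigr)^{1/2} \;=\; \sigma\,m_2(\sigma)^{1/2},
\]
which is finite by the bound on $m_2$ just obtained. (Equivalently one could use the integration-by-parts identity $\eta=\sigma^2 g_1$ from Lemma~\ref{lem:gauss-ibp} together with finiteness of $g_1$.)

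For (ii), the hypothesis explicitly assumes that either Lemma~\ref{lem:fasym-finiteness} or Lemma~\ref{lem:fasym-weighted-strong} holds; each of those lemmas concludes with an explicit finite upper bound on $C(\phi)=\sup_x|F_{\mathrm{asym}}(x)|$, so I would simply cite whichever hypothesis is invoked and read off $C(\phi)<\infty$. For (iii), the definition gives $\mathrm{TV}(\phi')=\|\phi''\|_{\mathcal M}(\mathbb R)$, which for a piecewise $C^1$ slope decomposes as the sum of $\int|\phi''|$ over the finitely many smooth pieces plus the finite sum of jump magnitudes at the kinks; both terms are finite by hypothesis, hence so is their sum.

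\textbf{Expected obstacle.} There is no deep obstacle here; the theorem is a bookkeeping consolidation of the preliminaries. The only subtlety worth flagging is making sure the polynomial-growth hypothesis on $\phi'$ is actually used (not just implied), since the statement lists this separately from growth of $\phi$; in particular, $\phi$ having polynomial growth does not by itself control $\phi'$ pointwise (e.g., a polynomial-growth function can have wildly oscillating derivative). Invoking the hypothesis directly via the bound on $|\phi'(x)|$ avoids this gap cleanly. The one other place I would be careful is that $\eta$ requires $Z\,\phi(Z)\in L^1$, and routing through Cauchy--Schwarz or through Lemma~\ref{lem:gauss-ibp} (whose hypotheses are satisfied under the standing assumptions) is the cleanest way to ensure this without appealing to stronger moment bounds than necessary.
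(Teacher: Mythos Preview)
Your proposal is correct and takes essentially the same approach as the paper: polynomial growth implies $\phi,\phi'\in L^p(\gamma_\sigma)$ for all finite $p$ (you spell out the explicit moment bound, the paper just cites the $L^p$ membership), and parts (ii) and (iii) are handled identically by direct citation of the lemmas and the BV decomposition. The only cosmetic difference is that the paper routes $\eta$ through the identity $\eta=\sigma^2 g_1$ from Lemma~\ref{lem:gauss-ibp}, whereas you lead with Cauchy--Schwarz and mention the integration-by-parts route as an alternative; both are fine.
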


\begin{proof}
\textit{Gaussian finiteness.}
Polynomial growth gives $\phi(\cdot),\phi'(\cdot)\in L^p(\gamma_\sigma)$ for all finite $p$; hence $m_1,m_2,g_1,g_2,\eta$ are finite (the identity $\eta=\sigma^2 g_1$ follows from Gaussian integration by parts).  
\textit{$C(\phi)$.} By either Lemma~\ref{lem:fasym-finiteness} (residual $L^1$ tails) or Lemma~\ref{lem:fasym-weighted-strong} (weighted slope error with vanishing residual), $F_{\mathrm{asym}}$ is bounded, so $C(\phi)<\infty$.  
\textit{$\mathrm{TV}(\phi')$.} On each smooth piece, the variation equals $\int |\phi''|$; add the finitely many jump magnitudes at the kinks to obtain $\mathrm{TV}(\phi')<\infty$.
\end{proof}

\begin{remark}[Alternative hypotheses]
If you prefer to avoid the global “polynomial growth” standing assumption, the conclusions for $m_1,m_2,g_1,g_2,\eta$ also follow from the simpler bounds:
\(
|\phi(x)|\le C(1+|x|^r),\; |\phi'(x)|\le A+B|x|^s
\)
for some $r,s\ge 0$, using finiteness of Gaussian moments (cf.\ Lemma~\ref{lem:g2g4-growth}).
\end{remark}

\subsection{Affine reparameterization with bias}

We handle the general reparameterization
\[
\tilde{\phi}(x):=c\,\phi(ax+b)+d,\qquad a,c\neq 0,\; b,d\in\mathbb{R}.
\]
All statistics are taken under the same input law $Z\sim\mathcal{N}(0,\sigma^2)$.
Let $Y:=aZ+b\sim\mathcal{N}(b,a^2\sigma^2)$.

\begin{theorem}[Affine reparameterization laws (with bias)]\label{thm:affine-bias}
For $\tilde{\phi}(x)=c\,\phi(ax+b)+d$ and $Z\sim\mathcal{N}(0,\sigma^2)$, $Y=aZ+b$, we have
\begin{align*}
\tilde m_1(\sigma)&=\mathbb{E}[\tilde\phi(Z)]
= c\,\mathbb{E}[\phi(Y)]+d,\\[2mm]
\tilde m_2(\sigma)&=\mathbb{E}[\tilde\phi(Z)^2]
= c^2\,\mathbb{E}[\phi(Y)^2]+2cd\,\mathbb{E}[\phi(Y)]+d^2,\\[2mm]
\tilde g_1(\sigma)&=\mathbb{E}[\tilde\phi'(Z)]
= c\,a\,\mathbb{E}[\phi'(Y)]
= \frac{c}{a\,\sigma^2}\,\mathbb{E}\!\big[\phi(Y)\,(Y-b)\big],\\[2mm]
\tilde g_2(\sigma)&=\Big(\mathbb{E}[\tilde\phi'(Z)^2]\Big)^{1/2}
= |c\,a|\,\Big(\mathbb{E}[\phi'(Y)^2]\Big)^{1/2},\\[2mm]
\tilde \eta(\sigma)&=\mathbb{E}[\tilde\phi(Z)\,Z]
= \frac{c}{a}\,\mathbb{E}\!\big[\phi(Y)\,(Y-b)\big]
= \sigma^2\,\tilde g_1(\sigma),\\[2mm]
\tilde{\mathrm{TV}}&:=\mathrm{TV}(\tilde\phi') = |c\,a|\,\mathrm{TV}(\phi').
\end{align*}
If the one-sided linear slopes $\alpha_\pm=\lim_{x\to\pm\infty}\phi(x)/x$ exist (finite), then
\[
\tilde\alpha_+ =
\begin{cases}
c\,a\,\alpha_+,& a>0,\\
c\,a\,\alpha_-,& a<0,
\end{cases}
\qquad
\tilde\alpha_- =
\begin{cases}
c\,a\,\alpha_-,& a>0,\\
c\,a\,\alpha_+,& a<0.
\end{cases}
\]
Moreover, for the tail-compensated primitive $C(\cdot)$:
\begin{itemize}
\item[(i)] If $d=0$, then $C(\tilde\phi)\le \frac{|c|}{|a|}\,C(\phi)+\Delta(\phi,a,b)$ with a finite offset
$\Delta(\phi,a,b)$ depending only on $(\alpha_\pm,\phi|_{[-|b|/|a|-1,\,|b|/|a|+1]})$. In particular,
$C(\phi)<\infty\Rightarrow C(\tilde\phi)<\infty$.
\item[(ii)] If $d\neq 0$, then $C(\tilde\phi)=\infty$ (a nonzero constant shift produces linear growth in the primitive).
\end{itemize}
\end{theorem}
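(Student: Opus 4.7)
The plan is to handle the list of identities in three groups: pure algebra from linearity, chain-rule + Gaussian IBP, and then the delicate $C(\tilde\phi)$ analysis.

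First I would dispatch the moment and slope-statistics identities by direct computation. Since $\tilde\phi(Z)=c\,\phi(Y)+d$ with $Y=aZ+b\sim\mathcal{N}(b,a^2\sigma^2)$, linearity of expectation gives $\tilde m_1$ immediately, and expanding the square $(c\phi(Y)+d)^2$ yields $\tilde m_2$. Differentiating $\tilde\phi(x)=c\,\phi(ax+b)+d$ via the chain rule produces $\tilde\phi'(x)=c a\,\phi'(ax+b)$, so $\tilde g_1=ca\,\mathbb{E}[\phi'(Y)]$ and $\tilde g_2=|ca|\,(\mathbb{E}[\phi'(Y)^2])^{1/2}$. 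The alternative form $\tilde g_1=(c/(a\sigma^2))\mathbb{E}[\phi(Y)(Y-b)]$ follows by applying Gaussian IBP to $Y\sim\mathcal{N}(b,a^2\sigma^2)$: Stein's identity gives $\mathbb{E}[(Y-b)\phi(Y)]=a^2\sigma^2\,\mathbb{E}[\phi'(Y)]$. For $\tilde\eta$, $\mathbb{E}[\tilde\phi(Z)Z]=c\,\mathbb{E}[\phi(Y)Z]+d\,\mathbb{E}[Z]$; using $Z=(Y-b)/a$ and $\mathbb{E}[Z]=0$ collapses this to $(c/a)\mathbb{E}[\phi(Y)(Y-b)]$, and comparison with the second form of $\tilde g_1$ gives the promised $\tilde\eta=\sigma^2\tilde g_1$ (also a direct application of Lemma~\ref{lem:gauss-ibp} to $\tilde\phi$ itself). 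The TV identity is the chain rule for distributional second derivatives combined with the well-known affine-invariance of total variation: $\mathrm{TV}(\phi'(a\cdot +b))=\mathrm{TV}(\phi')$ as functions of the real line, and the factor $|ca|$ comes from the outer constant in $\tilde\phi'=ca\,\phi'(a\cdot+b)$.

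Next I would handle the asymptotic slopes. Write $\tilde\phi(x)/x = c\,\phi(ax+b)/x + d/x$ and factor $\phi(ax+b)/x = \big(\phi(ax+b)/(ax+b)\big)\cdot\big((ax+b)/x\big)$. As $x\to+\infty$ the second factor tends to $a$, and the first factor tends to $\alpha_+$ if $a>0$ (so $ax+b\to+\infty$) and to $\alpha_-$ if $a<0$ (so $ax+b\to-\infty$); the $d/x$ term vanishes. The four cases in the statement follow, together with the mirror-image analysis as $x\to-\infty$.

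The only substantive work is $C(\tilde\phi)$. For part (ii), if $d\neq 0$ then the integrand defining $\tilde F_{\mathrm{asym}}$ contains $d\,\mathbf{1}_{\{t\in\mathbb{R}\}}$, which is not cancelled by the linear tail compensation (that only subtracts $\tilde\alpha_\pm t$); integrating yields an unbounded affine term $d\,x$, so $C(\tilde\phi)=\infty$. For part (i) with $d=0$, I would assume $a>0$ (the $a<0$ case is symmetric using the slope-swap from the previous paragraph) and substitute $u=at+b$ in
\[
\tilde F_{\mathrm{asym}}(x)=\int_0^x\!\Big(c\,\phi(at+b)-\tilde\alpha_+ t\,\mathbf{1}_{t\ge 0}-\tilde\alpha_- t\,\mathbf{1}_{t<0}\Big)\,dt,
\]
using $\tilde\alpha_\pm = ca\,\alpha_\pm$. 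The substitution produces $(c/a)\int_{b}^{ax+b}\!\big(\phi(u)-\alpha_+(u-b)\big)du$ on the tail where $t\ge 0$ and a symmetric expression on the other tail. Splitting $\alpha_+(u-b)=\alpha_+ u - \alpha_+ b$ identifies the bulk piece with $F_{\mathrm{asym}}^{\phi}(ax+b)-F_{\mathrm{asym}}^{\phi}(b)$ plus an explicit linear-in-$u$ correction that, after the finite substitution bounds, is absorbed into a constant $\Delta(\phi,a,b)$ depending only on $\alpha_\pm$ and the values of $\phi$ near the transition region determined by $b$. The hard part is bookkeeping the crossing of $0$ in the $u$-variable when $b$ and $ax+b$ lie on opposite sides (so the $\mathbf{1}_{t\ge 0}/\mathbf{1}_{t<0}$ split does not align with the $\mathbf{1}_{u\ge 0}/\mathbf{1}_{u<0}$ split used to define $F_{\mathrm{asym}}^{\phi}$); this contributes only a compact-interval correction since the regions disagree on $[\min(0,b),\max(0,b)]$. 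Taking supremum over $x$ and using Theorem~\ref{thm:well-posed} on the boundary remainder gives $C(\tilde\phi)\le (|c|/|a|)C(\phi)+\Delta$, finite whenever $C(\phi)$ is.
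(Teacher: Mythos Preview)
Your treatment of the moment identities, the chain-rule computation of $\tilde\phi'$, Stein's identity for the shifted Gaussian, the $\tilde\eta=\sigma^2\tilde g_1$ relation, the $\mathrm{TV}$ scaling, and the asymptotic-slope analysis all match the paper's proof essentially line for line, and those parts are correct.

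There is, however, a genuine gap in your handling of part~(i) for $C(\tilde\phi)$---and the paper's own proof shares exactly the same gap. After the substitution $u=at+b$ you correctly obtain (on the $t\ge 0$ tail, $a>0$) the integrand $(c/a)\big(\phi(u)-\alpha_+(u-b)\big)$. Splitting $\alpha_+(u-b)=\alpha_+ u-\alpha_+ b$ leaves the residual $D(u)=\phi(u)-\alpha_+ u$ \emph{plus the constant} $\alpha_+ b$. That constant is not a ``linear-in-$u$ correction'' that can be absorbed: integrating it over $[b,ax+b]$ contributes $(c/a)\,\alpha_+ b\cdot ax=c\,\alpha_+ b\,x$, which grows linearly in $x$ and is unbounded whenever $\alpha_+ b\ne 0$. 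The sign-crossing bookkeeping you describe handles only the compact-interval mismatch between $\mathrm{sgn}(t)$ and $\mathrm{sgn}(at+b)$ and does nothing for this term. (The paper's decomposition of $\tilde D(x)$ simply drops the additive $c\,\alpha_{\mathrm{sgn}(x)}b$ piece, which is why its argument appears to close.)

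A concrete counterexample: take $\phi=\mathrm{ReLU}$, $a=c=1$, $b=1$, $d=0$, so $\tilde\phi(x)=\max(0,x+1)$. Then $\tilde\alpha_+=1$, $\tilde\alpha_-=0$, and for $t\ge 0$ the integrand defining $\tilde F_{\mathrm{asym}}$ is $(t+1)-t=1$, whence $\tilde F_{\mathrm{asym}}(x)=x$ and $C(\tilde\phi)=\infty$ despite $C(\phi)=0$. Thus the inequality $C(\tilde\phi)\le(|c|/|a|)C(\phi)+\Delta(\phi,a,b)$ with finite $\Delta$ fails in general when $b\ne 0$; it holds as stated only under extra hypotheses such as $b=0$ (cf.\ the paper's own remark that $C(\tilde\phi)=(|c|/|a|)C(\phi)$ exactly when $b=d=0$) or $\alpha_\pm=0$.
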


\begin{proof}
The formulas for $\tilde m_1,\tilde m_2$ follow by linearity.
By the chain rule, $\tilde\phi'(x)=c\,a\,\phi'(ax+b)$ a.e., giving $\tilde g_1,\tilde g_2$.
For the Stein identity with nonzero mean, if $Y\sim\mathcal{N}(b,a^2\sigma^2)$ then
$\mathbb{E}[\phi'(Y)]=\frac{1}{a^2\sigma^2}\mathbb{E}[(Y-b)\phi(Y)]$, which yields both
the alternate expression for $\tilde g_1$ and $\tilde\eta=\sigma^2\tilde g_1$.
The $\mathrm{TV}$ scaling is the standard change-of-variables for the distributional second derivative:
on smooth pieces, $\int |\tilde\phi''(x)|dx=|c||a|\int |\phi''(u)|du$, and jump magnitudes of $\phi'$ scale by $|c\,a|$.

For $\tilde\alpha_\pm$, note
$\lim_{x\to+\infty}\tilde\phi(x)/x
= c\,\lim_{x\to+\infty}\phi(ax+b)/(x)
= c\,a\,\lim_{x\to+\infty}\phi(ax+b)/(ax+b)$, where the one-sided limit uses the sign of $a$.

For $C(\cdot)$, write (with $d=0$ here)
\[
\tilde D(x):=\tilde\phi(x)-\tilde\alpha_+ x\,\mathbf{1}_{\{x\ge0\}}
-\tilde\alpha_- x\,\mathbf{1}_{\{x<0\}}
= c\Big(\phi(ax+b)-\alpha_{\mathrm{sgn}(ax+b)}(ax+b)\Big)+c\big(\alpha_{\mathrm{sgn}(ax+b)}-\alpha_{\mathrm{sgn}(x)}\big)(ax+b).
\]
The first bracket is the residual $D$ for $\phi$ evaluated at $u=ax+b$; the second bracket is
nonzero only on the compact set where $\mathrm{sgn}(x)\neq \mathrm{sgn}(ax+b)$, which is contained in
$\{x:|x+b/a|\le 1/|a|\}$, and thus contributes a bounded additive constant to the primitive.
Making the substitution $u=ax+b$ in $\int_0^x \tilde D(t)dt$ yields the scaling factor $|c|/|a|$ for the first term and a bounded
offset $\Delta(\phi,a,b)$ for the second. Hence $C(\tilde\phi)\le \frac{|c|}{|a|}C(\phi)+\Delta(\phi,a,b)$.
If $d\ne 0$, the residual includes an additive constant $d$, whose primitive grows linearly; thus $C(\tilde\phi)=\infty$.
\end{proof}

\begin{corollary}[Stability of taxonomy coordinates]\label{cor:affine-invariance}
At fixed input scale $\sigma$, the Gaussian coordinates $(m_1,g_1,g_2,m_2,\eta)$
transform according to Theorem~\ref{thm:affine-bias} and remain finite whenever the corresponding
quantities for $\phi$ under $Y\sim\mathcal{N}(b,a^2\sigma^2)$ are finite (e.g., under polynomial growth).
If $\alpha_\pm$ exist (finite), then $\tilde\alpha_\pm$ exist (finite) as above, and
$\mathrm{TV}(\phi')<\infty\Rightarrow \mathrm{TV}(\tilde\phi')<\infty$.
For the compensated primitive, $C(\phi)<\infty$ implies $C(\tilde\phi)<\infty$ when $d=0$; 
a nonzero constant shift $d$ makes $C(\tilde\phi)=\infty$.
\end{corollary}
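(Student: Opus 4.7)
The plan is to read off all six assertions directly from the explicit transformation laws of Theorem~\ref{thm:affine-bias}; the only remaining work is to verify that \emph{finiteness on the $\phi$-side propagates to finiteness on the $\tilde\phi$-side}, and to pin down the divergence $C(\tilde\phi)=\infty$ when $d\neq 0$.

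First, for the Gaussian coordinates, I would set $Y:=aZ+b\sim\mathcal{N}(b,a^2\sigma^2)$ and observe that Theorem~\ref{thm:affine-bias} writes each of $\tilde m_1,\tilde m_2,\tilde g_1,\tilde g_2,\tilde\eta$ as a fixed linear or quadratic combination of the moments $\mathbb{E}[\phi(Y)^k]$, $\mathbb{E}[\phi'(Y)^k]$, and $\mathbb{E}[(Y-b)\phi(Y)]$. Finiteness of these $Y$-moments is exactly the hypothesis; equivalently, under the standing polynomial-growth assumption of Section~\ref{sec:prelim} (concretely via Lemma~\ref{lem:g2g4-growth}), all such Gaussian moments exist at arbitrary mean $b$ and scale $|a|\sigma$, so finiteness on the $\tilde\phi$-side is immediate.

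Next I would dispatch the asymptotic-slope and slope-variation claims. The slope formulas of Theorem~\ref{thm:affine-bias} exhibit $\tilde\alpha_\pm$ as $ca\,\alpha_\pm$ or $ca\,\alpha_\mp$ according to $\mathrm{sgn}(a)$; the underlying one-sided limits exist because $x\mapsto ax+b$ is monotone on each tail, so the composition inherits the existence of the slope limits for $\phi$, and the quoted identities are just products of finite numbers. For total variation, the scaling law $\mathrm{TV}(\tilde\phi')=|ca|\,\mathrm{TV}(\phi')$ is likewise a finite multiple of a finite quantity.

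Finally, the compensated-primitive claim splits on $d$. For $d=0$, the bound $C(\tilde\phi)\le \tfrac{|c|}{|a|}\,C(\phi)+\Delta(\phi,a,b)$ from Theorem~\ref{thm:affine-bias} displays $C(\tilde\phi)$ as a finite multiple of $C(\phi)$ plus a bounded offset arising from the compact sign-mismatch region $\{x:\mathrm{sgn}(x)\neq\mathrm{sgn}(ax+b)\}$; both summands are finite by hypothesis. The main obstacle lies in the $d\neq 0$ case, where I must argue that the constant $d$ \emph{survives} tail-compensation. Since $\tilde\alpha_\pm=\lim_{x\to\pm\infty}\tilde\phi(x)/x$ is unaffected by an additive constant, the residual $\tilde D(x)=\tilde\phi(x)-\tilde\alpha_+ x\,\mathbf{1}_{\{x\ge 0\}}-\tilde\alpha_- x\,\mathbf{1}_{\{x<0\}}$ retains an extra $+d$ that does \emph{not} vanish at infinity; integrating from $0$ to $x$ produces a linear contribution $d\,x$, which is unbounded. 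This is exactly the mechanism flagged in the remark after Lemma~\ref{lem:fasym-weighted}, and forces $C(\tilde\phi)=\infty$, completing the corollary.
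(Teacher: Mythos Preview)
Your proposal is correct and mirrors exactly what the paper does: the corollary is stated without its own proof because every clause is read off directly from the transformation laws in Theorem~\ref{thm:affine-bias}, and you have faithfully reproduced that reading (including the $d\neq 0$ case, where the paper's proof of Theorem~\ref{thm:affine-bias} likewise observes that the additive constant $d$ survives in the residual and forces linear growth of the compensated primitive). There is nothing to add.
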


\begin{remark}[Two useful special cases]
(i) If $b=d=0$ then the compensated primitive scales \emph{exactly}:
$C(\tilde\phi)=\frac{|c|}{|a|}\,C(\phi)$.
(ii) If $a>0$ and $b=0$, the slope mapping reduces to $\tilde\alpha_\pm=c\,a\,\alpha_\pm$.
\end{remark}

\subsection{Closure under limits with bounded slope variation}

\begin{theorem}[Closure]\label{thm:closure}
Let $(\phi_n)$ be locally absolutely continuous activations with
\[
\sup_n \mathrm{TV}(\phi_n')<\infty,
\qquad
|\phi_n(x)|\le C(1+|x|^{r}),
\qquad
|\phi_n'(x)|\le C(1+|x|^{s})
\quad \text{for all }x\in\mathbb{R},\ n\in\mathbb{N},
\]
for some $C,r,s\ge 0$ independent of $n$. Assume $\phi_n\to\phi$ in $L^2_{\mathrm{loc}}(\mathbb{R})$.
Then:
\begin{enumerate}
\item \textbf{BV-compactness of the slope and local AC of the limit.}
There exists (necessarily unique a.e.) $\phi'\in L^1_{\mathrm{loc}}(\mathbb{R})$ such that,
up to a subsequence, $\phi_n'\to\phi'$ in $L^1_{\mathrm{loc}}(\mathbb{R})$ and pointwise a.e., and
$\phi$ is locally absolutely continuous with distributional derivative $\phi'$. Moreover,
\[
\mathrm{TV}(\phi') \;\le\; \liminf_{n\to\infty} \mathrm{TV}(\phi_n').
\]
\item \textbf{Convergence of Gaussian signatures at fixed scale.}
For any fixed $\sigma>0$ and $Z\sim\mathcal{N}(0,\sigma^2)$,
\[
m_k^{(n)}(\sigma)=\mathbb{E}[\phi_n(Z)^k]\ \to\ m_k(\sigma),\quad k=1,2;\qquad
g_1^{(n)}(\sigma)=\mathbb{E}[\phi_n'(Z)]\ \to\ g_1(\sigma),
\]
\[
g_2^{(n)}(\sigma)=\Big(\mathbb{E}[\phi_n'(Z)^2]\Big)^{1/2}\ \to\ g_2(\sigma),\qquad
\eta^{(n)}(\sigma)=\mathbb{E}[Z\,\phi_n(Z)]\ \to\ \eta(\sigma).
\]
\item \textbf{Stability of the compensated primitive.}
If $\alpha_{\pm}(\phi_n)\to\alpha_\pm\in\mathbb{R}$ and $\sup_n C(\phi_n)<\infty$, then
$C(\phi)<\infty$ and the compensated primitives satisfy
\[
F_{\mathrm{asym},n}\ \to\ F_{\mathrm{asym}}
\quad\text{locally uniformly on }\mathbb{R},
\]
hence $C(\phi)\le \sup_n C(\phi_n)$.
\end{enumerate}
\end{theorem}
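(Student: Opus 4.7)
The plan is to treat the three parts in sequence, using Helly's selection theorem for part (1), dominated convergence under Gaussian measure for part (2), and an Arzelà--Ascoli-type equicontinuity argument for part (3). Uniqueness of the a.e.\ limit will be the thread that lets me upgrade subsequential convergence to full convergence throughout.

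For part (1), the hypothesis $\sup_n \mathrm{TV}(\phi_n')<\infty$ together with local boundedness (implied by $|\phi_n'(x)|\le C(1+|x|^s)$) puts $(\phi_n')$ into the setting of Helly's theorem on each compact $[-R,R]$. A standard diagonal extraction gives a subsequence $\phi_{n_k}'$ converging pointwise a.e.\ on $\mathbb{R}$ to some $g\in L^1_{\mathrm{loc}}$; the polynomial bound serves as a local dominator, so this upgrades to $L^1_{\mathrm{loc}}$ convergence. To identify $g=\phi'$, I write $\phi_{n_k}(x)-\phi_{n_k}(a)=\int_a^x \phi_{n_k}'(t)\,dt$ by local AC and pass to the limit: the left side converges (along a further subsequence at a.e.\ $x$) from $L^2_{\mathrm{loc}}$ convergence, and the right side converges from $L^1_{\mathrm{loc}}$ convergence. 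This shows $\phi$ is locally AC with distributional derivative $g$. The inequality $\mathrm{TV}(\phi')\le\liminf_n \mathrm{TV}(\phi_n')$ then follows from the dual characterization $\mathrm{TV}(h)=\sup\{\int h\,\varphi':\varphi\in C_c^1,\ \|\varphi\|_\infty\le 1\}$ applied to the $L^1_{\mathrm{loc}}$-convergent sequence.

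For part (2), fix $\sigma>0$ and $Z\sim\mathcal{N}(0,\sigma^2)$. The uniform bounds $|\phi_n(x)|^k\le C^k(1+|x|^r)^k$ and $|\phi_n'(x)|^2\le C^2(1+|x|^s)^2$ produce Gaussian dominators in $L^1(\gamma_\sigma)$ independent of $n$. Combined with the pointwise a.e.\ convergence from part (1) and from the $L^2_{\mathrm{loc}}$ hypothesis (along a subsequence), dominated convergence delivers convergence of $m_k^{(n)},g_1^{(n)},g_2^{(n)},\eta^{(n)}$ along that subsequence. Because $\phi$ is uniquely determined a.e.\ by the $L^2_{\mathrm{loc}}$ hypothesis and $\phi'$ is uniquely determined as its distributional derivative, every subsequence of the original sequence admits a further subsequence along which the signatures converge to the \emph{same} limits; the standard subsequence principle then extends convergence to the full sequence.

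For part (3), the compensated integrand $D_n(t):=\phi_n(t)-\alpha_+^{(n)}t\mathbf{1}_{\{t\ge 0\}}-\alpha_-^{(n)}t\mathbf{1}_{\{t<0\}}$ converges pointwise a.e.\ to $D(t)$ under $\alpha_\pm^{(n)}\to\alpha_\pm$ and the convergence of $\phi_n$, and it admits a uniform local dominator $C(1+|t|^r)+(\sup_n|\alpha_+^{(n)}|+\sup_n|\alpha_-^{(n)}|)|t|$. Hence $F_{\mathrm{asym},n}(x)\to F_{\mathrm{asym}}(x)$ pointwise by local dominated convergence. The same uniform local bound makes $\{F_{\mathrm{asym},n}\}$ equi-Lipschitz on any compact set, so Arzelà--Ascoli upgrades pointwise to locally uniform convergence along a subsequence; uniqueness of the limit extends it to the full sequence. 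Finally, from $|F_{\mathrm{asym},n}(x)|\le C(\phi_n)\le \sup_n C(\phi_n)$ and the pointwise convergence, $|F_{\mathrm{asym}}(x)|\le \sup_n C(\phi_n)$ uniformly in $x$, giving $C(\phi)\le \sup_n C(\phi_n)$. The main obstacle I expect is the clean identification of the a.e.\ limit of $\phi_n'$ with the distributional derivative of $\phi$: the interchange of limits in the fundamental-theorem-of-calculus relation is the linchpin on which both the local AC of $\phi$ and all subsequent convergence and semicontinuity statements depend.
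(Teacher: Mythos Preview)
Your proposal is correct and follows essentially the same approach as the paper: Helly/diagonal extraction and the fundamental-theorem-of-calculus identification for part (1), Gaussian dominators plus dominated convergence for part (2), and local convergence of $D_n$ passed to the primitives for part (3). The only minor tactical difference is in part (3), where the paper avoids Arzel\`a--Ascoli by using the $L^2_{\mathrm{loc}}$ hypothesis directly via Cauchy--Schwarz, $\bigl|\int_0^x (D_n-D)\bigr|\le \sqrt{2R}\,\|D_n-D\|_{L^2([-R,R])}$, which yields locally uniform convergence for the full sequence in one line; your equi-Lipschitz route works equally well but is slightly more circuitous.
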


\begin{proof}
(1) \emph{BV compactness and AC.}
On every compact interval $I=[-R,R]$, the uniform $BV$ bound
$\sup_n \mathrm{TV}(\phi_n'|_I)<\infty$ and the growth bound
$|\phi_n'(x)|\le C(1+|x|^s)$ imply a uniform $L^1(I)$ bound:
$\int_I |\phi_n'|\le |I|\,C(1+R^s)+\mathrm{TV}(\phi_n'|_I)$.
By the compact embedding $BV(I)\hookrightarrow L^1(I)$ (Helly’s selection),
there is a subsequence (diagonal over $R\to\infty$) with
$\phi_n'\to \phi'$ in $L^1_{\mathrm{loc}}$ and a.e.
Since $\phi_n\to\phi$ in $L^2_{\mathrm{loc}}$, passing to a further subsequence yields a.e.\ convergence of $\phi_n$ to a representative of $\phi$.
For any $x<y$ in a compact interval,
\[
\phi_n(y)-\phi_n(x)=\int_x^y \phi_n'(t)\,dt.
\]
Letting $n\to\infty$ and using $L^1_{\mathrm{loc}}$ convergence of $\phi_n'$ gives
$\phi(y)-\phi(x)=\int_x^y \phi'(t)\,dt$. Hence $\phi$ is locally absolutely continuous with a.e.\ derivative $\phi'$.
Lower semicontinuity of total variation under $L^1_{\mathrm{loc}}$ convergence gives
$\mathrm{TV}(\phi') \le \liminf_n \mathrm{TV}(\phi_n')$.

(2) \emph{Gaussian signatures.}
Fix $\sigma>0$. The uniform polynomial growth of $\phi_n,\phi_n'$ implies the existence of an integrable dominator under $\gamma_\sigma$:
\[
|\phi_n(x)|^k \le C_k\,(1+|x|^{kr})\in L^1(\gamma_\sigma),
\qquad
|\phi_n'(x)|^p \le C_p\,(1+|x|^{ps})\in L^1(\gamma_\sigma),
\]
for $k\in\{1,2\}$ and $p\in\{1,2\}$, uniformly in $n$. On each compact interval, $\phi_n\to\phi$ in $L^2$ hence pointwise a.e.\ along a subsequence; combined with the dominator under $\gamma_\sigma$, dominated convergence yields
$m_k^{(n)}(\sigma)\to m_k(\sigma)$, $g_1^{(n)}(\sigma)\to g_1(\sigma)$, and
$\mathbb{E}[\phi_n'(Z)^2]\to \mathbb{E}[\phi'(Z)^2]$, whence $g_2^{(n)}(\sigma)\to g_2(\sigma)$ by continuity of the square root.
Similarly, $|Z\phi_n(Z)|\le C'(1+|Z|^{1+r})$ has an integrable dominator under $\gamma_\sigma$, so $\eta^{(n)}(\sigma)\to \eta(\sigma)$.

(3) \emph{Compensated primitives.}
Set $D_n(t):=\phi_n(t)-\alpha_{+,n} t\,\mathbf{1}_{\{t\ge 0\}}-\alpha_{-,n} t\,\mathbf{1}_{\{t<0\}}$ and
$D$ the analogous quantity for $\phi,\alpha_\pm$. Since $\phi_n\to\phi$ in $L^2_{\mathrm{loc}}$ and $\alpha_{\pm,n}\to\alpha_\pm$, we have $D_n\to D$ in $L^2_{\mathrm{loc}}$ and hence in $L^1_{\mathrm{loc}}$ (finite measure sets).
For any $R>0$ and $x\in[-R,R]$,
\[
\big|F_{\mathrm{asym},n}(x)-F_{\mathrm{asym}}(x)\big|
=\Big|\int_0^x (D_n-D)(t)\,dt\Big|
\le \sqrt{2R}\,\|D_n-D\|_{L^2([-R,R])}\ \xrightarrow[n\to\infty]{}\ 0,
\]
so $F_{\mathrm{asym},n}\to F_{\mathrm{asym}}$ locally uniformly.
Moreover, $\sup_n C(\phi_n)<\infty$ implies $|F_{\mathrm{asym},n}(x)|\le M$ for all $x$ and $n$; by pointwise convergence at each fixed $x$ (choose $R>|x|$), we get $|F_{\mathrm{asym}}(x)|\le M$ for all $x$, hence $C(\phi)\le M$.
\end{proof}

\subsection{Motivation for the Nine Components}

The nine coordinates of $\mathcal{S}_\sigma(\phi)$ are chosen to jointly capture
\emph{statistical propagation, asymptotic geometry, and regularity} at a fixed input scale $\sigma>0$.

\begin{itemize}
    \item \textbf{Propagation moments $(m_1,m_2)$.}
    $m_1(\sigma)$ controls mean centering; $m_2(\sigma)$ governs the mean-field variance recursion
    $q_{\ell+1}=\sigma_W^2\,m_2(\sqrt{q_\ell})+\sigma_b^2$.
    Local variance stability at an equilibrium $q^\star$ depends on the \emph{slope} of $m_2$:
    $|f'(q^\star)|=\sigma_W^2\,m_2'(\sqrt{q^\star})/(2\sqrt{q^\star})<1$
    (Theorem~\ref{thm:variance-rec}, Theorem~\ref{thm:criticality}).

    \item \textbf{Derivative gains $(g_1,g_2)$.}
    $g_1(\sigma)=\mathbb{E}[\phi'(Z)]$ and $g_2(\sigma)=(\mathbb{E}[\phi'(Z)^2])^{1/2}$ quantify
    average and RMS layerwise gain. At stationarity, mean-square perturbations contract if
    $\sigma_W\,g_2(\sqrt{q^\star})<1$ (Theorem~\ref{thm:criticality}). Moreover, by Gaussian
    integration by parts, $\eta(\sigma)=\sigma^2 g_1(\sigma)$ (Lemma~\ref{lem:gauss-ibp}),
    so $g_1$ also mediates $F$-based Lyapunov constructions.

    \item \textbf{Input alignment \(\eta\).}
    $\eta(\sigma)=\mathbb{E}[Z\,\phi(Z)]$ measures linear alignment of outputs with inputs.
    It is redundant with $g_1$ via $\eta=\sigma^2 g_1$, but we keep it for interpretability
    and because several Lyapunov and drift bounds are most naturally expressed in terms of~$\eta$.

    \item \textbf{Asymptotic slopes $(\alpha_+,\alpha_-)$.}
    The one-sided linear asymptotes encode tail geometry (e.g., ReLU vs.\ leaky-ReLU asymmetry vs.\ saturation).
    They control the \emph{leading term} of the bias drift:
    $\mathbb{E}[\phi(Z)]=\frac{\alpha_+-\alpha_-}{\sqrt{2\pi}}\,\sigma + O(C(\phi)/\sigma)$
    (Theorem~\ref{thm:bias-drift}).

    \item \textbf{Slope variation $\mathrm{TV}(\phi')$.}
    The total variation of the slope quantifies how curvature is distributed (kinks vs.\ smooth bending).
    It underlies kernel regularity via a uniform slope proxy:
    if $\sup|\phi'|\le M$ then $\|\nabla_x\nabla_y K(x,y)\|_{\mathrm{op}}\le \sqrt{3}\,M^2$,
    and with one-sided limits $M\le \mathrm{TV}(\phi')+c_\star$
    (Corollary~\ref{cor:tv-kernel}, $c_\star:=|\phi'(+\infty)-\phi'(-\infty)|$).

    \item \textbf{Tail-compensated curvature $C(\phi)$.}
    $C(\phi)=\sup_x |F_{\mathrm{asym}}(x)|$ controls the \emph{remainder} in the bias drift and
    ensures $F$-based Lyapunov functionals remain coercive at large $|x|$.
    Sufficient conditions for $C(\phi)<\infty$ include residual $L^1$ tails or
    weighted $L^1$ slope error with vanishing residual (Lemmas~\ref{lem:fasym-finiteness}, \ref{lem:fasym-weighted-strong}).
\end{itemize}

\paragraph{Kernel smoothness and the role of $g_4$.}
The mixed Hessian of the induced kernel satisfies the dimension-free bound
$\|\nabla_x\nabla_y K(x,y)\|_{\mathrm{op}}\le \sqrt{3}\,g_4(\|x\|)g_4(\|y\|)$ with
$g_4(\sigma):=(\mathbb{E}[\phi'(Z)^4])^{1/4}$ (Theorem~\ref{thm:kernel-g4}).
We do not include $g_4$ in the signature to keep it nine-dimensional; instead, we control $g_4$
through $\sup|\phi'|$ (Lipschitz activations) or via the BV proxy $\mathrm{TV}(\phi')+c_\star$
(Corollary~\ref{cor:tv-kernel}).

\paragraph{Affine awareness and robustness.}
Under $\tilde\phi(x)=c\,\phi(ax+b)+d$ with fixed input scale $\sigma$,
the Gaussian statistics map according to Theorem~\ref{thm:affine-bias} (computed under $Y\sim\mathcal{N}(b,a^2\sigma^2)$),
$\mathrm{TV}(\tilde\phi')=|c\,a|\,\mathrm{TV}(\phi')$, and slopes transform with a sign-aware swap when $a<0$.
The compensated primitive scales as $C(\tilde\phi)=(|c|/|a|)C(\phi)$ when $b=d=0$; with $b\neq 0$ one gets a bounded offset, while a nonzero constant shift $d$ makes $C(\tilde\phi)=\infty$.
Thus the signature is \emph{affine-aware}: taxonomy membership (finiteness/existence) is preserved under reparameterizations except that adding a constant $d\neq 0$ exits the $C(\phi)<\infty$ class.

\paragraph{Closure and stability under limits.}
With uniformly bounded slope variation and uniform polynomial growth, the class is closed:
$\phi_n\to\phi$ in $L^2_{\mathrm{loc}}$ implies $\phi$ is locally absolutely continuous with
$\mathrm{TV}(\phi')\le\liminf \mathrm{TV}(\phi_n')$ and all Gaussian coordinates
$(m_1,g_1,g_2,m_2,\eta)$ converge; $C(\phi)$ passes to the limit under $\sup_n C(\phi_n)<\infty$
(Theorem~\ref{thm:closure}).

\paragraph{Why nine.}
These components are \emph{sufficient and near-minimal} for the results we prove:
$m_2$ (and $m_2'$) for variance dynamics; $g_2$ for perturbation contraction; $(\alpha_\pm,C(\phi))$ for bias drift and Lyapunov coercivity; and $\mathrm{TV}(\phi')$ (as a proxy for $\sup|\phi'|$) for kernel curvature. 
They remain finite under broad, checkable conditions (Preliminaries), transform predictably under affine reparameterizations, and are stable under limits—yielding a compact, practical scaffold for activation design and analysis.

\section{Classification of Common Activations}\label{sec:classification}

We evaluate the signature
\[
\mathcal{S}_\sigma(\phi)=\big(m_1(\sigma),\,g_1(\sigma),\,g_2(\sigma),\,m_2(\sigma),\,\eta(\sigma),\,
\alpha_+,\,\alpha_-,\,\mathrm{TV}(\phi'),\,C(\phi)\big)
\]
for standard activations. Throughout $Z\sim\mathcal{N}(0,\sigma^2)$ and, unless noted, all Gaussian expectations are finite whenever $\phi$ and $\phi'$ have at most polynomial growth (Remark~\ref{rem:lp-correct}, Lemma~\ref{lem:g2g4-growth}). We use
\(\eta(\sigma)=\sigma^2 g_1(\sigma)\) whenever \(\phi\) is a.e.\ differentiable with suitable growth (Lemma~\ref{lem:gauss-ibp}).

\subsection{ReLU}

\begin{theorem}[ReLU signature]
For $\phi(x)=\max\{0,x\}$,
\[
\mathcal{S}_\sigma(\phi)=\Big(
\frac{\sigma}{\sqrt{2\pi}},\;
\frac{1}{2},\;
\frac{1}{\sqrt{2}},\;
\frac{\sigma^2}{2},\;
\frac{\sigma^2}{2},\;
1,\;0,\;1,\;0\Big).
\]
\end{theorem}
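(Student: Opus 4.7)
The plan is to verify that ReLU satisfies the standing hypotheses (linear growth, a.e.\ differentiable, bounded $\phi'$), then compute each of the nine coordinates directly using the representation $\phi(Z)=Z\,\mathbf{1}_{\{Z>0\}}$ and the fact that $\phi'(Z)=\mathbf{1}_{\{Z>0\}}$ almost everywhere. I would state once, up front, that $\phi$ is absolutely continuous with locally integrable weak derivative, so Lemma~\ref{lem:gauss-ibp} applies and the identity $\eta=\sigma^2 g_1$ can be used as an internal cross-check.

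First I would handle the Gaussian block. By the standard half-normal computation, $m_1=\mathbb{E}[Z\,\mathbf{1}_{\{Z>0\}}]=\sigma/\sqrt{2\pi}$. Since $\phi'(Z)=\mathbf{1}_{\{Z>0\}}$ a.e., the symmetry of $\gamma_\sigma$ gives $g_1=\mathbb{P}(Z>0)=1/2$ and $g_2=(\mathbb{E}[\mathbf{1}_{\{Z>0\}}])^{1/2}=1/\sqrt{2}$. For $m_2$ I write $\phi(Z)^2=Z^2\,\mathbf{1}_{\{Z>0\}}$ and use symmetry to get $m_2=\tfrac12\,\mathbb{E}[Z^2]=\sigma^2/2$. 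Similarly $\eta=\mathbb{E}[Z^2\,\mathbf{1}_{\{Z>0\}}]=\sigma^2/2$, which matches $\sigma^2 g_1$ as required.

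Next I would do the geometric/regularity block. The asymptotic slopes follow directly from the piecewise-linear form: $\phi(x)/x\to 1$ as $x\to+\infty$ and $\phi(x)/x=0$ for $x<0$, so $\alpha_+=1$ and $\alpha_-=0$. For the slope variation, $\phi'$ is the Heaviside step, smooth on $(-\infty,0)$ and $(0,\infty)$ with vanishing derivative on each piece and a single unit jump at the origin; by the piecewise-$C^1$ description of $\mathrm{TV}(\phi')$ given in Section~\ref{sec:taxonomy}, we have $\mathrm{TV}(\phi')=1$. For the compensated primitive, the residuals $r_+(t)=\phi(t)-\alpha_+ t$ and $r_-(t)=\phi(t)-\alpha_- t$ are identically zero on their respective half-lines, so $F_{\mathrm{asym}}\equiv 0$ and $C(\phi)=0$; this also fits trivially inside the sufficient conditions of Lemma~\ref{lem:fasym-finiteness}.

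There is no genuine obstacle here — the only subtlety worth flagging is the a.e.\ interpretation of $\phi'$ at the kink, which is precisely what the standing conventions in Section~\ref{sec:prelim} and the BV definition in Section~\ref{sec:taxonomy} are designed to handle. Once that point is stated, every coordinate reduces to a one-line integral or a piecewise inspection.
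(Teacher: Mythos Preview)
Your proposal is correct and follows essentially the same approach as the paper: both compute the Gaussian block from $\phi(Z)=Z\,\mathbf{1}_{\{Z>0\}}$ and $\phi'(Z)=\mathbf{1}_{\{Z>0\}}$ using symmetry, obtain $(\alpha_+,\alpha_-)=(1,0)$ from the piecewise-linear form, read off $\mathrm{TV}(\phi')=1$ from the single unit jump at the origin, and conclude $C(\phi)=0$ from the identically zero residual. The only cosmetic difference is that you compute $\eta$ directly and cross-check it against $\sigma^2 g_1$, whereas the paper simply invokes Lemma~\ref{lem:gauss-ibp}; both are fine.
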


\begin{proof}
$\phi'(x)=\mathbf{1}_{\{x>0\}}$ a.e., so
$g_1=\mathbb{E}[\mathbf{1}_{\{Z>0\}}]=\tfrac12$ and
$g_2=\sqrt{\mathbb{E}[\mathbf{1}_{\{Z>0\}}]}=2^{-1/2}$.
$m_1=\mathbb{E}[Z\mathbf{1}_{\{Z>0\}}]=\sigma/\sqrt{2\pi}$ and
$m_2=\mathbb{E}[Z^2\mathbf{1}_{\{Z>0\}}]=\sigma^2/2$.
By Lemma~\ref{lem:gauss-ibp}, $\eta=\sigma^2 g_1=\sigma^2/2$.
Slopes: $(\alpha_+,\alpha_-)=(1,0)$.
$\phi'$ jumps by $1$ at $0$, hence $\mathrm{TV}(\phi')=1$.
For $C(\phi)$, the residual $D(t)=\phi(t)-t\mathbf{1}_{\{t\ge 0\}}$ is identically $0$, so $F_{\mathrm{asym}}\equiv 0$ and $C(\phi)=0$.
\end{proof}

\subsection{Leaky-ReLU}

\begin{theorem}[Leaky-ReLU signature]
Let $\phi(x)=\begin{cases}x,&x\ge0\\ \alpha x,&x<0\end{cases}$ with $\alpha\in(0,1)$.
Then
\[
\begin{aligned}
&m_1=\frac{1-\alpha}{\sqrt{2\pi}}\,\sigma,\qquad 
m_2=\frac{1+\alpha^2}{2}\,\sigma^2,\qquad
\eta=\frac{1+\alpha}{2}\,\sigma^2,\\
&g_1=\frac{1+\alpha}{2},\qquad 
g_2=\sqrt{\frac{1+\alpha^2}{2}},\qquad
\alpha_+=1,\ \alpha_-=\alpha,\ \mathrm{TV}(\phi')=|1-\alpha|,\ C(\phi)=0.
\end{aligned}
\]
\end{theorem}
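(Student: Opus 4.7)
The plan is to exploit the two-piece linear structure of leaky-ReLU by splitting every Gaussian expectation over $\{Z\ge 0\}$ and $\{Z<0\}$ and using the half-normal moments that already appeared in the ReLU case. Write $\phi(x)=x\mathbf{1}_{\{x\ge 0\}}+\alpha x\mathbf{1}_{\{x<0\}}$; then $\phi'(x)=\mathbf{1}_{\{x>0\}}+\alpha\mathbf{1}_{\{x<0\}}$ a.e., with a jump discontinuity of size $1-\alpha$ at the origin. By symmetry of $Z\sim\mathcal{N}(0,\sigma^2)$, $\mathbb{P}(Z\ge 0)=\mathbb{P}(Z<0)=\tfrac12$, $\mathbb{E}[Z\mathbf{1}_{\{Z\ge 0\}}]=\sigma/\sqrt{2\pi}=-\mathbb{E}[Z\mathbf{1}_{\{Z<0\}}]$, and $\mathbb{E}[Z^2\mathbf{1}_{\{Z\ge 0\}}]=\mathbb{E}[Z^2\mathbf{1}_{\{Z<0\}}]=\sigma^2/2$.

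First I would read off the derivative statistics: $g_1=\tfrac12(1)+\tfrac12(\alpha)=(1+\alpha)/2$ and $g_2^2=\tfrac12+\tfrac12\alpha^2$, giving $g_2=\sqrt{(1+\alpha^2)/2}$. Next I compute $m_1=\mathbb{E}[Z\mathbf{1}_{\{Z\ge 0\}}]+\alpha\,\mathbb{E}[Z\mathbf{1}_{\{Z<0\}}]=(1-\alpha)\sigma/\sqrt{2\pi}$ and $m_2=\mathbb{E}[Z^2\mathbf{1}_{\{Z\ge 0\}}]+\alpha^2\mathbb{E}[Z^2\mathbf{1}_{\{Z<0\}}]=(1+\alpha^2)\sigma^2/2$. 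For $\eta$, rather than integrating directly, I apply Gaussian integration by parts (Lemma~\ref{lem:gauss-ibp}), which is justified since $\phi$ is Lipschitz with bounded $\phi'$: this gives $\eta=\sigma^2 g_1=(1+\alpha)\sigma^2/2$. As a sanity check, a direct split computation $\mathbb{E}[Z\phi(Z)]=\mathbb{E}[Z^2\mathbf{1}_{\{Z\ge 0\}}]+\alpha\,\mathbb{E}[Z^2\mathbf{1}_{\{Z<0\}}]=(1+\alpha)\sigma^2/2$ matches.

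For the three asymptotic/regularity coordinates, the definitions yield them immediately: $\alpha_+=\lim_{x\to+\infty}\phi(x)/x=1$ and $\alpha_-=\lim_{x\to-\infty}\phi(x)/x=\alpha$. The distributional second derivative of $\phi$ is the single atom $(1-\alpha)\delta_0$, so $\mathrm{TV}(\phi')=|1-\alpha|$ under the BV definition. Finally, the compensated residual is $D(t)=\phi(t)-\alpha_+ t\mathbf{1}_{\{t\ge 0\}}-\alpha_- t\mathbf{1}_{\{t<0\}}\equiv 0$, hence $F_{\mathrm{asym}}\equiv 0$ and $C(\phi)=0$; the hypotheses of Lemma~\ref{lem:fasym-finiteness} are trivially met.

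There is no real obstacle here: everything reduces to half-normal moments and the piecewise-linear structure. The only point that deserves an explicit sentence in the proof is that $\mathrm{TV}(\phi')$ is a \emph{pure jump} contribution (no absolutely continuous part) because $\phi'$ is constant on each half-line, so the total variation is exactly the single jump magnitude $|1-\alpha|$, not an integral of $|\phi''|$.
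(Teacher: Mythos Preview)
Your proposal is correct and follows exactly the approach the paper takes: split at $0$, use symmetry of $Z$ together with the half-normal moments from the ReLU case, invoke Lemma~\ref{lem:gauss-ibp} for $\eta$, and observe that the residual $D\equiv 0$ gives $C(\phi)=0$. The paper's own proof is a one-line sketch of precisely these steps; your version simply spells them out in full, and your remark that $\mathrm{TV}(\phi')$ comes entirely from the single jump (no absolutely continuous part) is a useful clarification that the paper omits.
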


\begin{proof}
Split at $0$ and use symmetry of $Z$; the formulas follow as in the ReLU case.
$\eta=\sigma^2 g_1$ by Lemma~\ref{lem:gauss-ibp}. The residual vanishes on both tails, hence $C(\phi)=0$.
\end{proof}

\subsection{Tanh}

\begin{theorem}[Tanh signature]
For $\phi(x)=\tanh x$,
\[
\alpha_+=\alpha_-=0,\qquad 
g_1=\mathbb{E}[\mathrm{sech}^2(Z)],\qquad
g_2=\big(\mathbb{E}[\mathrm{sech}^4(Z)]\big)^{1/2},
\]
\[
m_1=0,\qquad 
m_2=\mathbb{E}[\tanh^2(Z)]=1-\mathbb{E}[\mathrm{sech}^2(Z)],\qquad 
\eta=\sigma^2 g_1=\sigma^2\,\mathbb{E}[\mathrm{sech}^2(Z)],
\]
\[
\mathrm{TV}(\phi')=\int_{\mathbb{R}}|\phi''(x)|\,dx<\infty,\qquad 
C(\phi)=\infty.
\]
\end{theorem}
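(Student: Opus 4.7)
My plan is to verify each of the nine coordinates directly, exploiting three structural facts about $\tanh$: its oddness, the Pythagorean identity $\tanh'(x)=\mathrm{sech}^2(x)=1-\tanh^2(x)$, and the uniform bound $|\tanh(x)|\le 1$. Because both $\phi$ and $\phi'$ are bounded, every polynomial growth hypothesis used in the preliminaries is trivially met, so all Gaussian expectations are finite and Lemma~\ref{lem:gauss-ibp} applies without further justification.

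First I would dispatch the items that follow by inspection. Boundedness of $\tanh$ gives $\tanh(x)/x\to 0$, so $\alpha_+=\alpha_-=0$. Oddness of $\tanh$ combined with symmetry of $Z\sim\mathcal{N}(0,\sigma^2)$ yields $m_1=0$. The formulas for $g_1,g_2$ follow from $\phi'=\mathrm{sech}^2$. Taking Gaussian expectation of $\tanh^2=1-\mathrm{sech}^2$ gives $m_2=1-g_1$. Finally, Lemma~\ref{lem:gauss-ibp} applied with $f(g)=\tanh(\sigma g)$ delivers $\eta=\sigma^2 g_1$.

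The remaining two coordinates call for short computations. For $\mathrm{TV}(\phi')$: the function $\phi'(x)=\mathrm{sech}^2(x)$ is smooth and even, strictly increasing from $0$ to $1$ on $(-\infty,0]$ and strictly decreasing from $1$ to $0$ on $[0,\infty)$; hence its total variation is $2$. (Equivalently, $\int_{\mathbb{R}}|\phi''|=4\int_0^\infty \tanh(x)\,\mathrm{sech}^2(x)\,dx=2$ via the substitution $u=\tanh x$.) For $C(\phi)=\infty$: since $\alpha_+=0$, the residual $r_+(t)=\tanh(t)-\alpha_+ t=\tanh(t)$ tends to $1\neq 0$ as $t\to\infty$, violating the vanishing-residual hypothesis of Lemma~\ref{lem:fasym-weighted-strong}. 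Explicitly, $F_{\mathrm{asym}}(x)=\int_0^x\tanh(t)\,dt=\log\cosh(x)$, which grows like $|x|-\log 2$ at infinity, so $\sup_x|F_{\mathrm{asym}}(x)|=\infty$.

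The main (and only) conceptual point is that $C(\phi)=\infty$ is the structural signature of saturation: a bounded $\phi$ with $\alpha_\pm=0$ but nonzero limits at $\pm\infty$ produces residuals that do not vanish, so the compensated primitive grows linearly. This is precisely the obstruction flagged by the remark following Lemma~\ref{lem:fasym-weighted}, and it is consistent with $\tanh$ being odd (so no bias-drift bound is needed in the first place). All other steps reduce to closed-form identities, leaving no integrability or differentiation-under-the-integral subtleties to manage.
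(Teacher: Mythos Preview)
Your proof is correct and follows essentially the same route as the paper's: oddness for $m_1$, $\phi'=\mathrm{sech}^2$ for $g_1,g_2$, the Pythagorean identity for $m_2$, Lemma~\ref{lem:gauss-ibp} for $\eta$, the substitution $u=\tanh x$ for $\mathrm{TV}(\phi')=2$, and linear growth of $F_{\mathrm{asym}}$ for $C(\phi)=\infty$. Your explicit identification $F_{\mathrm{asym}}(x)=\log\cosh(x)$ and the monotonicity argument for $\mathrm{TV}(\phi')$ are nice added details, but the overall approach is the same.
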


\begin{proof}
Oddness gives $m_1=0$. $\phi'(x)=\mathrm{sech}^2 x$ yields $g_1,g_2$.
By Lemma~\ref{lem:gauss-ibp}, $\eta=\sigma^2 g_1$ (note $\eta>0$; $z\tanh z$ is even).
Slopes are $0$ by saturation. Since $\phi''(x)=-2\tanh x\,\mathrm{sech}^2 x$,
\(\int_{\mathbb{R}}|\phi''|=2\int |\tanh x|\,\mathrm{sech}^2 x\,dx=2\int_{-1}^1 |u|\,du=2<\infty\)
(using $u=\tanh x$), so $\mathrm{TV}(\phi')<\infty$.
Here $F_{\mathrm{asym}}(x)=\int_0^x \tanh t\,dt$ diverges linearly as $x\to+\infty$, hence $C(\phi)=\infty$.
\end{proof}

\subsection{Sigmoid}

\begin{theorem}[Sigmoid signature]
For $\phi(x)=\operatorname{sigm}(x)=(1+e^{-x})^{-1}$,
\[
m_1=\frac{1}{2},\qquad 
m_2=\mathbb{E}[\operatorname{sigm}(Z)^2],\qquad 
g_1=\mathbb{E}[\operatorname{sigm}(Z)(1-\operatorname{sigm}(Z))],
\]
\[
g_2=\Big(\mathbb{E}[\operatorname{sigm}(Z)^2(1-\operatorname{sigm}(Z))^2]\Big)^{1/2},\qquad
\eta=\sigma^2 g_1,
\]
\[
\alpha_\pm=0,\qquad 
\mathrm{TV}(\phi')=\int_{\mathbb{R}}|\phi''(x)|\,dx<\infty,\qquad 
C(\phi)=\infty.
\]
\end{theorem}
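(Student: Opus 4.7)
My approach exploits two structural identities of the logistic function: the reflection $\operatorname{sigm}(x)+\operatorname{sigm}(-x)=1$ and the autonomous ODE $\operatorname{sigm}'(x)=\operatorname{sigm}(x)(1-\operatorname{sigm}(x))$, together with the elementary bound $0\le\operatorname{sigm}\le 1$ (so $\operatorname{sigm}$ is globally Lipschitz with constant $\tfrac14$). These immediately pin down every signature coordinate except the tail-compensated primitive, which requires a separate divergence argument.

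First I would dispatch $m_1$ by symmetrization: since $Z\stackrel{d}{=}-Z$, the reflection identity gives $2m_1=\mathbb{E}[\operatorname{sigm}(Z)+\operatorname{sigm}(-Z)]=1$, so $m_1=\tfrac12$. For $m_2$ I would simply note that $\operatorname{sigm}(Z)^2\in[0,1]$, so the expectation is finite and is reported in the stated closed-form-free manner. Substituting $\phi'=\operatorname{sigm}(1-\operatorname{sigm})$ into the definitions yields the stated formulas for $g_1$ and $g_2$; finiteness is automatic from $0\le\phi'\le\tfrac14$. The identity $\eta=\sigma^2 g_1$ then follows from Lemma~\ref{lem:gauss-ibp}, whose polynomial-growth hypothesis is trivially satisfied by a bounded Lipschitz activation.

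For the tail coordinates, boundedness of $\operatorname{sigm}$ forces $\alpha_\pm=\lim_{x\to\pm\infty}\operatorname{sigm}(x)/x=0$. For $\mathrm{TV}(\phi')$, the decisive observation is that $\phi'=\operatorname{sigm}(1-\operatorname{sigm})$ is unimodal: since $\phi''(x)=\phi'(x)\bigl(1-2\operatorname{sigm}(x)\bigr)$ vanishes only at $x=0$, the slope $\phi'$ rises monotonically from $0$ to $\phi'(0)=\tfrac14$ on $(-\infty,0]$ and falls monotonically back to $0$ on $[0,\infty)$, so $\mathrm{TV}(\phi')=2\cdot\tfrac14=\tfrac12<\infty$.

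The only substantive step is the $C(\phi)=\infty$ claim, which will be the main obstacle---modest, but structurally important. With $\alpha_\pm=0$ the positive-tail residual is $r_+(t)=\operatorname{sigm}(t)$, which does \emph{not} vanish at $+\infty$ but instead tends to $1$; hence the hypothesis of Lemma~\ref{lem:fasym-weighted-strong} fails, and I would exhibit the divergence directly by writing $\int_0^x\operatorname{sigm}(t)\,dt = x-\int_0^x(1-\operatorname{sigm}(t))\,dt = x-\int_0^x\operatorname{sigm}(-t)\,dt$. The last integral converges to the finite constant $\int_0^\infty\operatorname{sigm}(-t)\,dt=\ln 2$ as $x\to+\infty$, so $F_{\mathrm{asym}}(x)=x-\ln 2+o(1)$ grows linearly, giving $C(\phi)=\sup_x|F_{\mathrm{asym}}(x)|=\infty$. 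The failure is even more pronounced than for $\tanh$ from the preceding subsection: the nonzero mean $m_1=\tfrac12$ already signals that no odd-symmetry cancellation can rescue the compensated primitive.
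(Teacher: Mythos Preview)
Your proof is correct and follows essentially the same approach as the paper: the reflection identity for $m_1$, the autonomous ODE $\phi'=\operatorname{sigm}(1-\operatorname{sigm})$ for $g_1,g_2$, Lemma~\ref{lem:gauss-ibp} for $\eta$, saturation for $\alpha_\pm$, and linear growth of $\int_0^x\operatorname{sigm}(t)\,dt$ for $C(\phi)=\infty$.

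The only noteworthy differences are refinements on your side. For $\mathrm{TV}(\phi')$, the paper simply invokes exponential decay of $\phi''$ to conclude $\int|\phi''|<\infty$, whereas you exploit unimodality of $\phi'$ to compute the exact value $\mathrm{TV}(\phi')=\tfrac12$; your argument is more elementary and more informative. For $C(\phi)$, the paper asserts linear growth from $\operatorname{sigm}(t)\to1$ without detail, while you extract the explicit asymptotic $F_{\mathrm{asym}}(x)=x-\ln 2+o(1)$ via the decomposition $\int_0^x\operatorname{sigm}=x-\int_0^x\operatorname{sigm}(-t)\,dt$. Both are sound; yours is sharper.
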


\begin{proof}
Symmetry $\operatorname{sigm}(-x)=1-\operatorname{sigm}(x)$ gives $m_1=1/2$.
$\phi'(x)=\operatorname{sigm}(x)(1-\operatorname{sigm}(x))$ yields $g_1,g_2$,
and $\eta=\sigma^2 g_1$ by Lemma~\ref{lem:gauss-ibp}.
Slopes vanish by saturation. $\phi''(x)$ decays exponentially, so $\int|\phi''|<\infty$.
As $\operatorname{sigm}(t)\to 1$, $F_{\mathrm{asym}}(x)=\int_0^x \operatorname{sigm}(t)\,dt$ grows linearly, hence $C(\phi)=\infty$.
\end{proof}

\subsection{Swish}

\begin{theorem}[Swish signature]
For $\phi(x)=x\,\operatorname{sigm}(x)$,
\[
\alpha_+=1,\qquad \alpha_-=0,\qquad 
g_1=\mathbb{E}\big[\operatorname{sigm}(Z)+Z\,\operatorname{sigm}(Z)(1-\operatorname{sigm}(Z))\big],
\]
\[
g_2=\Big(\mathbb{E}\big[\big(\operatorname{sigm}(Z)+Z\,\operatorname{sigm}(Z)(1-\operatorname{sigm}(Z))\big)^2\big]\Big)^{1/2},
\quad
\eta=\sigma^2 g_1,
\]
and $m_1,m_2$ are finite for all $\sigma>0$, with $\mathrm{TV}(\phi')<\infty$ and $C(\phi)<\infty$.
\end{theorem}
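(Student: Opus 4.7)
The plan is to decompose the signature into four independent verifications, three of which are short calculations and one of which (the $\mathrm{TV}$ and $C(\phi)$ bounds) requires a careful but routine tail analysis. Throughout, write $s(x):=\operatorname{sigm}(x)$ and exploit the identities $s'(x)=s(x)(1-s(x))$ and $s(x)+s(-x)=1$.

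\textbf{Step 1 (slopes).} I would compute $\phi(x)/x=s(x)$ directly: $s(x)\to 1$ as $x\to+\infty$ and $s(x)=e^x/(1+e^x)\to 0$ as $x\to-\infty$ (exponentially fast), giving $\alpha_+=1$ and $\alpha_-=0$.

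\textbf{Step 2 (Gaussian statistics).} Since $0<s<1$, one has $|\phi(x)|\le |x|$, so $\phi$ has linear growth and $m_1(\sigma),m_2(\sigma)$ are finite for all $\sigma>0$ by Remark~\ref{rem:lp-correct}. The product rule gives $\phi'(x)=s(x)+x\,s(x)(1-s(x))$, and the elementary bound $s(1-s)\le 1/4$ yields $|\phi'(x)|\le 1+|x|/4$, so $g_1(\sigma)$ and $g_2(\sigma)$ are finite by Lemma~\ref{lem:g2g4-growth} and take the displayed forms upon substituting the expression for $\phi'$ into $\mathbb{E}[\phi'(Z)]$ and $(\mathbb{E}[\phi'(Z)^2])^{1/2}$. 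The identity $\eta=\sigma^2 g_1$ is then the Gaussian integration-by-parts of Lemma~\ref{lem:gauss-ibp} (applicable since $\phi$ has polynomial growth and $\phi'$ is locally integrable).

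\textbf{Step 3 ($\mathrm{TV}$ and $C$).} Setting $h(x):=s(x)(1-s(x))$, differentiation gives $\phi''(x)=2h(x)+x\,h'(x)$ with $h'(x)=h(x)(1-2s(x))$. The symmetry $s(x)+s(-x)=1$ implies $h(x)=h(-x)$ and the two-sided exponential decay $h(x)\asymp e^{-|x|}$, so $|\phi''(x)|\lesssim (1+|x|)e^{-|x|}\in L^1(\mathbb{R})$, whence $\mathrm{TV}(\phi')=\int_{\mathbb{R}}|\phi''|<\infty$. For $C(\phi)$ I would invoke Lemma~\ref{lem:fasym-weighted-strong}: the residuals $r_+(t)=t(s(t)-1)=-t(1-s(t))$ and $r_-(t)=t\,s(t)$ vanish exponentially at $+\infty$ and $-\infty$ respectively, while $|\phi'(t)-1|$ and $|\phi'(t)|$ also decay exponentially on the respective tails, so the weighted integrals $\int_0^{\infty}t\,|\phi'(t)-1|\,dt$ and $\int_{-\infty}^{0}|t|\,|\phi'(t)|\,dt$ are both finite; this is exactly the case treated in Proposition~\ref{prop:Cphi-common}.

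The main obstacle is strictly bookkeeping: one must confirm that every term in the expansion of $\phi''$ carries a sharp $e^{-|x|}$ factor on \emph{both} tails rather than only one, since naively the factor $x$ in $x\,h'(x)$ grows to $-\infty$. Using $h(x)=h(-x)$ is the cleanest route to avoid splitting into positive and negative cases, and the same symmetry immediately trivializes the weighted integrals required for $C(\phi)<\infty$.
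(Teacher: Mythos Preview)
Your proof is correct and follows essentially the same approach as the paper: compute $\alpha_\pm$ from the limits of $s(x)$, read off $g_1,g_2$ from the product-rule formula for $\phi'$, invoke Lemma~\ref{lem:gauss-ibp} for $\eta=\sigma^2 g_1$, and obtain $\mathrm{TV}(\phi')<\infty$ and $C(\phi)<\infty$ from exponential tail decay. The only minor difference is that for $C(\phi)$ the paper applies Lemma~\ref{lem:fasym-finiteness} directly to the integrable residuals $r_\pm$, whereas you route through the weighted slope-error criterion of Lemma~\ref{lem:fasym-weighted-strong}; both are valid here, though the paper's choice is marginally shorter since it avoids separately verifying that $r_\pm\to 0$ and that the weighted derivative integrals converge.
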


\begin{proof}
As $x\to+\infty$, $\operatorname{sigm}(x)\to 1$ so $\phi(x)\sim x$; as $x\to-\infty$, $\operatorname{sigm}(x)\sim e^x$ so $x\,\operatorname{sigm}(x)\to 0$; hence $(\alpha_+,\alpha_-)=(1,0)$.
$\phi'(x)=\operatorname{sigm}(x)+x\,\operatorname{sigm}(x)(1-\operatorname{sigm}(x))$ gives the expressions for $g_1,g_2$; $\eta=\sigma^2 g_1$ by Lemma~\ref{lem:gauss-ibp}.
The residuals satisfy $\phi(t)-t\mathbf{1}_{\{t\ge 0\}}=t(\operatorname{sigm}(t)-1)\sim -t e^{-t}$ on $[0,\infty)$ and $\phi(t)\sim t e^t$ on $(-\infty,0]$, both integrable; thus $C(\phi)<\infty$ by Lemma~\ref{lem:fasym-finiteness}.
Smoothness and exponential tail decay of $\phi''$ imply $\mathrm{TV}(\phi')=\int|\phi''|<\infty$.
\end{proof}

\subsection{GELU}

\begin{theorem}[GELU signature]
For $\phi(x)=x\,\Phi(x)$ with $\Phi$ the standard normal CDF and $\varphi$ its density,
\[
\alpha_+=1,\qquad \alpha_-=0,\qquad
\phi'(x)=\Phi(x)+x\,\varphi(x),\qquad
\eta=\sigma^2 g_1,\qquad
\mathrm{TV}(\phi')<\infty,\qquad C(\phi)<\infty,
\]
and $m_1,m_2,g_1,g_2$ are finite for all $\sigma>0$.
\end{theorem}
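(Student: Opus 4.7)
The plan is to compute each signature coordinate for $\phi(x)=x\Phi(x)$ in turn, exploiting the product-rule identity $\phi'(x)=\Phi(x)+x\varphi(x)$ and Mills' ratio for $\Phi$. First I would handle the finiteness claims in one pass: since $0\le\Phi(x)\le 1$ we have $|\phi(x)|\le|x|$, and since $|x|\varphi(x)$ is globally bounded (maximized at $|x|=1$), $\phi'$ is bounded on $\mathbb{R}$. Polynomial growth of $\phi$ and $\phi'$ together with Lemma~\ref{lem:g2g4-growth} then give $m_1,m_2,g_1,g_2$ (and hence $\eta$) finite at every $\sigma>0$. The asymptotic slopes follow immediately from $\phi(x)/x=\Phi(x)$: $\alpha_+=\lim_{x\to+\infty}\Phi(x)=1$ and $\alpha_-=\lim_{x\to-\infty}\Phi(x)=0$. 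The Stein identity $\eta=\sigma^2 g_1$ is then a direct invocation of Lemma~\ref{lem:gauss-ibp}, whose hypotheses hold by the regularity just established.

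For $\mathrm{TV}(\phi')$ I would differentiate once more: using $\varphi'(x)=-x\varphi(x)$ gives $\phi''(x)=2\varphi(x)-x^2\varphi(x)=(2-x^2)\varphi(x)$. A polynomial times Gaussian decay is Lebesgue-integrable, so $\int_{\mathbb{R}}|\phi''|<\infty$; since $\phi'$ is smooth (no kinks), this integral equals $\mathrm{TV}(\phi')$ by the definition for $C^1$ pieces.

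The only genuinely delicate step is $C(\phi)<\infty$, which I would obtain from Lemma~\ref{lem:fasym-finiteness} after identifying the residuals
\[
r_+(t)=t(\Phi(t)-1)\ \ (t\ge 0),\qquad r_-(t)=t\,\Phi(t)\ \ (t\le 0).
\]
Mills' ratio $1-\Phi(t)=\varphi(t)/t\cdot(1+o(1))$ as $t\to+\infty$ gives $r_+(t)=-\varphi(t)(1+o(1))$; symmetrically, $\Phi(t)=\varphi(|t|)/|t|\cdot(1+o(1))$ as $t\to-\infty$ yields $r_-(t)=-\varphi(|t|)(1+o(1))$. Both residuals decay faster than any power and are continuous on their half-lines, hence lie in $L^1$, and Lemma~\ref{lem:fasym-finiteness} produces the desired bound $C(\phi)\le \int_0^\infty|r_+|+\int_{-\infty}^0|r_-|<\infty$. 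The main (mild) obstacle is bookkeeping in these residual asymptotics; choosing the $L^1$-residual route over the weighted slope route of Lemma~\ref{lem:fasym-weighted-strong} keeps the argument cleanest, since it requires only one Mills expansion rather than a derivative-level bound on $\phi'-\alpha_\pm$.
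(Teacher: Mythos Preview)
Your proof is correct and follows essentially the same route as the paper: asymptotic slopes from $\phi(x)/x=\Phi(x)$, finiteness of the Gaussian moments from polynomial growth, $\phi''(x)=(2-x^2)\varphi(x)\in L^1$ for $\mathrm{TV}(\phi')$, and Mills' ratio to show the residuals $r_\pm$ are integrable so that Lemma~\ref{lem:fasym-finiteness} yields $C(\phi)<\infty$. Your write-up is in fact more detailed than the paper's (which treats the negative tail with a brief ``$\phi(t)\to 0$ rapidly'' and omits the boundedness argument for $\phi'$), but the ideas and lemmas invoked are identical.
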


\begin{proof}
Limits of $\Phi$ give $(\alpha_+,\alpha_-)=(1,0)$. The derivative formula is standard.
By Mill’s ratio, on $[0,\infty)$
$\phi(t)-t=t(\Phi(t)-1)\sim -\varphi(t)$ (integrable), while on $(-\infty,0]$, $\phi(t)\to 0$ rapidly;
hence $C(\phi)<\infty$ by Lemma~\ref{lem:fasym-finiteness}.
Moreover $\phi''(x)=(2-x^2)\varphi(x)$ is integrable, so $\mathrm{TV}(\phi')<\infty$.
Gaussian finiteness follows from polynomial growth.
\end{proof}

\subsection{Mish}

\begin{theorem}[Mish signature]
For $\phi(x)=x\,\tanh(\ln(1+e^x))$,
\[
\alpha_+=1,\qquad \alpha_-=0,\qquad 
\eta=\sigma^2 g_1,\qquad
\mathrm{TV}(\phi')<\infty,\qquad C(\phi)<\infty,
\]
and $m_1,m_2,g_1,g_2$ are finite for all $\sigma>0$.
\end{theorem}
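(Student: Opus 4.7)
The plan is to mirror the Swish/GELU arguments, replacing the relevant asymptotic expansions by those of the softplus--tanh composition. Write $\zeta(x):=\ln(1+e^x)$ so that $\phi(x)=x\,\tanh(\zeta(x))$, with $\zeta'(x)=\operatorname{sigm}(x)$. The strategy is: (i) determine the one-sided slopes from asymptotics of $\zeta$ and $\tanh$; (ii) verify polynomial growth of $\phi$ and $\phi'$ to obtain finite Gaussian signatures and the Stein identity; (iii) establish exponential tail decay of $\phi''$ for $\mathrm{TV}(\phi')$; and (iv) verify residual integrability for $C(\phi)$ via Lemma~\ref{lem:fasym-finiteness}.

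First I would record the two key asymptotic expansions. As $x\to+\infty$, $\zeta(x)=x+\ln(1+e^{-x})=x+O(e^{-x})$, so $\tanh(\zeta(x))=1-2e^{-2\zeta(x)}+O(e^{-4x})=1+O(e^{-2x})$, giving $\phi(x)=x+O(xe^{-2x})$ and hence $\alpha_+=1$. As $x\to-\infty$, $\zeta(x)=e^x+O(e^{2x})$, so $\tanh(\zeta(x))=\zeta(x)+O(\zeta(x)^3)=e^x+O(e^{2x})$, yielding $\phi(x)=xe^x+O(xe^{2x})\to 0$ with $\phi(x)/x\to 0$, i.e.\ $\alpha_-=0$. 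This also shows $|\phi(x)|\le |x|$ (since $|\tanh|\le 1$), so $\phi$ has linear growth.

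Next, differentiate: $\phi'(x)=\tanh(\zeta(x))+x\operatorname{sech}^2(\zeta(x))\operatorname{sigm}(x)$. On $[0,\infty)$ each factor is bounded and $x\operatorname{sech}^2(\zeta(x))\lesssim x e^{-2x}\to 0$; on $(-\infty,0]$, $|\tanh(\zeta(x))|\le \zeta(x)\le e^x$ and $\operatorname{sigm}(x)\le e^x$, giving $|\phi'(x)|\le e^x+|x|e^x$, which is bounded on $(-\infty,0]$. Hence $\sup_x|\phi'(x)|<\infty$, and Lemma~\ref{lem:g2g4-growth} (via Corollary~\ref{cor:g2g4-bv}) gives finite $m_1,m_2,g_1,g_2$ for all $\sigma>0$, plus $\eta=\sigma^2 g_1$ by Lemma~\ref{lem:gauss-ibp}.

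For $\mathrm{TV}(\phi')$ I would differentiate once more; $\phi''$ is a linear combination of products among $\operatorname{sech}^2(\zeta)$, $\tanh(\zeta)$, $\operatorname{sigm}(x)(1-\operatorname{sigm}(x))$, and polynomial factors in $x$. As $x\to+\infty$, $\operatorname{sech}^2(\zeta(x))\lesssim e^{-2x}$ dominates every term (the surviving ones without $\operatorname{sech}^2$ vanish because $\tanh(\zeta)\to 1$ cancels in the appropriate grouping and $\operatorname{sigm}(1-\operatorname{sigm})\lesssim e^{-x}$). As $x\to-\infty$, each factor $\tanh(\zeta)$, $\operatorname{sigm}(x)$, $\operatorname{sigm}(x)(1-\operatorname{sigm}(x))$ is $O(e^x)$, so $|\phi''(x)|\lesssim |x|^k e^{x}$ for some small $k$. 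Both tails are thus integrable, giving $\mathrm{TV}(\phi')=\int|\phi''|<\infty$. For $C(\phi)$, the residuals satisfy $r_+(t)=\phi(t)-t=t(\tanh(\zeta(t))-1)=O(te^{-2t})\in L^1([0,\infty))$ and $r_-(t)=\phi(t)=O(te^t)\in L^1((-\infty,0])$, so Lemma~\ref{lem:fasym-finiteness} gives $C(\phi)<\infty$.

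The main obstacle is purely bookkeeping: writing down $\phi''$ cleanly enough that one can read off the exponential tail bounds in both directions. There is no analytical subtlety beyond checking that the softplus composition does not cause any slow-decay cancellation; once the leading-order expansions of $\zeta$ and $\tanh\circ\zeta$ are in place, every remaining step is a routine asymptotic estimate plus an invocation of the lemmas from Sections~\ref{sec:prelim} and~\ref{subsec:fasym}.
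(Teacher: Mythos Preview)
Your proposal is correct and follows the same approach as the paper's own proof: determine $\alpha_\pm$ from the softplus/tanh asymptotics, use boundedness of $\phi'$ for the Gaussian moments and the Stein identity, and invoke exponential tail decay of the residuals and of $\phi''$ to conclude $C(\phi)<\infty$ (Lemma~\ref{lem:fasym-finiteness}) and $\mathrm{TV}(\phi')<\infty$. Your argument is considerably more detailed than the paper's terse version, but structurally identical; one small remark is that every term of $\phi''$ already carries a factor $\operatorname{sech}^2(\zeta)$, so the aside about ``surviving terms without $\operatorname{sech}^2$'' is unnecessary.
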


\begin{proof}
As $x\to+\infty$, $\ln(1+e^x)\sim x$ and $\tanh(x)\to 1$, so $\phi(x)\sim x$; as $x\to-\infty$, $\ln(1+e^x)\sim e^x$ and $\tanh(\cdot)\sim e^x$, hence $\phi(x)\sim x e^x\to 0$; thus $(\alpha_+,\alpha_-)=(1,0)$.
The residuals decay exponentially on both tails, so $C(\phi)<\infty$ by Lemma~\ref{lem:fasym-finiteness}.
Smoothness and exponential tails imply $\phi''\in L^1$, hence $\mathrm{TV}(\phi')<\infty$.
The identity $\eta=\sigma^2 g_1$ follows from Lemma~\ref{lem:gauss-ibp}.
\end{proof}

\subsection{TeLU}

\begin{theorem}[TeLU signature]
For $\phi(x)=x\,\tanh(e^{x})$,
\[
\alpha_+=1,\qquad \alpha_-=0,\qquad 
\eta=\sigma^2 g_1,\qquad
\mathrm{TV}(\phi')<\infty,\qquad C(\phi)<\infty,
\]
and $m_1,m_2,g_1,g_2$ are finite for all $\sigma>0$.
\end{theorem}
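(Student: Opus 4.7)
The plan mirrors the Mish and Swish templates closely: verify the one-sided slopes, show both tail residuals lie in $L^1$ so that Lemma~\ref{lem:fasym-finiteness} yields $C(\phi)<\infty$, compute $\phi''$ explicitly to confirm $\mathrm{TV}(\phi')=\int|\phi''|<\infty$, and then invoke polynomial growth together with Lemma~\ref{lem:gauss-ibp} to finish the Gaussian coordinates and the identity $\eta=\sigma^2 g_1$.

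For the slopes, the two saturation regimes of $\tanh(e^x)$ carry the whole argument. As $x\to+\infty$, $e^x\to\infty$ and $\tanh(e^x)\to 1$, so $\phi(x)/x\to 1$; as $x\to-\infty$, $e^x\to 0^+$ and $\tanh(e^x)\sim e^x$, whence $\phi(x)\sim xe^x\to 0$. Hence $\alpha_+=1$ and $\alpha_-=0$. On the right tail I then write $r_+(t)=t\bigl(\tanh(e^t)-1\bigr)$ and use $1-\tanh(u)\sim 2e^{-2u}$ to obtain the doubly exponential decay $|r_+(t)|\lesssim t e^{-2e^t}$, which is integrable on $[0,\infty)$. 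On the left tail, $r_-(t)=t\tanh(e^t)\sim te^t$ as $t\to-\infty$, integrable on $(-\infty,0]$. Lemma~\ref{lem:fasym-finiteness} then gives $C(\phi)<\infty$ with an explicit bound in terms of these two $L^1$ norms.

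For $\mathrm{TV}(\phi')$, one differentiation yields the compact form
\[
\phi''(x)=e^x\,\mathrm{sech}^2(e^x)\bigl[\,2+x-2xe^x\tanh(e^x)\,\bigr].
\]
At $+\infty$ the prefactor $e^x\mathrm{sech}^2(e^x)\sim 4e^x e^{-2e^x}$ swamps any polynomial or $e^{kx}$ growth inside the bracket, so $\phi''$ decays doubly exponentially; at $-\infty$ the bracket is $O(|x|)$ while the prefactor is $\sim e^x$, giving $\phi''(x)=O(|x|e^x)$. Both tails are integrable and $\phi''$ is continuous on compact sets, hence $\int_{\mathbb{R}}|\phi''|<\infty$ and $\mathrm{TV}(\phi')<\infty$. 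Finally, $|\phi(x)|\le|x|$ gives linear growth, while applying the same structural reasoning to $\phi'(x)=\tanh(e^x)+xe^x\mathrm{sech}^2(e^x)$ (with the substitution $u=e^x$, the second summand becomes $(\ln u)\,u\,\mathrm{sech}^2(u)$, bounded at both $u\to 0^+$ and $u\to\infty$) shows $\phi'$ is globally bounded. Thus Remark~\ref{rem:lp-correct} and Corollary~\ref{cor:g2g4-bv} give $m_1,m_2,g_1,g_2<\infty$ for every $\sigma>0$, and $\eta=\sigma^2 g_1$ follows from Lemma~\ref{lem:gauss-ibp}.

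The only mildly delicate step is packaging the doubly exponential decay at $+\infty$ cleanly enough that the $xe^x\tanh(e^x)$ factor inside the bracket for $\phi''$ does not appear to dominate. Rewriting the offending term as $xe^{2x}\mathrm{sech}^2(e^x)\tanh(e^x)$ and using $\mathrm{sech}^2(u)\tanh(u)=O(e^{-2u})$ collapses its contribution to $O(|x|e^{2x}e^{-2e^x})$, which is swamped by the $e^{-2e^x}$ factor for all sufficiently large $x$; this is the one bookkeeping point that deserves care, and everything else reduces to the same exponential and doubly exponential estimates used for Swish and Mish.
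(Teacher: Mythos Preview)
Your proof is correct and follows essentially the same route as the paper's: identical asymptotic slope computations, the same residual estimates $r_+(t)\sim -2te^{-2e^t}$ and $r_-(t)\sim te^t$ feeding into Lemma~\ref{lem:fasym-finiteness}, and the same decomposition of $\phi''$ into factors dominated by $e^x\mathrm{sech}^2(e^x)$ and $e^{2x}\mathrm{sech}^2(e^x)\tanh(e^x)$. Your version is in fact more explicit---you give the closed-form $\phi''(x)=e^x\mathrm{sech}^2(e^x)[2+x-2xe^x\tanh(e^x)]$ and verify boundedness of $\phi'$ via the substitution $u=e^x$---whereas the paper only sketches the factor structure, so nothing is missing.
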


\begin{proof}
As $x\to+\infty$, $\tanh(e^x)\to 1$ so $\phi(x)\sim x$; as $x\to-\infty$, $\tanh(e^x)\sim e^x$ so $\phi(x)\sim x e^x\to 0$; hence $(\alpha_+,\alpha_-)=(1,0)$.
On $[0,\infty)$, $\phi(t)-t=t(\tanh(e^t)-1)\sim -2t\,e^{-2e^t}$ (integrable); on $(-\infty,0]$, $\phi(t)\sim t e^t$ (integrable); thus $C(\phi)<\infty$ (Lemma~\ref{lem:fasym-finiteness}).
Computing $\phi''$ yields factors like $e^x\mathrm{sech}^2(e^x)$ and $e^{2x}\mathrm{sech}^2(e^x)\tanh(e^x)$, which are integrable, so $\mathrm{TV}(\phi')<\infty$.
Finally $\eta=\sigma^2 g_1$ by Lemma~\ref{lem:gauss-ibp}.
\end{proof}

\subsection{Summary Tables of Signatures}
% Transitional paragraph (place right before the two tables)
Tables~\ref{tab:signature-stat} and~\ref{tab:signature-geom} report the nine-dimensional integral
signature $\mathcal{S}_\sigma(\phi)$ for standard activations at input scale $\sigma>0$ with
$Z\sim\mathcal{N}(0,\sigma^2)$. Table~\ref{tab:signature-stat} lists the propagation quantities
$(m_1,g_1,g_2,m_2,\eta)$; Table~\ref{tab:signature-geom} lists the asymptotic/regularity parameters
$(\alpha_+,\alpha_-,\mathrm{TV}(\phi'),C(\phi))$. Expectations are with respect to $Z$; closed forms are
shown when simple, otherwise we retain the expectation to make the $\sigma$–dependence explicit.
Under the standing assumptions (Section~\ref{sec:prelim}), $\eta(\sigma)=\sigma^2 g_1(\sigma)$
(Lemma~\ref{lem:gauss-ibp}). Definitions and finiteness criteria for $C(\phi)$ and $\mathrm{TV}(\phi')$
are given in Section~\ref{sec:taxonomy}.

\begin{table}[H]
\scriptsize
\centering
\setlength{\tabcolsep}{3pt}
\renewcommand{\arraystretch}{1.25}
\begin{adjustbox}{max width=\textwidth}
\begin{tabular}{@{}lcccccc@{}}
\toprule
Act & $m_1(\sigma)$ & $g_1(\sigma)$ & $g_2(\sigma)$ & $m_2(\sigma)$ & $\eta(\sigma)$ \\
\midrule
ReLU &
$\tfrac{\sigma}{\sqrt{2\pi}}$ &
$\tfrac{1}{2}$ &
$1/\sqrt{2}$ &
$\tfrac{\sigma^2}{2}$ &
$\tfrac{\sigma^2}{2}$ \\
\addlinespace[2pt]
Leaky ReLU ($\alpha\!\in\!(0,1)$) &
$\tfrac{1-\alpha}{\sqrt{2\pi}}\,\sigma$ &
$\tfrac{1+\alpha}{2}$ &
$\sqrt{\tfrac{1+\alpha^2}{2}}$ &
$\tfrac{1+\alpha^2}{2}\,\sigma^2$ &
$\tfrac{1+\alpha}{2}\,\sigma^2$ \\
\addlinespace[2pt]
tanh &
$0$ &
$\mathbb{E}[\mathrm{sech}^2(Z)]$ &
$\big(\mathbb{E}[\mathrm{sech}^4(Z)]\big)^{1/2}$ &
$\mathbb{E}[\tanh^2(Z)]$ &
$\sigma^2\,\mathbb{E}[\mathrm{sech}^2(Z)]$ \\
\addlinespace[2pt]
Sigmoid $\operatorname{sigm}(x)$ &
$\tfrac{1}{2}$ &
\makecell{$\mathbb{E}[\operatorname{sigm}(Z)(1$\\$-\operatorname{sigm}(Z))]$} &
\makecell{$\big(\mathbb{E}[\operatorname{sigm}(Z)^2(1$\\$-\operatorname{sigm}(Z))^2]\big)^{1/2}$} &
$\mathbb{E}[\operatorname{sigm}(Z)^2]$ &
\makecell{$\sigma^2\,\mathbb{E}[\operatorname{sigm}(Z)$\\$(1-\operatorname{sigm}(Z))]$} \\
\addlinespace[2pt]
Swish $=x\,\operatorname{sigm}(x)$ &
$\mathbb{E}[Z\,\operatorname{sigm}(Z)]$ &
\makecell{$\mathbb{E}[\operatorname{sigm}(Z)+Z\,\operatorname{sigm}(Z)$\\$(1-\operatorname{sigm}(Z))]$} &
$\big(\mathbb{E}[\phi'(Z)^2]\big)^{1/2}$ &
$\mathbb{E}[Z^2\operatorname{sigm}(Z)^2]$ &
$\mathbb{E}[Z^2\operatorname{sigm}(Z)]$ \\
\addlinespace[2pt]
GELU $=x\,\Phi(x)$ &
$\mathbb{E}[Z\,\Phi(Z)]$ &
$\mathbb{E}[\Phi(Z)+Z\varphi(Z)]$ &
$\big(\mathbb{E}[\phi'(Z)^2]\big)^{1/2}$ &
$\mathbb{E}[Z^2\Phi(Z)^2]$ &
$\mathbb{E}[Z^2\Phi(Z)]$ \\
\addlinespace[2pt]
Mish &
finite & finite & finite & finite & finite \\
\addlinespace[2pt]
TeLU &
finite & finite & finite & finite & finite \\
\bottomrule
\end{tabular}
\end{adjustbox}
\caption{Statistical components of the signature at input scale $\sigma>0$. Here $Z\!\sim\!\mathcal{N}(0,\sigma^2)$, $\Phi$/$\varphi$ are the standard normal CDF/PDF, and $\operatorname{sigm}(x)=(1+e^{-x})^{-1}$. By Lemma~\ref{lem:gauss-ibp}, $\eta(\sigma)=\sigma^2 g_1(\sigma)$ whenever applicable.}
\label{tab:signature-stat}
\end{table}

\begin{table}[H]
\footnotesize
\centering
\renewcommand{\arraystretch}{1.25}
\setlength{\tabcolsep}{6pt}
\begin{tabular}{|c|c|c|c|c|}
\hline
Act & $\alpha_+$ & $\alpha_-$ & $\mathrm{TV}(\phi')$ & $C(\phi)$ \\
\hline
ReLU & $1$ & $0$ & $1$ & $0$ \\
\hline
Leaky ReLU ($\alpha\!\in\!(0,1)$) & $1$ & $\alpha$ & $|1-\alpha|$ & $0$ \\
\hline
$\tanh$ & $0$ & $0$ & $<\infty$ & $\infty$ \\
\hline
Sigmoid & $0$ & $0$ & $<\infty$ & $\infty$ \\
\hline
Swish & $1$ & $0$ & $<\infty$ & $<\infty$ \\
\hline
GELU & $1$ & $0$ & $<\infty$ & $<\infty$ \\
\hline
Mish & $1$ & $0$ & $<\infty$ & $<\infty$ \\
\hline
TeLU & $1$ & $0$ & $<\infty$ & $<\infty$ \\
\hline
\end{tabular}
\caption{Asymptotic and regularity components of the signature. Finite $\mathrm{TV}(\phi')$ follows from $\int_{\mathbb{R}}|\phi''|<\infty$ in the smooth cases (Sections~\ref{sec:prelim}–\ref{sec:taxonomy}); $C(\phi)$ uses the tail-compensated primitive criteria (Lemmas~\ref{lem:fasym-finiteness}, \ref{lem:fasym-weighted-strong}).}
\label{tab:signature-geom}
\end{table}

\subsection{Taxonomy Class Membership}
Table~\ref{tab:taxonomy-membership} summarizes the placement of widely used activation functions within our taxonomy according to their asymptotic slopes $(\alpha_+,\alpha_-)$. The classification reflects three qualitatively distinct regimes of growth:\\
\noindent \textbf{Bounded, saturating activations} ($\mathcal{A}_0$): Functions such as $\tanh$ and sigmoid have finite limits at both tails, yielding $(\alpha_+,\alpha_-)=(0,0)$. Their boundedness enforces variance damping and gradient saturation, which historically hindered training of very deep networks but remain useful for recurrent stability and probabilistic modeling.\\
\noindent \textbf{Linear-growth activations} ($\mathcal{A}_1$): These functions grow at most linearly as $x\to\pm\infty$, with slopes $(\alpha_+,\alpha_-)$ characterizing symmetry or asymmetry. ReLU and leaky ReLU represent asymmetric members: $\operatorname{ReLU}(x)$ has $(1,0)$ while leaky ReLU interpolates between $(1,\alpha)$ with $0<\alpha<1$. Smooth modern activations such as Swish, GELU, Mish, and TeLU also belong to this class, but distinguish themselves by maintaining differentiability and curvature regularity, which can improve optimization and generalization.\\
\noindent \textbf{Superlinear activations} ($\mathcal{A}_{>1}$): Polynomial activations $\phi(x)=x^k,\,k\geq 2$ grow faster than linearly, corresponding to $(\alpha_+,\alpha_-)=(\infty,\infty)$. While theoretically expressive, their unbounded derivative growth leads to unstable variance propagation and poor conditioning in deep architectures.\\

This classification reveals the unifying role of asymptotic slopes as a coarse but predictive invariant. In particular, $\mathcal{A}_1$ captures the majority of practical nonlinearities used in modern deep learning: they combine linear tail behavior (ensuring stable propagation under initialization) with diverse local regularities (dictating curvature and kernel smoothness). By contrast, $\mathcal{A}_0$ activations emphasize boundedness at the cost of vanishing gradients, and $\mathcal{A}_{>1}$ illustrates the instability of unchecked superlinear growth. The taxonomy thus provides a principled lens for distinguishing activation families beyond ad hoc empirical comparisons.
\begin{table}[htb!] 
\centering
\renewcommand{\arraystretch}{1.3}
\begin{tabular}{|c|c|c|}
\hline
Activation & Asymptotic Slopes $(\alpha_+,\alpha_-)$ & Taxonomy Class \\
\hline
ReLU & $(1,0)$ & $\mathcal{A}_1$ (linear-growth, asymmetric) \\
\hline
Leaky ReLU & $(1,\alpha)$ with $0<\alpha<1$ & $\mathcal{A}_1$ (linear-growth, asymmetric) \\
\hline
Tanh & $(0,0)$ & $\mathcal{A}_0$ (bounded, saturating) \\
\hline
Sigmoid & $(0,0)$ & $\mathcal{A}_0$ (bounded, saturating) \\
\hline
Swish & $(1,0)$ & $\mathcal{A}_1$ (linear-growth, smooth) \\
\hline
GELU & $(1,0)$ & $\mathcal{A}_1$ (linear-growth, smooth) \\
\hline
Mish & $(1,0)$ & $\mathcal{A}_1$ (linear-growth, smooth) \\
\hline
TeLU & $(1,0)$ & $\mathcal{A}_1$ (linear-growth, smooth) \\
\hline
Polynomial $\phi(x)=x^k,\,k\ge 2$ & $(\infty,\infty)$ & $\mathcal{A}_{>1}$ (superlinear growth) \\
\hline
\end{tabular}
\caption{Taxonomy class membership of common activations. 
$\mathcal{A}_0$: bounded/saturating functions. 
$\mathcal{A}_1$: linear-growth functions with asymptotic slopes $(\alpha_+,\alpha_-)$. 
$\mathcal{A}_{>1}$: superlinear growth functions.}
\label{tab:taxonomy-membership}
\end{table}

\section{Lyapunov and Propagation Stability from Integral Signatures}\label{sec:Lyapunov}

In this section we connect entries of 
$\mathcal{S}_\sigma(\phi)=(m_1,g_1,g_2,m_2,\eta,\alpha_+,\alpha_-,\mathrm{TV}(\phi'),C(\phi))$
to stability of (i) scalar recursions $x_{t+1}=\phi(ax_t+b)$ and
(ii) signal propagation in wide random layers. 
Throughout, $Z\sim\mathcal{N}(0,\sigma^2)$ and expectations are with respect to $Z$ unless stated.

\paragraph{Notation.}
We use $\sigma>0$ exclusively for \emph{standard deviation} of Gaussians.
The logistic activation is denoted $\operatorname{sigm}(x):=(1+e^{-x})^{-1}$.

\subsection{Distributional Contraction via \texorpdfstring{$g_2$}{g2}}

\begin{lemma}[Gaussian $L_2$ slope bound]\label{lem:GH-L2}
If $\phi'\in L^2(\gamma_\sigma)$, then for any $h\in\mathbb{R}$,
\[
\frac{\big(\mathbb{E}[(\phi(Z+h)-\phi(Z))^2]\big)^{1/2}}{|h|}\;\le\; g_2(\sigma),
\qquad 
g_2(\sigma):=\Big(\mathbb{E}[\phi'(Z)^2]\Big)^{1/2}.
\]
Moreover, if $\phi'$ is a.e.\ continuous then equality holds in the limit $h\to 0$.
\end{lemma}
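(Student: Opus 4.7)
The plan is to express the finite difference as an integral of $\phi'$ along the segment from $Z$ to $Z+h$ via the fundamental theorem of calculus, apply Jensen's inequality to pull the square inside, exchange expectation with the parameter integral by Fubini, and then identify the $L^2$ difference quotient with $\phi'(Z)$ via dominated convergence for the equality-in-the-limit clause.

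Without loss of generality I will take $h>0$; the case $h<0$ is symmetric. Local absolute continuity of $\phi$ (standing assumption) gives $\phi(Z+h)-\phi(Z) = \int_0^h \phi'(Z+s)\,ds$ almost surely. Squaring and applying Jensen's inequality with respect to the uniform probability measure $ds/h$ on $[0,h]$ yields the pointwise bound $(\phi(Z+h)-\phi(Z))^2 \le h\int_0^h \phi'(Z+s)^2\,ds$. Taking expectations and using Fubini produces
\[
\mathbb{E}\big[(\phi(Z+h)-\phi(Z))^2\big] \;\le\; h\int_0^h \mathbb{E}\big[\phi'(Z+s)^2\big]\,ds,
\]
so the squared difference quotient $h^{-2}\mathbb{E}[(\phi(Z+h)-\phi(Z))^2]$ is controlled by a running average of translated second moments $s\mapsto \mathbb{E}[\phi'(Z+s)^2]$.

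For the equality statement I would argue pointwise and then lift to $L^2$. At continuity points of $\phi'$, $(\phi(Z+h)-\phi(Z))/h \to \phi'(Z)$ a.s.\ as $h\to 0$. Under the standing polynomial-growth condition (Lemma~\ref{lem:g2g4-growth}), the uniform envelope $|\phi'(Z+\xi)|\le A+B(|Z|+1)^r$ for $|\xi|\le 1$ provides an $L^2(\gamma_\sigma)$ dominator of the difference quotient. Dominated convergence then gives $h^{-2}\mathbb{E}[(\phi(Z+h)-\phi(Z))^2] \to g_2(\sigma)^2$.

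The step I expect to be the main obstacle is passing from the running-average upper bound $h^{-1}\int_0^h \mathbb{E}[\phi'(Z+s)^2]\,ds$ back to the exact $g_2(\sigma)^2=\mathbb{E}[\phi'(Z)^2]$, since Gaussian measure is not translation-invariant and the translated moments need not equal the untranslated one. The same continuity argument used for the limit shows that this running average equals $g_2(\sigma)^2+o(1)$ as $h\to 0$, so the stated inequality is tight asymptotically; for a genuinely uniform-in-$h$ form I would either specialize to Lipschitz $\phi$ (whence $\mathbb{E}[\phi'(Z+s)^2]\le \|\phi'\|_\infty^2$ uniformly in $s$) or invoke a Cameron--Martin reweighting to dominate $\mathbb{E}[\phi'(Z+s)^2]$ by a constant multiple of $g_2(\sigma)^2$ on the compact $s$-range $[0,h]$.
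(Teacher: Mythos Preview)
Your approach is the same as the paper's: write the increment as $\int_0^1 \phi'(Z+th)\,h\,dt$, apply Jensen to the square, and swap with the expectation via Fubini to reach $\mathbb{E}[(\phi(Z+h)-\phi(Z))^2]\le h^2\int_0^1\mathbb{E}[\phi'(Z+th)^2]\,dt$. At this point the paper simply asserts that the last integral equals $g_2(\sigma)^2$ ``using shift-invariance of $Z$''. That is exactly the step you flag as the main obstacle, and your suspicion is well founded: $\mathcal{N}(0,\sigma^2)$ is not translation-invariant, so $\mathbb{E}[\phi'(Z+s)^2]$ is not constant in $s$.

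In fact the stated uniform-in-$h$ inequality is false. Take $\phi(x)=x^2$, so $\phi'(x)=2x$ and $g_2(\sigma)=2\sigma$. Then $\phi(Z+h)-\phi(Z)=2Zh+h^2$, hence $\mathbb{E}[(\phi(Z+h)-\phi(Z))^2]=4\sigma^2h^2+h^4$ and the left-hand side equals $\sqrt{4\sigma^2+h^2}>2\sigma$ for every $h\ne 0$. So the lemma cannot hold as written; only the $h\to 0$ limiting equality survives, and your dominated-convergence argument for that part is fine. The paper's proof is simply in error at the shift-invariance step; your proposed repairs (restrict to globally Lipschitz $\phi$, or accept an $h$-dependent constant via a Cameron--Martin density bound) are the honest fixes, each yielding a true but weaker statement than the one claimed.
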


\begin{proof}
Mean value form: $\phi(Z+h)-\phi(Z)=\int_0^1 \phi'(Z+th)\,h\,dt$.
By Jensen and Fubini,
$\mathbb{E}[(\cdot)^2] \le h^2\int_0^1 \mathbb{E}[\phi'(Z+th)^2]\,dt=h^2 g_2(\sigma)^2$
using shift-invariance of $Z$. The limit case follows from a.e.\ continuity.
\end{proof}

\begin{theorem}[Contraction in $L_2$]\label{thm:scalar-contraction}
Consider $T(x)=\phi(ax+b)$ and let $X,Y$ be square-integrable r.v.'s independent of $Z$.
If $|a|\,g_2(\sigma)<1$, then
\[
\|T(X)-T(Y)\|_{L_2}\le |a|\,g_2(\sigma)\,\|X-Y\|_{L_2},
\]
so $T$ is a strict contraction on $L_2(\Omega)$ and has a unique $L_2$ fixed point.
\end{theorem}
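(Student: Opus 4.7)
The plan is to combine a pathwise fundamental-theorem-of-calculus representation with the Gaussian slope bound of Lemma~\ref{lem:GH-L2} and then close via Banach's fixed-point theorem. First, by local absolute continuity of $\phi$ (ensured by the standing assumptions of polynomial growth and local integrability of $\phi'$), I would write the increment in integral chain-rule form $T(X) - T(Y) = a(X-Y)\int_0^1 \phi'(\xi_s)\,ds$ with the interpolating point $\xi_s := aY + b + s\,a(X-Y)$ that traverses the segment $[aY+b, aX+b]$ as $s \in [0,1]$. This reduces the analysis of $T$ to a slope integral against a data-dependent random point.

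Second, I would square, apply Jensen's inequality (equivalently Cauchy--Schwarz) in the $s$-variable to push the square inside the unit-interval average, take $\mathbb{E}$, and use Fubini to exchange the order of integration, obtaining
$\mathbb{E}[(T(X)-T(Y))^2] \le a^2 \int_0^1 \mathbb{E}[(X-Y)^2\, \phi'(\xi_s)^2]\,ds$.
The crucial estimate is to dominate the inner mixed expectation by $g_2(\sigma)^2\,\mathbb{E}[(X-Y)^2]$. The conservative route bounds $\phi'(\xi_s)^2 \le \|\phi'\|_{\infty}^2$ pointwise (valid whenever $\phi$ is globally Lipschitz, which covers all activations treated in Section~\ref{sec:classification}) and then uses $g_2(\sigma) \le \|\phi'\|_{\infty}$ via Jensen; the sharper constant $g_2(\sigma)$ of Lemma~\ref{lem:GH-L2} is attained in the mean-field regime in which the effective inputs $aX+b$, $aY+b$ are $\mathcal{N}(\cdot, a^2\sigma^2)$-distributed, which is the downstream setting of interest. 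With $\|T(X)-T(Y)\|_{L_2} \le |a|\,g_2(\sigma)\,\|X-Y\|_{L_2}$ in hand and $|a|g_2(\sigma)<1$, completeness of $L_2(\Omega)$ combined with Banach's fixed-point theorem delivers a unique $X^\star \in L_2(\Omega)$ satisfying $X^\star = T(X^\star)$.

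The main obstacle is precisely the slope-domination step above: reconciling the Gaussian-weighted constant $g_2(\sigma)$ with an $L_2(\Omega)$ contraction claimed for \emph{arbitrary} square-integrable $X, Y$ independent of $Z$. A strictly pointwise slope bound via $\|\phi'\|_\infty$ is airtight but potentially loose; the tight factor $g_2(\sigma)$ really reflects Gaussian averaging and is sharp only when the relevant inputs are Gaussian at scale $\sigma$. The cleanest rigorous closure is therefore to present the contraction under the hypothesis $\|\phi'\|_\infty \le g_2(\sigma)/|a|^{-1}$ as a worst-case bound, and flag that the sharper Lemma~\ref{lem:GH-L2}-based constant is exact in the propagation setting of Theorem~\ref{thm:criticality}.
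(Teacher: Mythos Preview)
Your skeleton---the fundamental-theorem-of-calculus representation $T(X)-T(Y)=a(X-Y)\int_0^1\phi'(\xi_s)\,ds$, followed by Jensen in $s$ and Fubini---is exactly the content of Lemma~\ref{lem:GH-L2}, and the paper's entire proof of Theorem~\ref{thm:scalar-contraction} is the two-line invocation ``Condition on $(X,Y)$ and apply Lemma~\ref{lem:GH-L2} with $h=a(X-Y)$.'' So your approach coincides with the paper's.

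The obstacle you flag is genuine and is not resolved by the paper either. Lemma~\ref{lem:GH-L2} obtains the constant $g_2(\sigma)$ from the step $\mathbb{E}[\phi'(Z+th)^2]=g_2(\sigma)^2$, which needs a Gaussian $Z\sim\mathcal{N}(0,\sigma^2)$ in the argument; but $T(x)=\phi(ax+b)$ as stated contains no $Z$, so after conditioning on $(X,Y)$ there is nothing Gaussian left to average over. Consequently, for \emph{arbitrary} square-integrable $X,Y$ only the pointwise bound $|a|\,\|\phi'\|_\infty$ is airtight, and the sharper factor $|a|\,g_2(\sigma)$ is exact only in the mean-field reading where the effective preactivations are Gaussian at scale $\sigma$ (precisely the stationarity regime of Theorem~\ref{thm:criticality}). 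Your diagnosis and proposed resolution---present the worst-case Lipschitz version and note that $g_2(\sigma)$ is sharp under Gaussian inputs---is more careful than the paper's own proof on this point.
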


\begin{proof}
Condition on $(X,Y)$ and apply Lemma~\ref{lem:GH-L2} with $h=a(X-Y)$.
Take the $L_2$ norm and use $|a|\,g_2(\sigma)<1$.
\end{proof}

\subsection{Lyapunov Stability}

We now give a precise Lyapunov descent proof, resolving the issues raised.

\begin{theorem}[Lyapunov via contraction]\label{thm:lyap-contraction}
Let $\phi$ be globally Lipschitz with $L:=\sup_x|\phi'(x)|<\infty$ and set $T(x)=\phi(ax+b)$.
If $L_T:=\sup_x|T'(x)|\le |a|\,L<1$, then $T$ is a strict contraction on $\mathbb{R}$,
admits a unique fixed point $x^\star$, and
\[
V(x):=\frac12\,(x-x^\star)^2
\]
is a strict Lyapunov function satisfying the quantitative descent
\[
V(T(x))-V(x)\;\le\; -\,c\,|T(x)-x|^2
\quad\text{for all }x\in\mathbb{R},
\]
with
\[
c \;=\; \frac{1+L_T}{2(1-L_T)} \;>\; 0.
\]
\end{theorem}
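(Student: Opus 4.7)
The plan is to combine the Banach fixed-point theorem with a direct difference-of-squares identity, reducing the Lyapunov descent to one triangle-inequality bound between $|x-x^\star|$ and $|T(x)-x|$.

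First, I would invoke Banach: since $T$ is globally $L_T$-Lipschitz with $L_T<1$ on the complete space $\mathbb{R}$, there is a unique fixed point $x^\star$ and every orbit converges to it geometrically.

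Second, I would exploit the quadratic structure of $V$. Writing $e:=x-x^\star$ and using $x^\star=T(x^\star)$,
\[
V(T(x))-V(x)=\tfrac12\bigl[(T(x)-x^\star)^2-e^2\bigr]\;\le\;\tfrac12(L_T^2-1)\,e^2\;=\;-\tfrac12(1-L_T^2)\,e^2,
\]
where I have used $|T(x)-x^\star|=|T(x)-T(x^\star)|\le L_T|e|$. To convert the right-hand side to $|\Delta|^2$ with $\Delta:=T(x)-x=(T(x)-x^\star)-e$, the triangle inequality gives $|\Delta|\le (1+L_T)|e|$, hence $e^2\ge \Delta^2/(1+L_T)^2$; substituting yields
\[
V(T(x))-V(x)\;\le\;-\frac{1-L_T}{2(1+L_T)}\,\Delta^2.
\]

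The main obstacle is matching the constant $c=(1+L_T)/[2(1-L_T)]$ asserted in the statement: the triangle argument above produces the reciprocal-style constant $(1-L_T)/[2(1+L_T)]$ and the two differ by the factor $(1+L_T)^2/(1-L_T)^2$. The stated $c$ is the \emph{local} linearized descent rate at $x^\star$, since $\Delta\approx (T'(x^\star)-1)e$ makes $e^2/\Delta^2$ attain $1/(1-L_T)^2$ there; the reverse-triangle bound $|\Delta|\ge (1-L_T)|e|$, which gives $e^2\le \Delta^2/(1-L_T)^2$, is the wrong direction for upgrading a negative bound. I therefore expect that recovering the displayed constant globally requires either a strengthened hypothesis (e.g.\ monotonicity of $T$, or a sign condition on $T'$ near $x^\star$) or a non-quadratic Lyapunov refinement, and this is the point at which the author's proof will either depart from the naive bookkeeping above or reveal a typographical inversion.
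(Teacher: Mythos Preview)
Your analysis is correct, and the paper's proof follows exactly the path you anticipate and then stumbles at precisely the point you flag. The paper derives $V(T(x))-V(x)\le \tfrac12(L_T^2-1)|x-x^\star|^2$, then uses the reverse triangle inequality $|T(x)-x|\ge (1-L_T)|x-x^\star|$ to write $|x-x^\star|\le \frac{1}{1-L_T}|T(x)-x|$, and substitutes this into the preceding display to obtain the stated constant. But as you observe, this substitution goes the wrong way: replacing $|x-x^\star|^2$ by a \emph{larger} quantity inside a negative term yields a \emph{lower} bound, not an upper bound, so the chain of inequalities does not close.

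Your constant $c=\tfrac{1-L_T}{2(1+L_T)}$ is the correct global one, and it is sharp: for $T(x)=-L_T x$ one has $x^\star=0$, $V(T(x))-V(x)=-\tfrac12(1-L_T^2)x^2$, and $|T(x)-x|^2=(1+L_T)^2x^2$, so the ratio equals exactly $-\tfrac{1-L_T}{2(1+L_T)}$. The paper's constant $\tfrac{1+L_T}{2(1-L_T)}$ is attained only in the opposite extreme $T(x)=L_T x$; it is the \emph{best-case} descent ratio, not a uniform bound, and the claimed inequality with that $c$ is false in general. So the theorem as stated is not provable without extra hypotheses, and your proposal --- with the corrected constant --- is the right argument.
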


\begin{proof}
By the contraction hypothesis $L_T<1$, there exists a unique fixed point $x^\star$ and
$|T(x)-x^\star|\le L_T\,|x-x^\star|$.
Then
\[
V(T(x))-V(x)=\tfrac12\big(|T(x)-x^\star|^2-|x-x^\star|^2\big)
\le \tfrac12(L_T^2-1)\,|x-x^\star|^2.
\]
Also
\[
|T(x)-x|=|(T(x)-x^\star)-(x-x^\star)|\ge (1-L_T)\,|x-x^\star|,
\]
hence $|x-x^\star|\le \frac{1}{1-L_T}|T(x)-x|$. Substituting,
\[
V(T(x))-V(x)\le -\,\frac{1+L_T}{2(1-L_T)}\,|T(x)-x|^2,
\]
giving the claim with $c=\frac{1+L_T}{2(1-L_T)}$.
\end{proof}

\begin{corollary}[F-based Lyapunov under monotonic contraction, zero fixed point]\label{cor:F-lyap-zero}
Assume $\phi$ is nondecreasing with $L:=\sup_x \phi'(x)<\infty$ (in the a.e.\ sense), 
and suppose $a\ge 0$, $b=0$, and $\phi(0)=0$. 
Define $T(x):=\phi(ax)$ and $F(x):=\int_0^x \phi(t)\,dt$. 
For any $\lambda>a^2 L$, the functional
\[
V(x)\;:=\;F(ax)\;-\;\frac{\lambda}{2}\,x^2
\]
satisfies the strict descent
\[
V(T(x)) - V(x)\;\le\;-\;\frac{\lambda-a^2 L}{2}\,|T(x)-x|^2\qquad\text{for all }x\in\mathbb{R}.
\]
\end{corollary}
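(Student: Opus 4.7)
The plan is to expand $V(T(x))-V(x)$ via a Taylor-with-integral-remainder identity applied to the primitive $F$, control the remainder using the Lipschitz bound on $\phi$, and then match coefficients to extract the advertised quadratic descent. Writing $\Delta:=T(x)-x$, the increment splits as
\[
V(T(x))-V(x)\;=\;\bigl[F(aT(x))-F(ax)\bigr]\;-\;\tfrac{\lambda}{2}\bigl(T(x)^2-x^2\bigr).
\]

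For the $F$-difference I would use the identity $F(v)-F(u)=\phi(u)(v-u)+\int_u^v \phi'(s)(v-s)\,ds$, specialized to $u=ax$ and $v=aT(x)$. Since $\phi(ax)=T(x)$ by the definition of the recursion, the leading term is exactly $aT(x)\Delta$; the hypothesis $\phi'\ge 0$ makes the remainder nonnegative irrespective of the sign of $\Delta$, and the Lipschitz bound $\phi'\le L$ controls it from above by $\tfrac{a^2L}{2}\Delta^2$. The quadratic piece rewrites cleanly as $\tfrac{\lambda}{2}(T(x)^2-x^2)=\tfrac{\lambda}{2}\Delta(T(x)+x)$, after which expanding $T(x)+x$ in terms of $\Delta$ and gathering the $\Delta^2$ contributions yields a coefficient of $\tfrac{a^2L-\lambda}{2}=-\tfrac{\lambda-a^2L}{2}$, exactly the target constant.

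The main obstacle is that this direct coefficient-matching leaves a linear-in-$\Delta$ residual proportional to $x\Delta$. Naively bounding it by Cauchy--Schwarz introduces an $|x|^2$ factor that does not combine cleanly with $\Delta^2$. The way forward, which is the technical heart of the corollary, is to exploit the structural constraints $\phi(0)=0$, $\phi$ nondecreasing, and $a\ge 0$: these force $x\,T(x)\ge 0$ and confine $T(x)$ between $0$ and $aLx$ with matching sign, so the residual automatically has the sign needed to be absorbed into the quadratic term. Equivalently, one can replace the one-sided Taylor expansion by a symmetric (trapezoidal) identity for $F$ that cancels the stray linear contribution while retaining a higher-order remainder still governed by $L$.

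Once the residual is absorbed, the strict descent $V(T(x))-V(x)\le -\tfrac{\lambda-a^2L}{2}|T(x)-x|^2$ follows immediately. The Lyapunov interpretation of $V$ --- coercivity of $-V$ around the fixed point $x^\star=0$ in the contractive regime $aL<1$, together with $V'(0)=0$ --- then drops out from the same monotone-Lipschitz structure already used in Theorem~\ref{thm:lyap-contraction}, and the $F$-based construction simply repackages that contraction via a primitive rather than a squared-distance Lyapunov.
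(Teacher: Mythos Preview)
Your expansion is exactly the paper's: Taylor-with-remainder gives $F(aT(x))-F(ax)\le aT(x)\,\Delta+\tfrac{a^2L}{2}\Delta^2$ (the descent lemma), and combining with the quadratic piece yields
\[
V(T(x))-V(x)\;\le\; a\,T(x)\,\Delta\;-\;\lambda\,x\,\Delta\;+\;\frac{a^2L-\lambda}{2}\,\Delta^2.
\]
The gap is in the next step, where you assert that the structural constraints force the linear residual $aT(x)\Delta-\lambda x\Delta$ to have ``the sign needed.'' They do not. Take $\phi(x)=x$ (nondecreasing, $\phi(0)=0$, $L=1$), $a=\tfrac12$, $\lambda=\tfrac12>a^2L=\tfrac14$. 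Then $T(x)=x/2$, $F(t)=t^2/2$, $V(x)=\tfrac18x^2-\tfrac14x^2=-\tfrac18x^2$, and
\[
V(T(x))-V(x)=-\tfrac{1}{32}x^2+\tfrac{1}{8}x^2=\tfrac{3}{32}x^2\;>\;0,
\]
while the asserted bound is $-\tfrac{\lambda-a^2L}{2}\,|T(x)-x|^2=-\tfrac{1}{32}x^2$. Your observation that $0\le T(x)\le aLx$ for $x\ge0$ is correct, but in the contractive regime it gives $x\Delta\le0$, hence $-\lambda x\Delta\ge0$ --- the wrong direction. The trapezoidal alternative does not help either: it trades $\phi(ax)=T(x)$ for $\tfrac12\bigl(T(x)+\phi(aT(x))\bigr)=\tfrac12\bigl(T(x)+T(T(x))\bigr)$, introducing the second iterate without cancelling the linear term.

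This is not merely a gap in your argument; the paper's own proof commits the same error (it asserts $xE\le0$ and $-\lambda xE\le0$ in consecutive clauses, which are incompatible for $\lambda>0$). The counterexample shows the corollary as stated is false: since $F(ax)\le\tfrac{a^2L}{2}x^2<\tfrac{\lambda}{2}x^2$, one has $V(x)\le0$ with equality only at the fixed point $x^\star=0$, so along any orbit $T^n(x)\to0$ the function $V$ must \emph{increase} toward $0$, not decrease. The intended Lyapunov functional is almost certainly the sign-reversed $\tfrac{\lambda}{2}x^2-F(ax)$.
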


\begin{proof}
Let $E:=T(x)-x$ and $z:=ax$, so $aE=a(T(x)-x)$. 
Since $\phi$ is $L$-Lipschitz and $F$ is $L$-smooth (descent lemma),
\[
F(z+aE)-F(z)\;\le\; aE\,\phi(z)\;+\;\frac{L}{2}\,a^2E^2.
\]
Therefore
\[
V(T)-V(x)\;=\;F(z+aE)-F(z)\;-\;\frac{\lambda}{2}\big((x+E)^2-x^2\big)
\;\le\; aE\,\phi(z)\;-\;\lambda xE\;+\;\frac{a^2L-\lambda}{2}E^2.
\]
Because $a\ge 0$, $\phi$ is nondecreasing, $\phi(0)=0$, and $b=0$, 
we have $\mathrm{sign}(T(x))=\mathrm{sign}(x)$ and $T(x)\in[0,x]$ if $x\ge 0$ (and $[x,0]$ if $x\le 0$), 
so $E=T(x)-x$ has the opposite sign of $x$. Hence $xE\le 0$ and also $\phi(z)=T(x)$ yields $T(x)E\le 0$. 
Using $aE\,\phi(z)=a\,T(x)E\le 0$ and $-\lambda xE\le 0$, we get
\[
V(T)-V(x)\;\le\;\frac{a^2L-\lambda}{2}\,E^2\;=\;-\frac{\lambda-a^2L}{2}\,|E|^2,
\]
which is strictly negative for $E\neq 0$ when $\lambda>a^2 L$.
\end{proof}

\begin{corollary}[F-based Lyapunov under monotonic contraction, general fixed point]\label{cor:F-lyap-general}
Assume $\phi$ is nondecreasing with $L:=\sup_x \phi'(x)<\infty$, and let $a\ge 0$. 
Let $x^\star$ be the unique fixed point of $T(x):=\phi(ax+b)$ and set $z^\star:=ax^\star+b$, so $\phi(z^\star)=x^\star$.
Define the \emph{centered primitive}
\[
\widehat F(u)\;:=\;F(u+z^\star)-F(z^\star)-\phi(z^\star)\,u
\quad\text{with}\quad F(x)=\int_0^x \phi(t)\,dt.
\]
Then for any $\lambda>a^2 L$ the functional
\[
V(x)\;:=\;\widehat F\big(a(x-x^\star)\big)\;-\;\frac{\lambda}{2}\,(x-x^\star)^2
\]
satisfies
\[
V\!\big(T(x)\big)-V(x)\;\le\;-\;\frac{\lambda-a^2 L}{2}\,|T(x)-x|^2
\qquad\text{for all }x\in\mathbb{R}.
\]
\end{corollary}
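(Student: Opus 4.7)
The plan is to reduce this to the zero fixed-point case (Corollary~\ref{cor:F-lyap-zero}) via a change of variables centered at $x^\star$. First, I would define the shifted coordinate $y:=x-x^\star$ and the shifted activation
\[
\widetilde\phi(u)\;:=\;\phi(u+z^\star)-\phi(z^\star),
\]
which inherits monotonicity and the Lipschitz constant $L$ from $\phi$ and satisfies $\widetilde\phi(0)=0$. The shifted dynamics is $\widetilde T(y):=T(y+x^\star)-x^\star=\phi(ay+z^\star)-\phi(z^\star)=\widetilde\phi(ay)$, which has zero as its unique fixed point and is in the exact form required by Corollary~\ref{cor:F-lyap-zero} (with $b=0$).

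Next, I would verify the key algebraic identity that the centered primitive $\widehat F$ coincides with the natural primitive of $\widetilde\phi$:
\[
\widetilde F(y)\;:=\;\int_0^y \widetilde\phi(t)\,dt \;=\; \int_0^y\!\!\big[\phi(t+z^\star)-\phi(z^\star)\big]\,dt \;=\; F(y+z^\star)-F(z^\star)-\phi(z^\star)\,y \;=\; \widehat F(y).
\]
Hence the Lyapunov functional rewrites as
\[
V(x)\;=\;\widetilde F\big(a(x-x^\star)\big)-\tfrac{\lambda}{2}(x-x^\star)^2\;=\;\widetilde F(ay)-\tfrac{\lambda}{2}y^2,
\]
which is exactly the functional from Corollary~\ref{cor:F-lyap-zero} applied to $\widetilde\phi$ at the point $y$.

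Finally, I would invoke Corollary~\ref{cor:F-lyap-zero} with $\widetilde\phi$ (noting $a\ge 0$, $\widetilde\phi$ nondecreasing, $\sup\widetilde\phi'=L$, $\widetilde\phi(0)=0$, $b=0$) to get
\[
V\big(\widetilde T(y)\big)-V(y)\;\le\;-\tfrac{\lambda-a^2L}{2}\,\big|\widetilde T(y)-y\big|^2
\]
for any $\lambda>a^2L$. Translating back via $\widetilde T(y)-y=T(x)-x$ and the fact that $V$ in $y$-coordinates agrees with $V$ in $x$-coordinates at the corresponding points yields the claimed descent inequality. The main obstacle is purely bookkeeping: one must carefully check that the change of variables preserves all structural hypotheses (monotonicity, Lipschitz constant, vanishing at the shifted origin, and sign-alignment between $y$ and $\widetilde T(y)-y$) so that Corollary~\ref{cor:F-lyap-zero} applies verbatim. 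Once those identifications are in place, no new analytic estimate is needed — the result is a coordinate-free restatement of the zero fixed-point case.
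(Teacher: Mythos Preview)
Your reduction is correct and is a genuinely different route from the paper's. The paper does \emph{not} invoke Corollary~\ref{cor:F-lyap-zero}; instead it repeats the descent-lemma computation directly in the shifted variable $y=x-x^\star$: it applies $L$-smoothness to $\widehat F$ to bound $\widehat F(a(y+E))-\widehat F(ay)$ by $aE\,(\phi(z^\star+ay)-\phi(z^\star))+\tfrac{L}{2}a^2E^2$, adds the quadratic increment, and then re-derives the two sign inequalities $yE\le 0$ and $aE\,(\phi(z^\star+ay)-\phi(z^\star))\le 0$ from monotonicity of $T$ about $x^\star$. Your approach encapsulates all of that in the single identification $\widehat F=\widetilde F$ together with the conjugacy $\widetilde T(y)=T(y+x^\star)-x^\star=\widetilde\phi(ay)$, so that Corollary~\ref{cor:F-lyap-zero} applies verbatim to the triple $(\widetilde\phi,a,0)$. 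This buys modularity and avoids duplicating the sign argument; the paper's direct computation, by contrast, makes the mechanism of descent (the two nonpositive cross terms) explicit in the general setting, at the cost of essentially rewriting the earlier proof.
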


\begin{proof}
Let $y:=x-x^\star$, $E:=T(x)-x$, and note $T(x)-x^\star=\phi(z^\star+ay)-\phi(z^\star)$.
By $L$-smoothness of $F$ and the definition of $\widehat F$,
\[
\widehat F\big(a(y+E)\big)-\widehat F(ay)
\;\le\; aE\,\big(\phi(z^\star+ay)-\phi(z^\star)\big)\;+\;\frac{L}{2}\,a^2E^2.
\]
Also,
\[
-\frac{\lambda}{2}\big((y+E)^2-y^2\big)\;=\;-\lambda\,yE-\frac{\lambda}{2}E^2.
\]
Thus
\[
V(T)-V(x)\;\le\; aE\,\big(\phi(z^\star+ay)-\phi(z^\star)\big)\;-\lambda\,yE\;+\;\frac{a^2L-\lambda}{2}\,E^2.
\]
Since $a\ge 0$ and $\phi$ is nondecreasing, $T$ is increasing with fixed point $x^\star$, hence $T(x)$ lies between $x$ and $x^\star$, so $E$ has the opposite sign of $y$, i.e., $yE\le 0$, and 
$\phi(z^\star+ay)-\phi(z^\star)$ has the same sign as $y$, hence the product with $E$ is nonpositive as well:
\[
aE\,\big(\phi(z^\star+ay)-\phi(z^\star)\big)\le 0,\qquad -\lambda\,yE\le 0.
\]
It follows that
\[
V(T)-V(x)\;\le\;\frac{a^2L-\lambda}{2}\,E^2\;=\;-\frac{\lambda-a^2L}{2}\,|E|^2,
\]
which proves the claim.
\end{proof}

\begin{remark}[Stochastic Lyapunov version]
Let $X_{t+1}=T(X_t)+\xi_t$ with $\mathbb{E}[\xi_t\mid X_t]=0$ and 
$\mathbb{E}[\xi_t^2\mid X_t]\le \sigma_\xi^2<\infty$. 
Under Theorem~\ref{thm:lyap-contraction},
\[
\mathbb{E}\big[V(X_{t+1})\mid X_t\big]-V(X_t)
\;\le\; -\,c\,\mathbb{E}\big[(T(X_t)-X_t)^2\mid X_t\big] \;+\; \tfrac12\,\sigma_\xi^2,
\]
with $V(x)=\frac12(x-x^\star)^2$ and $c=\frac{1+L_T}{2(1-L_T)}$.
Thus $V$ is a stochastic Lyapunov functional in expectation when $\sigma_\xi^2$ is small enough.
\end{remark}

\subsection{Wide-Layer Variance Propagation via \texorpdfstring{$m_2$}{m2}}

% \begin{theorem}[Mean-field variance recursion]\label{thm:variance-rec}
% Consider a fully-connected layer with $W_{ij}\overset{iid}{\sim}\mathcal{N}(0,\sigma_W^2/n)$,
% bias $b\sim\mathcal{N}(0,\sigma_b^2)$ independent of $W$, and i.i.d.\ preactivations
% $h^{(\ell)}\sim\mathcal{N}(0,q_\ell)$. Then the post-activation variance satisfies
% \[
% q_{\ell+1} \;=\; \sigma_W^2\,m_2(\sqrt{q_\ell}) + \sigma_b^2,
% \qquad m_2(\sigma):=\mathbb{E}[\phi(Z)^2],\ \ Z\sim\mathcal{N}(0,\sigma^2).
% \]
% Moreover, if $m_2$ is differentiable and $\sigma_W^2\,m_2'(\sqrt{q^\star})<1$
% at a fixed point $q^\star$, the recursion is locally stable.
% \end{theorem}

% \begin{proof}
% Standard mean-field calculation: preactivation variance is $\sigma_W^2 q_\ell+\sigma_b^2$.
% Post-activation variance is $\mathbb{E}[\phi(Z)^2]$ with $Z\sim\mathcal{N}(0,\sigma_W^2 q_\ell+\sigma_b^2)$,
% giving the stated recursion. Linearization yields the derivative condition.
% \end{proof}

%% Enhanced version Ankur

\begin{theorem}[Mean-field preactivation variance recursion]\label{thm:variance-rec}
Consider a fully-connected layer of width $n$ with $W_{ij}\overset{iid}{\sim}\mathcal{N}(0,\sigma_W^2/n)$,
biases $b_i\overset{iid}{\sim}\mathcal{N}(0,\sigma_b^2)$ independent of $W$, and i.i.d.\ preactivations
$h_j^{(\ell)}\sim\mathcal{N}(0,q_\ell)$ independent of $(W,b)$. Let $x_j^{(\ell)}=\phi(h_j^{(\ell)})$ and
$h_i^{(\ell+1)}=\sum_{j=1}^n W_{ij} x_j^{(\ell)} + b_i$. In the $n\to\infty$ mean-field limit,
the next-layer preactivation variance satisfies
\[
q_{\ell+1}\;=\;\sigma_W^2\,m_2(\sqrt{q_\ell})+\sigma_b^2,\qquad
m_2(\sigma):=\mathbb{E}\big[\phi(Z)^2\big],\ \ Z\sim\mathcal{N}(0,\sigma^2).
\]
Moreover, if $f(q):=\sigma_W^2\,m_2(\sqrt{q})+\sigma_b^2$ is differentiable at a fixed point $q^\star=f(q^\star)$, local stability holds whenever
\[
|f'(q^\star)| \;=\; \sigma_W^2\,\frac{m_2'(\sqrt{q^\star})}{2\sqrt{q^\star}} \;<\;1,
\]
interpreting the right-hand side in the limit $q^\star\downarrow 0$ when needed.
\end{theorem}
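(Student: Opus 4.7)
The strategy is to handle the two assertions separately: first derive the recursion by computing a conditional variance and invoking a law-of-large-numbers reduction, then read off the local stability condition by linearizing the scalar map $f$ at $q^\star$.

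For the recursion, I would condition on the preactivations $h^{(\ell)}=(h_1^{(\ell)},\dots,h_n^{(\ell)})$ (equivalently on $x^{(\ell)}=\phi(h^{(\ell)})$), which are independent of $(W,b)$ by hypothesis. Given $x^{(\ell)}$, each $h_i^{(\ell+1)}=\sum_j W_{ij}x_j^{(\ell)}+b_i$ is a linear combination of independent Gaussians, hence Gaussian with mean $0$ and conditional variance
\[
\mathrm{Var}\big(h_i^{(\ell+1)}\mid x^{(\ell)}\big)\;=\;\frac{\sigma_W^2}{n}\sum_{j=1}^{n}\big(x_j^{(\ell)}\big)^2+\sigma_b^2.
\]
Since $h_j^{(\ell)}$ are i.i.d.\ $\mathcal{N}(0,q_\ell)$ and $\phi$ has at most polynomial growth, $\phi(h_j^{(\ell)})^2$ are i.i.d.\ with finite mean $m_2(\sqrt{q_\ell})=\mathbb{E}[\phi(Z)^2]$ where $Z\sim\mathcal{N}(0,q_\ell)$. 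By the strong law of large numbers, $\tfrac{1}{n}\sum_j (x_j^{(\ell)})^2\to m_2(\sqrt{q_\ell})$ almost surely, so the conditional variance converges to $\sigma_W^2 m_2(\sqrt{q_\ell})+\sigma_b^2$. Taking the unconditional variance (and noting the mean is zero by independence and $\mathbb{E}[W_{ij}]=\mathbb{E}[b_i]=0$) yields the stated recursion $q_{\ell+1}=\sigma_W^2 m_2(\sqrt{q_\ell})+\sigma_b^2$ in the mean-field limit. A CLT argument (Lindeberg, using finite fourth moments of $\phi$ under Gaussian input) can additionally justify the Gaussianity of $h_i^{(\ell+1)}$ that is presupposed for the next layer, closing the induction.

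For the stability statement, set $f(q):=\sigma_W^2 m_2(\sqrt{q})+\sigma_b^2$ so that $q_{\ell+1}=f(q_\ell)$. At an interior fixed point $q^\star>0$ with $m_2$ differentiable, the chain rule gives
\[
f'(q^\star)\;=\;\sigma_W^2\cdot m_2'(\sqrt{q^\star})\cdot\frac{1}{2\sqrt{q^\star}}.
\]
Local asymptotic stability of the fixed point of the scalar iteration $q_{\ell+1}=f(q_\ell)$ is then the classical Banach/contraction criterion $|f'(q^\star)|<1$: on a neighborhood $[q^\star-\delta,q^\star+\delta]$ the mean value theorem gives $|f(q)-q^\star|\le L|q-q^\star|$ for some $L<1$, which iterates to $|q_\ell-q^\star|\le L^{\ell}|q_0-q^\star|$. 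Finiteness and differentiability of $m_2$ at $\sqrt{q^\star}$ follow from Lemma \ref{lem:g2g4-growth} and the dominated convergence justification in the remark following it, so $f$ is indeed $C^1$ in a neighborhood of $q^\star$.

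The main subtlety is the degenerate boundary case $q^\star\downarrow 0$, where both $m_2'(\sqrt{q^\star})$ and $2\sqrt{q^\star}$ can vanish simultaneously and the formula above is a $0/0$ limit. Here I would use the scaling $m_2(\sigma)=\mathbb{E}[\phi(\sigma G)^2]$ with $G\sim\mathcal{N}(0,1)$ and expand near $\sigma=0$: if $\phi(0)=0$ and $\phi$ is smooth at $0$, then $m_2(\sigma)\sim \phi'(0)^2\sigma^2$, giving $f'(0)=\sigma_W^2\phi'(0)^2$; for $\phi(0)\neq 0$ the origin is not a fixed point unless $\sigma_b^2$ is tuned, so the limit is taken along admissible sequences. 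Making this limit fully rigorous across all activations in our classification (including kinked cases like ReLU, where $m_2(\sigma)=\sigma^2/2$ gives $f'(0)=\sigma_W^2/2$) is the part that requires the most care, but it is handled on a case-by-case basis using the formulas of Section~\ref{sec:classification}.
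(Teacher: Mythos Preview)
Your proposal is correct and follows essentially the same approach as the paper: condition on $x^{(\ell)}$, compute the conditional variance as $\frac{\sigma_W^2}{n}\sum_j (x_j^{(\ell)})^2+\sigma_b^2$, invoke the law of large numbers, and then differentiate $f(q)=\sigma_W^2 m_2(\sqrt{q})+\sigma_b^2$ via the chain rule for the stability criterion. You supply more detail than the paper does (the CLT remark for next-layer Gaussianity, the explicit mean-value contraction argument, and the case analysis at $q^\star\downarrow 0$), but the underlying strategy is identical.
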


\begin{proof}
Condition on $x^{(\ell)}=(x_1^{(\ell)},\ldots,x_n^{(\ell)})$. Since $\mathbb{E}[W_{ij}]=0$,
$\mathbb{E}[h_i^{(\ell+1)}\mid x^{(\ell)}]=0$, hence
\[
\mathrm{Var}\!\big(h_i^{(\ell+1)}\mid x^{(\ell)}\big)
= \sum_{j=1}^n \mathrm{Var}(W_{ij})\,\big(x_j^{(\ell)}\big)^2 + \sigma_b^2
= \frac{\sigma_W^2}{n}\sum_{j=1}^n \big(x_j^{(\ell)}\big)^2 + \sigma_b^2.
\]
By LLN, $(1/n)\sum_{j=1}^n (x_j^{(\ell)})^2 \to \mathbb{E}[\phi(h^{(\ell)})^2]$ a.s.
with $h^{(\ell)}\sim\mathcal{N}(0,q_\ell)$. Thus $q_{\ell+1}=\sigma_W^2\,\mathbb{E}[\phi(h^{(\ell)})^2]+\sigma_b^2
=\sigma_W^2\,m_2(\sqrt{q_\ell})+\sigma_b^2$.
For stability, differentiate $f(q)=\sigma_W^2 m_2(\sqrt{q})+\sigma_b^2$:
\[
f'(q)=\sigma_W^2\,m_2'(\sqrt{q})\cdot \frac{1}{2\sqrt{q}},
\]
hence the condition $|f'(q^\star)|<1$.
\end{proof}
\subsection{Bias Drift Control via Signed Asymmetry}

Define the signed area $B:=\int_0^\infty (\phi(x)-\phi(-x))\,dx$ when it exists.

\begin{theorem}[Bias drift bound via asymptotes and compensation]\label{thm:bias-drift}
Let $Z\sim\mathcal{N}(0,\sigma^2)$. Suppose $\phi$ admits one-sided linear asymptotes
\[
\alpha_+ := \lim_{x\to+\infty}\frac{\phi(x)}{x},\qquad
\alpha_- := \lim_{x\to-\infty}\frac{\phi(x)}{x},
\]
and let $C(\phi)$ be the tail-compensated primitive bound from Section~\ref{sec:taxonomy}, i.e.
for $D(x):=\phi(x)-\alpha_+ x\,\mathbf{1}_{\{x\ge0\}}-\alpha_- x\,\mathbf{1}_{\{x<0\}}$ we have
\(
F_D(x):=\int_0^x D(t)\,dt
\)
well-defined with $\|F_D\|_\infty\le C(\phi) < \infty$.
Then
\[
\Bigl|\,\mathbb{E}[\phi(Z)] \;-\; \frac{\alpha_+ - \alpha_-}{\sqrt{2\pi}}\;\sigma \Bigr|
\;\le\; \sqrt{\frac{2}{\pi}}\;\frac{C(\phi)}{\sigma}\,.
\]
Equivalently, for $h^{(\ell)}\sim\mathcal{N}(0,q_\ell)$,
\[
\Bigl|\,m_1(\sqrt{q_\ell}) \;-\; \frac{\alpha_+ - \alpha_-}{\sqrt{2\pi}}\;\sqrt{q_\ell} \Bigr|
\;\le\; \sqrt{\frac{2}{\pi}}\;\frac{C(\phi)}{\sqrt{q_\ell}}\,.
\]
\end{theorem}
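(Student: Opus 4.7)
The natural approach is to split $\phi$ into its piecewise-linear asymptotic part plus the residual $D$, compute the Gaussian expectation of the linear part in closed form, and control the residual expectation via Gaussian integration by parts against the bounded primitive $F_D$.

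\emph{Step 1 (decomposition).} Write
\[
\phi(x)\;=\;\alpha_+ x\,\mathbf{1}_{\{x\ge 0\}}+\alpha_- x\,\mathbf{1}_{\{x<0\}}+D(x),
\]
so that $\mathbb{E}[\phi(Z)]=\alpha_+\,\mathbb{E}[Z\mathbf{1}_{\{Z\ge 0\}}]+\alpha_-\,\mathbb{E}[Z\mathbf{1}_{\{Z<0\}}]+\mathbb{E}[D(Z)]$. The first two expectations are classical half-Gaussian moments: $\mathbb{E}[Z\mathbf{1}_{\{Z\ge 0\}}]=\sigma/\sqrt{2\pi}$ and $\mathbb{E}[Z\mathbf{1}_{\{Z<0\}}]=-\sigma/\sqrt{2\pi}$, so their combination yields exactly the leading term $(\alpha_+-\alpha_-)\sigma/\sqrt{2\pi}$. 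All that remains is to prove $|\mathbb{E}[D(Z)]|\le \sqrt{2/\pi}\,C(\phi)/\sigma$.

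\emph{Step 2 (integration by parts against the Gaussian density).} Since $F_D$ is locally absolutely continuous with $F_D'=D$ a.e.\ (and $F_D(0)=0$), and the Gaussian density $\rho_\sigma(x)=(2\pi)^{-1/2}\sigma^{-1}e^{-x^2/(2\sigma^2)}$ satisfies $\rho_\sigma'(x)=-(x/\sigma^2)\rho_\sigma(x)$, integration by parts gives
\[
\mathbb{E}[D(Z)]\;=\;\int_{\mathbb{R}} F_D'(x)\,\rho_\sigma(x)\,dx
\;=\;\bigl[F_D(x)\rho_\sigma(x)\bigr]_{-\infty}^{+\infty}
\;+\;\int_{\mathbb{R}} F_D(x)\,\frac{x}{\sigma^2}\,\rho_\sigma(x)\,dx.
\]
The boundary terms vanish because $\|F_D\|_\infty\le C(\phi)<\infty$ while $\rho_\sigma(x)\to 0$ at $\pm\infty$. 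Equivalently, this is just the Stein identity $\mathbb{E}[F_D'(Z)]=\mathbb{E}[Z\,F_D(Z)]/\sigma^2$ applied to the bounded function $F_D$.

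\emph{Step 3 (closing the bound).} Using $|F_D(x)|\le C(\phi)$ pointwise,
\[
|\mathbb{E}[D(Z)]|\;\le\;\frac{C(\phi)}{\sigma^2}\,\mathbb{E}|Z|\;=\;\frac{C(\phi)}{\sigma^2}\cdot\sigma\sqrt{\tfrac{2}{\pi}}\;=\;\sqrt{\tfrac{2}{\pi}}\,\frac{C(\phi)}{\sigma},
\]
where $\mathbb{E}|Z|=\sigma\sqrt{2/\pi}$ is the standard half-normal mean. Combining with Step~1 yields the claimed bound, and the ``equivalently'' statement follows by substituting $\sigma=\sqrt{q_\ell}$.

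\emph{Main obstacle.} The only delicate point is rigorously justifying the vanishing of the boundary terms and legitimacy of the IBP: this requires only that $F_D$ be bounded (guaranteed by the hypothesis $C(\phi)<\infty$) and that $\rho_\sigma$ be Schwartz, both of which hold here. Measurability of $D$ (and hence of $F_D$) follows from local absolute continuity of $\phi$ (Section~\ref{subsec:fasym}). No further regularity on $\phi'$ is needed, so the bound applies uniformly across the ReLU/leaky-ReLU/Swish/GELU/Mish/TeLU families classified in Proposition~\ref{prop:Cphi-common}; for saturating $\phi$ with $C(\phi)=\infty$, the bound degenerates vacuously, consistent with the fact that the leading term $(\alpha_+-\alpha_-)\sigma/\sqrt{2\pi}=0$ in that regime.
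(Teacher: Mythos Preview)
Your proof is correct and follows essentially the same argument as the paper: decompose $\phi$ into its piecewise-linear asymptotic part plus the residual $D$, evaluate the half-Gaussian moments to get the leading term, then apply Gaussian integration by parts to write $\mathbb{E}[D(Z)]=\sigma^{-2}\,\mathbb{E}[Z\,F_D(Z)]$ and bound via $\|F_D\|_\infty$ and $\mathbb{E}|Z|=\sigma\sqrt{2/\pi}$. Your discussion of the vanishing boundary terms is slightly more explicit than the paper's, but the approaches are otherwise identical.
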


\begin{proof}
Write $\phi(x)=\alpha_+ x\,\mathbf{1}_{\{x\ge0\}}+\alpha_- x\,\mathbf{1}_{\{x<0\}}+D(x)$.
Since $\mathbb{E}[Z\,\mathbf{1}_{\{Z\ge0\}}]=\sigma/\sqrt{2\pi}$ and $\mathbb{E}[Z\,\mathbf{1}_{\{Z<0\}}]=-\,\sigma/\sqrt{2\pi}$,
\[
\mathbb{E}[\phi(Z)] = \frac{\alpha_+ - \alpha_-}{\sqrt{2\pi}}\;\sigma + \mathbb{E}[D(Z)].
\]
With $\gamma_\sigma(x)=\frac{1}{\sqrt{2\pi}\sigma}e^{-x^2/(2\sigma^2)}$ and $\gamma_\sigma'(x)=-(x/\sigma^2)\gamma_\sigma(x)$,
integration by parts gives
\[
\mathbb{E}[D(Z)]=\int_{\mathbb{R}} D(x)\,\gamma_\sigma(x)\,dx
= -\int_{\mathbb{R}} F_D(x)\,\gamma_\sigma'(x)\,dx
= \frac{1}{\sigma^2}\,\mathbb{E}\!\big[F_D(Z)\,Z\big].
\]
Hence $|\mathbb{E}[D(Z)]|\le \|F_D\|_\infty\,\mathbb{E}[|Z|]/\sigma^2
= C(\phi)\cdot \sigma\sqrt{2/\pi}/\sigma^2
= \sqrt{2/\pi}\,C(\phi)/\sigma$.
\end{proof}

\begin{corollary}[Uniform layerwise bound]
If $\inf_\ell q_\ell \ge q_{\min}>0$ and $C(\phi)<\infty$, then
\[
\sup_\ell \Bigl|\, m_1(\sqrt{q_\ell}) - \tfrac{\alpha_+-\alpha_-}{\sqrt{2\pi}}\sqrt{q_\ell} \Bigr|
\;\le\; \sqrt{\tfrac{2}{\pi}}\;\frac{C(\phi)}{\sqrt{q_{\min}}}.
\]
In particular, when $\alpha_+=\alpha_-$ (symmetric linear tails), we have
\(
\sup_\ell |m_1(\sqrt{q_\ell})|
\le \sqrt{2/\pi}\,C(\phi)/\sqrt{q_{\min}}.
\)
\end{corollary}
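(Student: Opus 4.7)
The plan is to apply Theorem~\ref{thm:bias-drift} termwise to each layer and then collapse the resulting sequence of bounds using the uniform scale lower bound $q_\ell \ge q_{\min}$. There is nothing delicate here: the heavy lifting (the integration-by-parts identity and the bound $|\mathbb{E}[D(Z)]|\le \sqrt{2/\pi}\,C(\phi)/\sigma$) has already been done in Theorem~\ref{thm:bias-drift}. The corollary merely says that monotone decrease of $\sigma\mapsto 1/\sigma$ turns a pointwise-in-$\sigma$ bound into a uniform-in-$\ell$ bound whenever $\sigma_\ell:=\sqrt{q_\ell}$ is bounded away from $0$.

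First, I would fix an arbitrary index $\ell$ and invoke Theorem~\ref{thm:bias-drift} at scale $\sigma=\sqrt{q_\ell}>0$. This is legitimate because the hypotheses of the theorem (existence of $\alpha_\pm$ and $C(\phi)<\infty$) are preserved across $\ell$ since they depend only on $\phi$, not on $\sigma$. The conclusion reads
\[
\Big| m_1(\sqrt{q_\ell}) - \tfrac{\alpha_+-\alpha_-}{\sqrt{2\pi}}\sqrt{q_\ell}\Big|
\;\le\; \sqrt{\tfrac{2}{\pi}}\,\frac{C(\phi)}{\sqrt{q_\ell}}.
\]

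Second, I would use the hypothesis $q_\ell\ge q_{\min}$ together with the monotonicity of $t\mapsto t^{-1/2}$ on $(0,\infty)$ to dominate $1/\sqrt{q_\ell}\le 1/\sqrt{q_{\min}}$. Substituting into the right-hand side gives the layerwise-uniform estimate, and taking $\sup_\ell$ yields the claimed bound. Crucially, the left-hand side involves the linear drift subtraction $\tfrac{\alpha_+-\alpha_-}{\sqrt{2\pi}}\sqrt{q_\ell}$, which is not itself uniformly bounded in $\ell$; this is why the statement bounds the \emph{residual after removing the linear asymptotic part}, not $m_1$ alone.

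Finally, for the symmetric-tails specialization I would simply set $\alpha_+=\alpha_-$, at which point the linear drift term vanishes identically and the residual collapses to $m_1(\sqrt{q_\ell})$, giving the stated $\sup_\ell |m_1(\sqrt{q_\ell})|$ bound. I expect no genuine obstacle in this proof; the only thing worth flagging explicitly is that $q_{\min}>0$ is essential—without a positive lower bound on the variance, $1/\sqrt{q_\ell}$ could blow up and the uniform bound would fail, which is consistent with the fact that Theorem~\ref{thm:bias-drift}'s error term degrades as $\sigma\downarrow 0$.
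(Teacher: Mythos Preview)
Your proposal is correct and mirrors exactly the intended derivation: the paper states the corollary without proof because it is an immediate consequence of Theorem~\ref{thm:bias-drift} applied at $\sigma=\sqrt{q_\ell}$, followed by the monotonicity $1/\sqrt{q_\ell}\le 1/\sqrt{q_{\min}}$ and a supremum over $\ell$. Your remarks on why $q_{\min}>0$ is essential and why only the residual (not $m_1$ itself) is uniformly bounded are apt and add nothing beyond what the paper implicitly assumes.
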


\begin{lemma}[Always-valid crude bound]
For any measurable $\phi$ with $\int_0^\infty |\phi(x)+\phi(-x)|\,dx<\infty$ and $Z\sim\mathcal{N}(0,\sigma^2)$,
\[
\big|\mathbb{E}[\phi(Z)]\big|
\;\le\; \frac{1}{\sqrt{2\pi}\,\sigma}\int_{0}^{\infty}\!\big|\phi(x)+\phi(-x)\big|\,dx.
\]
\end{lemma}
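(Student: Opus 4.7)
The plan is to exploit the symmetry of the Gaussian density together with a trivial sup bound. First I would split the expectation at the origin and use the substitution $x\mapsto -x$ on the negative half-line, using that $\gamma_\sigma$ is an even function. This yields the folded representation
\[
\mathbb{E}[\phi(Z)] \;=\; \int_0^\infty \bigl(\phi(x)+\phi(-x)\bigr)\,\gamma_\sigma(x)\,dx,
\]
which is the identity that drives the bound and also justifies integrability of the integrand from the hypothesis $\int_0^\infty|\phi(x)+\phi(-x)|\,dx<\infty$ together with $\gamma_\sigma\le \gamma_\sigma(0)$.

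Next I would apply the triangle inequality inside the integral to obtain
\[
\bigl|\mathbb{E}[\phi(Z)]\bigr|\;\le\;\int_0^\infty \bigl|\phi(x)+\phi(-x)\bigr|\,\gamma_\sigma(x)\,dx,
\]
and then use the elementary pointwise bound $\gamma_\sigma(x)\le \gamma_\sigma(0)=(\sqrt{2\pi}\,\sigma)^{-1}$, valid for all $x\in\mathbb{R}$, to pull the Gaussian factor out as a constant. This immediately produces the stated inequality.

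No real obstacle is expected: the only subtlety is the implicit requirement that $\mathbb{E}[\phi(Z)]$ itself be well defined, which follows because the folded integrand is dominated by $(\sqrt{2\pi}\sigma)^{-1}|\phi(x)+\phi(-x)|\in L^1([0,\infty))$ by hypothesis, so Fubini/the splitting step is legitimate. If $\phi$ is not symmetric-summable in this sense, the hypothesis of the lemma fails and there is nothing to prove; otherwise the three steps above close the argument.
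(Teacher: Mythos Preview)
Your argument is correct and is exactly the natural proof: fold the integral using the evenness of the Gaussian density to get $\mathbb{E}[\phi(Z)]=\int_0^\infty(\phi(x)+\phi(-x))\gamma_\sigma(x)\,dx$, then bound $\gamma_\sigma(x)\le\gamma_\sigma(0)=(\sqrt{2\pi}\,\sigma)^{-1}$. The paper in fact states this lemma without proof, so there is no alternative route to compare against; your three-step derivation is the intended one-line justification.
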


\begin{remark}[ReLU Analysis]
For $\phi(x)=\max\{0,x\}$, we have $\alpha_+=1$, $\alpha_-=0$, and $C(\phi)=0$, hence the bound is exact:
$\mathbb{E}[\phi(Z)]=\sigma/\sqrt{2\pi}$.
\end{remark}

\subsection{A Criticality Region in \texorpdfstring{$(\sigma_W,\sigma_b)$}{(sigmaW,sigmab)} from \texorpdfstring{$(m_2,g_2)$}{(m2,g2)}}

\begin{theorem}[Propagation criticality (corrected)]\label{thm:criticality}
Assume the mean-field preactivation variance recursion
\[
q_{\ell+1}=f(q_\ell):=\sigma_W^2\,m_2(\sqrt{q_\ell})+\sigma_b^2
\]
admits an equilibrium $q^\star>0$. If
\[
\boxed{\;|f'(q^\star)|=\sigma_W^2\,\frac{m_2'(\sqrt{q^\star})}{2\sqrt{q^\star}}<1\;}
\qquad\text{and}\qquad
\boxed{\;\sigma_W\,g_2(\sqrt{q^\star})<1\;},
\]
then (i) the forward variance is locally stable around $q^\star$, and (ii) mean-square perturbations contract across the layer at stationarity.
\end{theorem}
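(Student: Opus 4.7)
The plan is to handle the two boxed conditions separately: part (i) is local stability for a scalar discrete dynamical system, while part (ii) is a one-step $L^2$ perturbation bound that follows from the variance recursion together with Lemma~\ref{lem:GH-L2}.

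For part (i), the recursion $q_{\ell+1}=f(q_\ell)$ with $f(q):=\sigma_W^2\,m_2(\sqrt{q})+\sigma_b^2$ was already established in Theorem~\ref{thm:variance-rec}. Under the standing polynomial-growth assumption, $m_2(\sigma)$ is continuously differentiable for $\sigma>0$ (differentiation under the expectation is legitimized by the dominator of Lemma~\ref{lem:g2g4-growth}), so $f\in C^1$ on a neighborhood of $q^\star$ with $f'(q^\star)=\sigma_W^2\,m_2'(\sqrt{q^\star})/(2\sqrt{q^\star})$. The condition $|f'(q^\star)|<1$ then delivers local asymptotic stability by the standard linearization argument: pick $\rho\in(|f'(q^\star)|,1)$, use continuity of $f'$ to obtain $\delta>0$ with $|f'(q)|\le\rho$ on $[q^\star-\delta,q^\star+\delta]$, and conclude $|q_{\ell+1}-q^\star|\le\rho\,|q_\ell-q^\star|$ whenever $|q_\ell-q^\star|\le\delta$, which gives geometric decay.

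For part (ii), I would couple two mean-field trajectories at stationarity. Take layer-$\ell$ preactivations $(h_j^{(\ell)},\tilde h_j^{(\ell)})$ marginally $\mathcal{N}(0,q^\star)$ on both sides, and write $\tilde h_j^{(\ell)}=h_j^{(\ell)}+\varepsilon_j$ with $\varepsilon_j$ independent of the weights. Using $\mathbb{E}[W_{ij}]=0$, $\mathrm{Var}(W_{ij})=\sigma_W^2/n$, and independence across $j$,
\[
\mathbb{E}\!\big[(h_i^{(\ell+1)}-\tilde h_i^{(\ell+1)})^2\big]
=\frac{\sigma_W^2}{n}\sum_{j=1}^n \mathbb{E}\!\big[(\phi(h_j^{(\ell)})-\phi(\tilde h_j^{(\ell)}))^2\big].
\]
Conditioning on $\varepsilon_j$ and invoking Lemma~\ref{lem:GH-L2} at $\sigma=\sqrt{q^\star}$ with deterministic shift $\varepsilon_j$ yields
\[
\mathbb{E}\!\big[(\phi(h_j^{(\ell)})-\phi(\tilde h_j^{(\ell)}))^2 \,\big|\, \varepsilon_j\big]
\le g_2(\sqrt{q^\star})^2\,\varepsilon_j^2.
\]
Taking expectation in $\varepsilon_j$ and assembling the sum gives the one-step contraction
\[
\mathbb{E}\!\big[(h_i^{(\ell+1)}-\tilde h_i^{(\ell+1)})^2\big]
\le \big(\sigma_W\,g_2(\sqrt{q^\star})\big)^2\;\mathbb{E}\!\big[(h_j^{(\ell)}-\tilde h_j^{(\ell)})^2\big],
\]
which is a strict $L^2$ contraction precisely when $\sigma_W\,g_2(\sqrt{q^\star})<1$.

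The main obstacle is conceptual rather than computational: the two boxed conditions govern different dynamical objects (forward variance vs.\ coupled mean-square perturbation), so care is required to state exactly what contracts in each case. The only technical point is that Lemma~\ref{lem:GH-L2} is phrased for a deterministic shift, so its use for a random perturbation relies on the conditioning-and-integration step together with independence of $\varepsilon_j$ from $h_j^{(\ell)}$ (natural in the linearization regime where $\varepsilon_j$ is infinitesimal). The phrase ``at stationarity'' should be read as: both coupled trajectories start from $\mathcal{N}(0,q^\star)$ marginals, so that part (i) keeps the ambient variance near $q^\star$, and the $L^2$ contraction of part (ii) then propagates across layers with a single rate determined by $\sigma_W\,g_2(\sqrt{q^\star})$.
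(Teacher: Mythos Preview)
Your treatment of part~(i) is the same as the paper's: both compute $f'(q)=\sigma_W^2\,m_2'(\sqrt{q})/(2\sqrt{q})$ and invoke the standard linearization criterion $|f'(q^\star)|<1$ for local stability of a scalar recursion (you add a few extra details about $C^1$ regularity and the mean-value contraction, which are fine).

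Part~(ii) is where you take a genuinely different route. The paper linearizes $\phi$ first: it writes $\delta x^{(\ell)}\approx D^{(\ell)}\delta h^{(\ell)}$ with $D^{(\ell)}=\mathrm{diag}(\phi'(h_j^{(\ell)}))$, propagates $\delta h^{(\ell+1)}=W D^{(\ell)}\delta h^{(\ell)}$, computes the conditional second moment using $\mathrm{Var}(W_{ij})=\sigma_W^2/n$, and then applies the LLN at stationarity to replace $\tfrac1n\sum_j\phi'(h_j)^2$ by $\mathbb{E}[\phi'(Z)^2]=g_2(\sqrt{q^\star})^2$. You instead keep $\phi$ nonlinear, use the weight covariance first to get $\mathbb{E}[(h_i^{(\ell+1)}-\tilde h_i^{(\ell+1)})^2]=\tfrac{\sigma_W^2}{n}\sum_j\mathbb{E}[(\phi(h_j)-\phi(\tilde h_j))^2]$, and then bound each summand via Lemma~\ref{lem:GH-L2}. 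What this buys you: the paper's argument is explicitly a first-order (Jacobian) calculation, whereas your bound via Lemma~\ref{lem:GH-L2} would hold for \emph{finite} shifts $\varepsilon_j$ provided the conditioning step is legitimate---so in principle it is a strictly stronger, non-infinitesimal statement. What the paper's route buys: it is the standard mean-field/NTK Jacobian computation, it needs no coupling assumptions on $(h_j,\varepsilon_j)$, and it makes the identification of the contraction factor with $g_2$ as the RMS of $\phi'$ completely transparent via the LLN step.

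One small inconsistency in your setup is worth fixing: you ask for both $h_j$ and $\tilde h_j$ to be marginally $\mathcal{N}(0,q^\star)$ \emph{and} for $\varepsilon_j=\tilde h_j-h_j$ to be independent of $h_j$. These together force $\mathrm{Var}(\varepsilon_j)=0$. Lemma~\ref{lem:GH-L2} only needs the base variable $h_j\sim\mathcal{N}(0,q^\star)$, so simply drop the marginal requirement on $\tilde h_j$ (read ``at stationarity'' as referring to the base trajectory, as the paper does). With that adjustment your argument goes through at the same level of rigor as the paper's, and its residual heuristic content---independence of $\varepsilon_j$ from $h_j$---is the exact counterpart of the paper's $\approx$ in the linearization step.
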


\begin{proof}
\textbf{(i) Variance stability.}
Differentiate $f(q)=\sigma_W^2 m_2(\sqrt{q})+\sigma_b^2$:
\[
f'(q)=\sigma_W^2\,m_2'(\sqrt{q})\cdot\frac{1}{2\sqrt{q}}.
\]
Hence $|f'(q^\star)|<1$ implies local stability of the one-dimensional recursion.

\medskip
\noindent\textbf{(ii) Perturbation contraction (mean-square).}
Let $x^{(\ell)}=\phi(h^{(\ell)})$ with $h^{(\ell)}\overset{d}{\approx}\mathcal{N}(0,q^\star)$ at stationarity, and
\[
h^{(\ell+1)}=W\,x^{(\ell)}+b,\qquad W_{ij}\overset{iid}{\sim}\mathcal{N}\!\left(0,\frac{\sigma_W^2}{n}\right),
\quad b_i\overset{iid}{\sim}\mathcal{N}(0,\sigma_b^2).
\]
Linearize two nearby trajectories and set $\delta h^{(\ell)}$ their preactivation difference. Then
\[
\delta x^{(\ell)}\;\approx\; D^{(\ell)}\,\delta h^{(\ell)},\qquad
D^{(\ell)}:=\mathrm{diag}\big(\phi'(h^{(\ell)}_1),\ldots,\phi'(h^{(\ell)}_n)\big),
\]
and
\[
\delta h^{(\ell+1)}=W\,\delta x^{(\ell)}\approx W\,D^{(\ell)}\,\delta h^{(\ell)}.
\]
Conditioning on $(D^{(\ell)},\delta h^{(\ell)})$ and using $\mathrm{Var}(W_{ij})=\sigma_W^2/n$,
\[
\mathbb{E}\!\left[\|\delta h^{(\ell+1)}\|_2^2\,\big|\,D^{(\ell)},\delta h^{(\ell)}\right]
=\frac{\sigma_W^2}{n}\sum_{j=1}^n \big(\phi'(h^{(\ell)}_j)\big)^2\,\|\delta h^{(\ell)}\|_2^2.
\]
Taking expectations and using the LLN at stationarity ($h^{(\ell)}_j\overset{d}{\to}\mathcal{N}(0,q^\star)$),
\[
\mathbb{E}\|\delta h^{(\ell+1)}\|_2^2
\;\le\; \sigma_W^2\,\mathbb{E}\!\big[\phi'(Z)^2\big]\;\mathbb{E}\|\delta h^{(\ell)}\|_2^2
\;=\; \big(\sigma_W\,g_2(\sqrt{q^\star})\big)^2\,\mathbb{E}\|\delta h^{(\ell)}\|_2^2,
\]
with $Z\sim\mathcal{N}(0,q^\star)$. Thus $\sigma_W g_2(\sqrt{q^\star})<1$ yields contraction in mean square.
\end{proof}

\begin{remark}[On $q^\star=0$ and worst-case bounds]
If $q^\star=0$, interpret the variance condition via the limit $q\downarrow 0$:
$f'(q)=\sigma_W^2\,m_2'(\sqrt{q})/(2\sqrt{q})$. For worst-case (operator-norm) perturbation control, 
a sufficient condition is $2\,\sigma_W\,\sup_x|\phi'(x)|<1$ (since $\|W\|_{\mathrm{op}}\to 2\sigma_W$ for square Gaussian $W$),
which is stricter than $\sigma_W g_2(\sqrt{q^\star})<1$.
\end{remark}

\subsection{Kernel Curvature: Dimension-free Hessian Bounds}

Let $w\sim\mathcal{N}(0,I_d)$ and $K(x,y)=\mathbb{E}[\phi(w^\top x)\phi(w^\top y)]$.

\begin{theorem}[Kernel mixed Hessian bound via $g_4$]\label{thm:kernel-g4}
Let $w\sim\mathcal{N}(0,I_d)$ and $K(x,y)=\mathbb{E}[\phi(w^\top x)\phi(w^\top y)]$.
Assume $\phi'\in L^4(\mathbb{R},\gamma_{\|x\|})$ and $\phi'\in L^4(\mathbb{R},\gamma_{\|y\|})$ and define
\[
g_4(\sigma)\;:=\;\Big(\mathbb{E}_{Z\sim\mathcal{N}(0,\sigma^2)}[\phi'(Z)^4]\Big)^{1/4}.
\]
Then for all $a,b\in\mathbb{R}^d$,
\[
\big|\,a^\top \nabla_x\nabla_y K(x,y)\,b\,\big|
\;\le\; \sqrt{3}\; g_4(\|x\|)\,g_4(\|y\|)\,\|a\|\,\|b\|,
\]
hence
\[
\big\|\nabla_x\nabla_y K(x,y)\big\|_{\mathrm{op}}
\;\le\; \sqrt{3}\; g_4(\|x\|)\,g_4(\|y\|).
\]
\end{theorem}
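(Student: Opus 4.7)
The plan is to differentiate the kernel under the Gaussian expectation, reduce the mixed Hessian to a single scalar expectation involving four factors (two slopes and two linear functionals of $w$), and then apply the 4-fold H\"older inequality with exponents $(4,4,4,4)$. The Gaussian fourth moment of a centered linear form produces the constant~$3$, whose fourth root is $3^{1/4}$; two such factors combine to give $\sqrt{3}$.

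\textbf{Step 1: differentiation under the expectation.} First I would justify that
\[
\nabla_x\nabla_y K(x,y) \;=\; \mathbb{E}\!\big[\phi'(w^\top x)\,\phi'(w^\top y)\, w w^\top\big].
\]
Since $w^\top x\sim\mathcal{N}(0,\|x\|^2)$ and $w^\top y\sim\mathcal{N}(0,\|y\|^2)$, the hypothesis $\phi'\in L^4(\gamma_{\|x\|})\cap L^4(\gamma_{\|y\|})$, combined with the fact that every coordinate of $w$ has finite Gaussian moments of all orders, furnishes a Cauchy--Schwarz dominating function of the form $|\phi'(w^\top x)|\,|\phi'(w^\top y)|\,\|w\|^2$ whose integrability follows from the four-term H\"older estimate in Step~2. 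This legitimizes the exchange of derivative and expectation via dominated convergence.

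\textbf{Step 2: bilinear form and 4-fold H\"older.} For any $a,b\in\mathbb{R}^d$,
\[
a^\top \nabla_x\nabla_y K(x,y)\, b \;=\; \mathbb{E}\!\big[\,\phi'(w^\top x)\cdot\phi'(w^\top y)\cdot(a^\top w)\cdot(b^\top w)\,\big].
\]
I then apply the 4-fold H\"older inequality with conjugate exponents $(4,4,4,4)$ to the four factors $U:=\phi'(w^\top x)$, $V:=\phi'(w^\top y)$, $P:=a^\top w$, $Q:=b^\top w$, yielding
\[
\big|a^\top \nabla_x\nabla_y K(x,y)\,b\big| \;\le\; \|U\|_{L^4}\,\|V\|_{L^4}\,\|P\|_{L^4}\,\|Q\|_{L^4}.
\]

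\textbf{Step 3: evaluate the four $L^4$ norms.} Since $w^\top x\sim\mathcal{N}(0,\|x\|^2)$, the definition of $g_4$ gives $\|U\|_{L^4}=g_4(\|x\|)$, and similarly $\|V\|_{L^4}=g_4(\|y\|)$. For the Gaussian linear forms, $a^\top w\sim\mathcal{N}(0,\|a\|^2)$, so $\mathbb{E}[(a^\top w)^4]=3\|a\|^4$ and hence $\|P\|_{L^4}=3^{1/4}\|a\|$; likewise $\|Q\|_{L^4}=3^{1/4}\|b\|$. Multiplying the four norms produces the factor $3^{1/4}\cdot 3^{1/4}=\sqrt{3}$, and the bilinear bound reads
\[
\big|a^\top \nabla_x\nabla_y K(x,y)\,b\big| \;\le\; \sqrt{3}\,g_4(\|x\|)\,g_4(\|y\|)\,\|a\|\,\|b\|.
\]
Taking the supremum over $\|a\|=\|b\|=1$ gives the stated operator-norm bound.

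\textbf{Anticipated obstacle.} The conceptually trivial but technically most delicate step is the rigorous justification of differentiation under the expectation in Step~1: one must produce an integrable dominator in a neighborhood of $(x,y)$ uniform in the perturbation. This is handled by the same 4-fold H\"older estimate of Step~2 applied to the dominator $|\phi'(w^\top x')|\,|\phi'(w^\top y')|\,|w_i w_j|$ for $(x',y')$ in a compact neighborhood, using continuity of $\sigma\mapsto g_4(\sigma)$ (finite under the $L^4$ hypothesis) and finiteness of $\mathbb{E}|w_i w_j|^2$. Once this interchange is established, the rest of the argument is a clean application of H\"older plus the Gaussian identity $\mathbb{E}[(a^\top w)^4]=3\|a\|^4$.
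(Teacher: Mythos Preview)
Your proof is correct and follows the same overall architecture as the paper's---differentiate under the expectation, then bound the resulting four-factor expectation by H\"older-type inequalities and Gaussian moment identities---but the bookkeeping differs in one noteworthy way. You apply a single four-fold H\"older inequality with exponents $(4,4,4,4)$ and evaluate $\|a^\top w\|_{L^4}=3^{1/4}\|a\|$ and $\|b^\top w\|_{L^4}=3^{1/4}\|b\|$ separately, whereas the paper first applies Cauchy--Schwarz to group the slope factors against the linear-form factors, then applies Cauchy--Schwarz once more to split the slopes, and computes the \emph{mixed} Gaussian moment exactly via Isserlis' formula:
\[
\mathbb{E}\big[(a^\top w)^2(b^\top w)^2\big]=\|a\|^2\|b\|^2+2(a^\top b)^2\le 3\,\|a\|^2\|b\|^2.
\]
Both routes produce the constant $\sqrt{3}$ in the final bound, but the paper's intermediate identity is slightly more informative: it shows the bound is tight only when $a\parallel b$, and would permit a sharper angle-dependent estimate if one wanted it. Your four-fold H\"older argument, on the other hand, is more streamlined and avoids invoking Wick/Isserlis altogether.
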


\begin{proof}
Differentiate under the integral sign (justified by $L^4$ domination and Gaussian integrability):
\[
\nabla_x\nabla_y K(x,y)=\mathbb{E}\!\left[\phi'(w^\top x)\,\phi'(w^\top y)\,w\,w^\top\right].
\]
Let $u=w^\top x$, $v=w^\top y$ (jointly Gaussian), and fix $a,b\in\mathbb{R}^d$.
By Cauchy--Schwarz,
\[
\big|a^\top\nabla_x\nabla_y K\,b\big|
=\big|\mathbb{E}[\phi'(u)\phi'(v)(w\!\cdot\! a)(w\!\cdot\! b)]\big|
\le \Big(\mathbb{E}[\phi'(u)^2\phi'(v)^2]\Big)^{1/2}
     \Big(\mathbb{E}[(w\!\cdot\! a)^2(w\!\cdot\! b)^2]\Big)^{1/2}.
\]
Apply Cauchy--Schwarz again to the first factor:
\[
\Big(\mathbb{E}[\phi'(u)^2\phi'(v)^2]\Big)^{1/2}
\;\le\; \big(\mathbb{E}[\phi'(u)^4]\big)^{1/4}\,\big(\mathbb{E}[\phi'(v)^4]\big)^{1/4}
= g_4(\|x\|)\,g_4(\|y\|).
\]
For the second factor, Isserlis’ (Wick) formula for a standard Gaussian $w$ gives
\[
\mathbb{E}[(w\!\cdot\! a)^2(w\!\cdot\! b)^2]=\|a\|^2\|b\|^2+2(a^\top b)^2
\;\le\; 3\,\|a\|^2\|b\|^2,
\]
hence its square root is at most $\sqrt{3}\,\|a\|\,\|b\|$. Combine the bounds.
\end{proof}

\begin{corollary}[BV/Uniform-slope curvature bound]\label{cor:tv-kernel}
If $\sup_{x\in\mathbb{R}}|\phi'(x)|\le M<\infty$, then
\[
\|\nabla_x\nabla_y K(x,y)\|_{\mathrm{op}} \;\le\; \sqrt{3}\,M^2 \qquad \text{for all }x,y\in\mathbb{R}^d.
\]
In particular, if $\phi'\in BV(\mathbb{R})$ with finite one-sided limits, then
\[
M \;\le\; \mathrm{TV}(\phi') \;+\; |\phi'(+\infty)-\phi'(-\infty)| \;=:\; \mathrm{TV}(\phi')+c_\star,
\]
and therefore
\[
\|\nabla_x\nabla_y K(x,y)\|_{\mathrm{op}} \;\le\; \sqrt{3}\,\big(\mathrm{TV}(\phi')+c_\star\big)^2 .
\]
\end{corollary}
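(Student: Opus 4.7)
The plan is to read this as a direct corollary of Theorem~\ref{thm:kernel-g4} composed with two monotone reductions: first pass from the $L^4$ slope norm $g_4$ to a uniform sup bound, and then, in the BV setting, from the sup bound to a bound expressed in terms of $\mathrm{TV}(\phi')$ and the tail jump $c_\star$.

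First I would invoke Theorem~\ref{thm:kernel-g4} to obtain the dimension-free bound $\|\nabla_x\nabla_y K(x,y)\|_{\mathrm{op}} \le \sqrt{3}\,g_4(\|x\|)\,g_4(\|y\|)$; its hypotheses are automatically satisfied once $\sup_{t\in\mathbb{R}}|\phi'(t)|\le M<\infty$, since boundedness trivially gives $\phi'\in L^4(\gamma_\sigma)$ for every $\sigma>0$. The key next observation is that $g_4(\sigma)=(\mathbb{E}[\phi'(Z)^4])^{1/4}\le M$ uniformly in $\sigma$, because $\phi'(Z)^4\le M^4$ pointwise and expectation is monotone. Substituting into the $g_4$ bound yields the first displayed inequality $\|\nabla_x\nabla_y K(x,y)\|_{\mathrm{op}}\le \sqrt{3}\,M^2$.

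Second, for the BV refinement I would bound $M=\sup|\phi'|$ using the representation of a BV function as the integral of its distributional derivative. Since $\phi'\in BV(\mathbb{R})$ has finite one-sided limits $\phi'(\pm\infty)$, for every $x\in\mathbb{R}$
\[
\phi'(x)-\phi'(-\infty) \;=\; \int_{(-\infty,x]} d\phi'',
\]
so $|\phi'(x)-\phi'(-\infty)|\le \|\phi''\|_{\mathcal{M}}(\mathbb{R})=\mathrm{TV}(\phi')$, and symmetrically $|\phi'(x)-\phi'(+\infty)|\le \mathrm{TV}(\phi')$. Taking the sharper of the two and rewriting the boundary term through the jump $c_\star=|\phi'(+\infty)-\phi'(-\infty)|$ gives $M\le \mathrm{TV}(\phi')+c_\star$ in the normalization used throughout Section~\ref{sec:classification}. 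Squaring and substituting into the first bound produces the final inequality $\|\nabla_x\nabla_y K(x,y)\|_{\mathrm{op}}\le \sqrt{3}\,(\mathrm{TV}(\phi')+c_\star)^2$.

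The main technical subtlety is the BV step: the measure-theoretic estimate only yields $|\phi'(x)|\le \min(|\phi'(+\infty)|,|\phi'(-\infty)|)+\mathrm{TV}(\phi')$, and matching this to the stated form $\mathrm{TV}(\phi')+c_\star$ requires that one of the endpoint values vanish (as holds for ReLU/leaky-ReLU/Swish/GELU/Mish/TeLU, where $\phi'(-\infty)=0$). Under that normalization $c_\star=|\phi'(+\infty)|$ and the bound is tight up to the telescoping boundary constant. Everything else is routine substitution; no further exchange of limits with the Gaussian expectation is needed beyond what Theorem~\ref{thm:kernel-g4} has already justified.
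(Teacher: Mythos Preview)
Your proposal is correct and follows exactly the approach the paper intends: the corollary is stated without proof in the paper, but its content is immediate from Theorem~\ref{thm:kernel-g4} together with Corollary~\ref{cor:g2g4-bv}, which already records both the reduction $g_4(\sigma)\le M$ and the BV bound $M\le \mathrm{TV}(\phi')+c_\star$. Your final paragraph correctly flags a normalization issue that the paper also leaves implicit---the inequality $\sup|\phi'|\le \mathrm{TV}(\phi')+c_\star$ as literally written fails for, e.g., the identity ($\mathrm{TV}=c_\star=0$ but $M=1$), and only holds under the convention that one tail slope vanishes, which is the case for every activation classified in Section~\ref{sec:classification}.
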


\section{Numerical Evaluation}\label{sec:Numerical}

In this section, we numerically evaluate the Gaussian-expectation components of the signature
\[
(m_1,g_1,g_2,m_2,\eta)
\]
for seven activations at input scales $\sigma\in\{0.5,1,2\}$:
\[
\{\text{ReLU},\ \tanh,\ \operatorname{sigm},\ \text{Swish},\ \text{GELU},\ \text{Mish},\ \text{TeLU}\}.
\]

\subsection{Numerical Method (Gauss--Hermite)}
All expectations $\mathbb{E}[f(Z)]$ with $Z\sim\mathcal{N}(0,\sigma^2)$ are computed by
$n=160$ node Gauss--Hermite quadrature:
\[
\mathbb{E}[f(Z)] \;=\; \frac{1}{\sqrt{\pi}}\sum_{i=1}^n w_i\,f\!\big(\sqrt{2}\,\sigma\,x_i\big),
\]
where $(x_i,w_i)$ are Hermite nodes/weights.
We apply this to $f=\phi,\ \phi^2,\ \phi',\ (\phi')^2,\ z\phi(z)$ to obtain $m_1,\ m_2,\ g_1,\ g_2,\ \eta$, respectively.

\subsection{Scope of the Numerical Signature}
While the full signature is
\[
\mathcal{S}_\sigma(\phi)=\big(m_1,g_1,g_2,m_2,\eta,\alpha_+,\alpha_-,\mathrm{TV}(\phi'),C(\phi)\big),
\]
the table below reports the five components computed as Gaussian expectations.
The remaining coordinates $(\alpha_+,\alpha_-,\mathrm{TV}(\phi'),C(\phi))$ are handled
\emph{analytically} (Section~\ref{sec:classification}), since they are asymptotic/variational
and not directly expressible as single Gaussian expectations suitable for quadrature.

\subsection{Results (Numerical Simulation)}
Table~\ref{tab:results-numerical} reports the computed integral signatures
$(m_1,g_1,g_2,m_2,\eta)$ for representative activations at input scales
$\sigma \in \{0.5,1.0,2.0\}$. These results provide several clear patterns that align
with our theoretical taxonomy.

\paragraph{ReLU as baseline.}
The ReLU family admits closed-form integrals, serving as a reference point.
Invariance of $(g_1,g_2)=(0.5,0.7071)$ across scales illustrates the stability of linear
tails: variance scales linearly with $\sigma^2$, while drift $(m_1,\eta)$ grows
proportionally with $\sigma$. This predictable scaling motivates the widespread use of
ReLU and its role in the design of initialization schemes.

\paragraph{Bounded activations.}
Sigmoid and $\tanh$ exhibit vanishing $m_1$ and $\eta$ at all scales, consistent with
their bounded range and saturating slopes. The decay in $g_1,g_2$ as $\sigma$ increases
reflects the onset of gradient saturation, a phenomenon that historically limited their
use in very deep networks.

\paragraph{Smooth linear-growth activations.}
Swish, GELU, Mish, and TeLU all share the $(1,0)$ asymptotic slope of ReLU but display
larger $g_1,g_2$ values at moderate scales. This reflects their smooth curvature near the
origin, which enhances gradient flow relative to ReLU. For $\sigma=1$, these activations
yield higher $m_1$ and $\eta$ compared to sigmoid/tanh, but without the constant
plateauing of ReLU, leading to more balanced drift and variance. Among them, TeLU shows
values nearly indistinguishable from Swish, confirming its design as a stable,
smooth alternative with improved gradient dynamics.

\paragraph{Scale dependence.}
Across all unbounded activations, $(m_1,m_2,\eta)$ increase with $\sigma$, whereas
$(g_1,g_2)$ approach unity, indicating stronger linearization of the activation at high
input scales. In contrast, bounded activations saturate: their $m_1$ and $\eta$ plateau,
and $g_1,g_2$ decay, reflecting diminishing sensitivity to input variance.

\paragraph{Implications.}
These numerical evaluations confirm the predictive power of the signature coordinates:
$(m_1,\eta)$ capture drift; $(g_1,g_2)$ capture gradient flow; and $m_2$ controls variance
propagation. Smooth linear-growth functions consistently strike a favorable balance,
explaining their empirical success in optimization and generalization relative to both
bounded saturating functions and nonsmooth ReLU.
\begin{table}[h!]
\centering
\small
\begin{tabular}{l r r r r r r}
\toprule
Activation & $\sigma$ & $m_1$ & $g_1$ & $g_2$ & $m_2$ & $\eta$ \\
\midrule
GELU & 0.5 & 0.078705 & 0.560728 & 0.716004 & 0.102417 & 0.088375 \\
GELU & 1.0 & 0.229535 & 0.655422 & 0.829161 & 0.410407 & 0.309361 \\
GELU & 2.0 & 0.794778 & 0.803135 & 0.971443 & 2.534501 & 1.706378 \\
Mish & 0.5 & 0.037931 & 0.529727 & 0.675529 & 0.067732 & 0.053261 \\
Mish & 1.0 & 0.138359 & 0.587324 & 0.745941 & 0.298913 & 0.213463 \\
Mish & 2.0 & 0.526027 & 0.669024 & 0.832409 & 1.674367 & 1.129282 \\
ReLU & 0.5 & 0.199471 & 0.500000 & 0.707107 & 0.125000 & 0.125000 \\
ReLU & 1.0 & 0.398942 & 0.500000 & 0.707107 & 0.500000 & 0.500000 \\
ReLU & 2.0 & 0.797885 & 0.500000 & 0.707107 & 2.000000 & 2.000000 \\
Sigmoid & 0.5 & 0.609165 & 0.227240 & 0.241773 & 0.385227 & 0.155856 \\
Sigmoid & 1.0 & 0.634138 & 0.205048 & 0.218945 & 0.420910 & 0.138774 \\
Sigmoid & 2.0 & 0.654555 & 0.177440 & 0.191129 & 0.454353 & 0.118889 \\
Swish & 0.5 & 0.148339 & 0.534417 & 0.683651 & 0.118357 & 0.101652 \\
Swish & 1.0 & 0.297860 & 0.590914 & 0.753031 & 0.518031 & 0.366017 \\
Swish & 2.0 & 0.714726 & 0.680503 & 0.847785 & 2.517363 & 1.701491 \\
TeLU & 0.5 & 0.148819 & 0.532392 & 0.678362 & 0.121239 & 0.103339 \\
TeLU & 1.0 & 0.301344 & 0.589837 & 0.748729 & 0.528505 & 0.372810 \\
TeLU & 2.0 & 0.723683 & 0.680302 & 0.843873 & 2.569972 & 1.733705 \\
tanh & 0.5 & 0.000000 & 0.788467 & 0.864822 & 0.229023 & 0.000000 \\
tanh & 1.0 & 0.000000 & 0.635317 & 0.745041 & 0.635261 & 0.000000 \\
tanh & 2.0 & 0.000000 & 0.458381 & 0.589077 & 0.861237 & 0.000000 \\
\bottomrule
\end{tabular}
\caption{Numerical integral signatures $(m_1,g_1,g_2,m_2,\eta)$ at $\sigma\in\{0.5,1,2\}$ for seven activations.}
\label{tab:results-numerical}
\end{table}

\subsection{Validation (Analytical Cross-check)}
For ReLU, numerical values exactly match the closed forms:
\[
m_1(\sigma)=\frac{\sigma}{\sqrt{2\pi}},\quad
g_1=\frac{1}{2},\quad
g_2=\frac{1}{\sqrt{2}},\quad
m_2(\sigma)=\frac{\sigma^2}{2},\quad
\eta(\sigma)=\frac{\sigma^2}{2}.
\]
This serves as a calibration of the quadrature.

\subsection{Monte Carlo Simulation (Sanity Check)}
As an additional simulation, we estimate $(m_1,g_1,g_2,m_2,\eta)$ for ReLU at $\sigma=1$ by Monte Carlo with $N=3\times10^5$ samples as shown below:
\[
\begin{array}{c|ccccc}
\text{Quantity} & m_1 & g_1 & g_2 & m_2 & \eta\\\hline
\text{Monte Carlo} & 0.398902 & 0.500447 & 0.707107 & 0.500092 & 0.500092\\
\text{Analytic} & 0.398942 & 0.500000 & 0.707107 & 0.500000 & 0.500000\\
\text{Abs.\ Error} & 0.000040 & 0.000447 & 0.000000 & 0.000092 & 0.000092\\
\end{array}
\]
The Monte Carlo errors are at the $\sim 10^{-4}$ level with this sample size, consistent with sampling error.

\subsection{Error Analysis and Convergence}
Convergence tests with node counts $n\in\{80,120,160\}$ showed agreement to $6+$ significant digits for all reported values,
with relative changes below $10^{-6}$ between $n=120$ and $n=160$. 
Double-precision arithmetic was used; GELU was computed via the $\mathrm{erf}$ representation of $\Phi$.

\section{Related Work}\label{sec:related}

\paragraph{Activation function design.}
Classical activations (sigmoid, $\tanh$) emphasized smoothness and biological plausibility \cite{cybenko1989approximation,hornik1991approximation}, while ReLU and its variants prioritized sparse gradients and ease of optimization \cite{nair2010rectified,maas2013rectifier}. More recent smooth, non-monotone designs (Swish \cite{ramachandran2017searching}, GELU \cite{hendrycks2016gaussian}, Mish \cite{misra2019mish}, TeLU \cite{fernandez2025teluactivationfunctionfast}) emerged from empirical search and probabilistic heuristics. Our framework complements these lines by providing an \emph{integral} lens: the nine-dimensional signature $\mathcal{S}_\sigma(\phi)$ assembles Gaussian propagation statistics, asymptotic slopes, and regularity measures ($\mathrm{TV}(\phi')$, $C(\phi)$) into a unified taxonomy that quantitatively predicts stability and kernel smoothness.

\paragraph{Signal propagation and mean-field theory.}
Mean-field analyses of wide neural networks \cite{schoenholz2016deep,yang2017mean,poole2016exponential} study forward variance recursions and the conditions for stable propagation across layers. Our Theorem~\ref{thm:variance-rec} provides a compact formulation of this recursion in terms of $m_2(\cdot)$, while Theorem~\ref{thm:criticality} connects the stability region to the pair $(m_2',g_2)$ extracted from $\mathcal{S}_\sigma(\phi)$. Unlike prior work that typically requires case-by-case analysis of specific activations, our signature provides a minimal set of sufficient statistics for forward variance stability and backward perturbation contraction across activation families.

\paragraph{Dynamical isometry and gradient flow.}
Work on dynamical isometry \cite{pennington2017resnet,xiao2018dynamical,yang2017mean} highlights the role of Jacobian singular values and average gains in preventing gradient explosion/vanishing. Our quantities $g_1$ (mean derivative) and $g_2$ (RMS derivative) furnish distributional analogues of average gain under Gaussian inputs, enabling contraction results in $L_2$ (Theorems~\ref{thm:scalar-contraction} and \ref{thm:lyap-contraction}) that hold at finite width when inputs are approximately Gaussian. This provides a more general framework than activation-specific analyses.

\paragraph{Neural tangent kernels and Gaussian processes.}
Kernel viewpoints (NTK \cite{jacot2018neural} and GP limits \cite{lee2017deep,matthews2018gaussian}) tie activation regularity to the smoothness of induced kernels. Our kernel curvature bound (Theorem~\ref{thm:kernel-g4}) shows that \emph{dimension-free} control of the mixed Hessian norm follows from $g_2$ and, in a bounded variation setting, from $\mathrm{TV}(\phi')$ (Corollary~\ref{cor:tv-kernel}). This connects activation-level regularity directly to kernel conditioning and smoothness properties used in infinite-width analyses, providing explicit bounds rather than asymptotic characterizations.

\paragraph{Lyapunov methods and stability.}
Lyapunov techniques are standard in control theory \cite{khalil2002nonlinear} and optimization \cite{polyak1987introduction} to certify global convergence. Applications to neural network dynamics typically focus on specific architectures or loss landscapes \cite{du2019gradient,allen2019convergence}. Our contraction-based Lyapunov functions (Theorem~\ref{thm:lyap-contraction}) quantify strict descent in terms of $|T(x)-x|^2$ with explicit constants depending only on activation properties, while our $F$-based approach (Corollary~\ref{cor:F-lyap-zero}) leverages the primitive $F=\int \phi$ for monotonic activations. These results integrate seamlessly with the integral signature by expressing descent criteria using $g_2$ and tail properties ($C(\phi)$, $\alpha_\pm$).

\paragraph{Regularity metrics for activations.}
Beyond standard Lipschitz constants and Sobolev norms \cite{adams2003sobolev}, few works provide fine-grained measures of activation regularity that connect to network behavior. The total variation of the slope $\mathrm{TV}(\phi')$ captures both piecewise-linear kinks and smooth curvature in a single quantity, generalizing previous approaches that handle these cases separately. Our closure theorem (Theorem~\ref{thm:closure}) and kernel bound (Corollary~\ref{cor:tv-kernel}) establish $\mathrm{TV}(\phi')$ as a practically meaningful regularity measure.

\paragraph{Comparative perspective and limitations.}
Prior taxonomies typically emphasize monotonicity, boundedness, or pointwise properties (e.g., smooth vs. kinked) but lack quantitative measures that directly predict network dynamics \cite{du2020activation,apicella2021survey}. Our 9D signature is \emph{integral, affine-aware, and propagation-aligned}: it remains stable under affine reparameterizations (Theorem~\ref{thm:affine-bias}), closed under limits (Theorem~\ref{thm:closure}), and directly predicts forward/backward stability (Theorems~\ref{thm:variance-rec}, \ref{thm:scalar-contraction}, \ref{thm:lyap-contraction}). 

Our Gaussian-based analysis provides theoretical foundations that complement but do not replace empirical evaluation on specific architectures and datasets. The framework is most directly applicable to settings where layer inputs are approximately Gaussian, though the stability principles extend more broadly. Future work could extend the signature to non-Gaussian input distributions and vector-valued activations.

\section{Discussion and Conclusion}\label{sec:conclusion}

In this work we introduced a nine-dimensional integral signature
\[
\mathcal{S}_\sigma(\phi)=\big(m_1,g_1,g_2,m_2,\eta,\alpha_+,\alpha_-,\mathrm{TV}(\phi'),C(\phi)\big)
\]
that unifies statistical propagation, asymptotic geometry, and regularity into a single taxonomy of activation functions. Our theoretical framework established affine reparameterization laws (with bias), closure under bounded slope variation, contraction-based Lyapunov stability, mean-field variance recursions in signature form, and dimension-free kernel curvature bounds. The classification of common activations (ReLU, leaky-ReLU, $\tanh$, $\operatorname{sigm}$, Swish, GELU, Mish, TeLU) demonstrates how the signature quantitatively predicts stability, drift, and kernel smoothness properties.

\paragraph{Design principles.}
Our theoretical results yield actionable guidelines for activation function selection and design:
\begin{enumerate}
\item \textbf{Contraction control}: Prefer activations with $g_2(\sigma) \lesssim 0.8$ to ensure contraction of perturbations (Theorems~\ref{thm:scalar-contraction}, \ref{thm:lyap-contraction}), as values near unity approach the stability boundary.

\item \textbf{Variance management}: Use $m_2(\sigma)$ and $m_2'(\sigma)$ to position weight initialization in the variance-stable regime (Theorem~\ref{thm:variance-rec}), avoiding explosive or vanishing signal regimes.

\item \textbf{Bias drift control}: Manage asymmetry and bias accumulation via $m_1(\sigma)$ and the signed area $B = \int_0^\infty (\phi(x)-\phi(-x))dx$ (Theorem~\ref{thm:bias-drift}), with bounded $|B|$ preventing mean shift instabilities.

\item \textbf{Kernel conditioning}: Maintain small $\mathrm{TV}(\phi') \lesssim 5$ to improve kernel conditioning and training robustness (Theorem~\ref{thm:kernel-g4}, Corollary~\ref{cor:tv-kernel}), as excessive slope variation degrades optimization landscapes.

\item \textbf{Tail compensation}: Enforce finite tail compensation $C(\phi) < \infty$ by ensuring linear asymptotic slopes $(\alpha_+,\alpha_-)$ align with the activation's growth behavior, preventing uncontrolled primitive accumulation.
\end{enumerate}

\paragraph{Computational accessibility.}
The signature components $(m_1,g_1,g_2,m_2,\eta)$ can be efficiently computed via standard Gauss-Hermite quadrature with $n=160$ nodes, achieving 6+ digit accuracy for smooth activations. The remaining coordinates $(\alpha_\pm,\mathrm{TV}(\phi'),C(\phi))$ are determined analytically from asymptotic analysis, making the framework practically accessible for activation evaluation and design.

\paragraph{Limitations.}
Our analysis focuses on scalar activations under Gaussian inputs within fully-connected mean-field settings. Non-Gaussian input distributions, heavy-tailed preactivations, or highly anisotropic layer statistics may require alternative integration schemes or additional signature coordinates. The connection between finite $\mathrm{TV}(\phi')$ and finite-width generalization bounds remains an open theoretical question requiring further investigation.

\paragraph{Future research directions.}
Several natural extensions warrant investigation:
\begin{enumerate}
\item \textbf{Distributional extensions}: Non-Gaussian input families and adaptive scale parameters $\sigma(\ell)$ driven by empirical layer statistics, broadening applicability beyond mean-field Gaussian assumptions.

\item \textbf{Architectural generalizations}: Residual and attention mechanisms where skip connections, normalization layers, and multi-head structures modify the fundamental variance and gain recursions.

\item \textbf{Data-driven signatures}: Incorporating empirical preactivation distributions from real datasets rather than theoretical Gaussian inputs, bridging the theory-practice gap.

\item \textbf{Optimization-aware analysis}: Coupling derivative statistics $(g_1,g_2)$ with second-order curvature information along parameter update directions to predict training dynamics.

\item \textbf{Automated activation design}: Constrained optimization over activation families targeting specific signature properties (e.g., desired $(m_2(\sigma),g_2(\sigma))$ curves) with provable stability guarantees.
\end{enumerate}

% \paragraph{Broader impact.}
% The integral signature framework provides a rigorous mathematical foundation for activation function analysis that transcends ad-hoc empirical comparisons. By connecting functional analytic properties to neural network dynamics, it enables principled activation design guided by stability, propagation, and optimization considerations rather than trial-and-error experimentation. Overall, integral signatures offer a compact, theoretically grounded scaffold for understanding and \emph{designing} activation functions, systematically aligning analytic properties with the dynamical requirements of deep learning architectures.

\bibliographystyle{ieeetr}
\bibliography{main_arxiv}

\end{document}